\newtheorem{theorem}{Theorem}
\newtheorem{lemma}[theorem]{Lemma}
\newtheorem{corollary}[theorem]{Corollary}
\newtheorem{example}{Example}
\newtheorem{remark}{Remark}
\def\A{{\bf A}}
\def\a{{\bf a}}
\def\B{{\bf B}}
\def\C{{\bf C}}
\def\D{{\bf D}}
\def\e{{\bf e}}
\def\I{{\bf I}}
\def\LL{{\bf L}}
\def\M{{\bf M}}
\def\R{{\bf R}}
\def\S{{\bf S}}
\def\U{{\bf U}}
\def\u{{\bf u}}
\def\V{{\bf V}}
\def\v{{\bf v}}
\def\W{{\bf W}}
\def\X{{\bf X}}
\def\Y{{\bf Y}}
\def\0{{\bf 0}}
\def\1{{\bf 1}}
\def\JM{{\mathcal J}}
\def\NM{{\mathcal N}}
\def\OM{{\mathcal O}}
\def\PM{{\mathcal P}}
\def\WM{{\mathcal W}}
\def\RB{{\mathbb R}}
\def\EB{{\mathbb E}}
\def\hmu{\hat{\mu}}
\def\Si{\mbox{\boldmath$\Sigma$\unboldmath}}
\def\Lam{\mbox{\boldmath$\Lambda$\unboldmath}}
\def\De{\mbox{\boldmath$\Delta$\unboldmath}}
\def\sgn{\mathrm{sgn}}
\def\rk{\mathrm{rank}}
\def\st{\mathsf{s.t.}}
\def\trim{\mathrm{Trim}}
\def\poly{\mathrm{poly}}
\newfont{\mycrnotice}{ptmr8t at 7pt}
\newfont{\myconfname}{ptmri8t at 7pt}
\begin{document}

\title{Adjusting Leverage Scores by Row Weighting: A Practical Approach to Coherent Matrix Completion}
%\subtitle{[Extended Abstract]
%\titlenote{A full version of this paper is available as
%\textit{Author's Guide to Preparing ACM SIG Proceedings Using
%\LaTeX$2_\epsilon$\ and BibTeX} at
%\texttt{www.acm.org/eaddress.htm}}}
%
% You need the command \numberofauthors to handle the 'placement
% and alignment' of the authors beneath the title.
%
% For aesthetic reasons, we recommend 'three authors at a time'
% i.e. three 'name/affiliation blocks' be placed beneath the title.
%
% NOTE: You are NOT restricted in how many 'rows' of
% "name/affiliations" may appear. We just ask that you restrict
% the number of 'columns' to three.
%
% Because of the available 'opening page real-estate'
% we ask you to refrain from putting more than six authors
% (two rows with three columns) beneath the article title.
% More than six makes the first-page appear very cluttered indeed.
%
% Use the \alignauthor commands to handle the names
% and affiliations for an 'aesthetic maximum' of six authors.
% Add names, affiliations, addresses for
% the seventh etc. author(s) as the argument for the
% \additionalauthors command.
% These 'additional authors' will be output/set for you
% without further effort on your part as the last section in
% the body of your article BEFORE References or any Appendices.

\numberofauthors{3} %  in this sample file, there are a *total*
% of EIGHT authors. SIX appear on the 'first-page' (for formatting
% reasons) and the remaining two appear in the \additionalauthors section.
%
\author{
% You can go ahead and credit any number of authors here,
% e.g. one 'row of three' or two rows (consisting of one row of three
% and a second row of one, two or three).
%
% The command \alignauthor (no curly braces needed) should
% precede each author name, affiliation/snail-mail address and
% e-mail address. Additionally, tag each line of
% affiliation/address with \affaddr, and tag the
% e-mail address with \email.
%
% 1st. author
\alignauthor
Shusen Wang \\
       \affaddr{College of Computer Science and Technology}\\
       \affaddr{Zhejiang University}\\
       \email{wss@zju.edu.cn}
\alignauthor
% 2nd. author
Tong Zhang \\
       \affaddr{Department of Statistics}\\
       \affaddr{Rutgers University}\\
       \email{tzhang@stat.rutgers.edu}
\and
\alignauthor
% 3rd. author
Zhihua Zhang \\
       \affaddr{Department of Computer Science and Engineering}\\
       \affaddr{Shanghai Jiao Tong University}\\
       \email{zhihua@sjtu.edu.cn}
%\alignauthor
%% 4th. author
%Zhihua Zhang \\
%       \affaddr{Department of Computer Science and Engineering}\\
%       \affaddr{Shanghai Jiao Tong University}\\
%       \email{zhihua@sjtu.edu.cn}
%% 5th. author
%\alignauthor Sean Fogarty\\
%       \affaddr{NASA Ames Research Center}\\
%       \affaddr{Moffett Field}\\
%       \affaddr{California 94035}\\
%       \email{fogartys@amesres.org}
%% 6th. author
%\alignauthor Charles Palmer\\
%       \affaddr{Palmer Research Laboratories}\\
%       \affaddr{8600 Datapoint Drive}\\
%       \affaddr{San Antonio, Texas 78229}\\
%       \email{cpalmer@prl.com}
}
% There's nothing stopping you putting the seventh, eighth, etc.
% author on the opening page (as the 'third row') but we ask,
% for aesthetic reasons that you place these 'additional authors'
% in the \additional authors block, viz.
%\additionalauthors{Additional authors: John Smith (The Th{\o}rv{\"a}ld Group,
%email: {\texttt{jsmith@affiliation.org}}) and Julius P.~Kumquat
%(The Kumquat Consortium, email: {\texttt{jpkumquat@consortium.net}}).}
%\date{30 July 1999}
% Just remember to make sure that the TOTAL number of authors
% is the number that will appear on the first page PLUS the
% number that will appear in the \additionalauthors section.

\maketitle
\begin{abstract}
Low-rank matrix completion is an important problem with extensive real-world applications. When observations are uniformly sampled from the underlying matrix entries, existing methods all require the matrix to be incoherent. This paper provides the first working method for coherent matrix completion under the standard uniform sampling model. Our approach is based on the weighted nuclear norm minimization idea proposed in several recent work, and our key contribution is a practical method to compute the weighting matrices so that the leverage scores become more uniform after weighting. Under suitable conditions, we are able to derive theoretical results, showing the effectiveness of our approach. Experiments on synthetic data show that our approach recovers highly coherent matrices with high precision, whereas the standard unweighted method fails even on noise-free data.
\end{abstract}

% A category with the (minimum) three required fields
%\category{G.1.0}{Numerical Analysis}{General}[Numerical algorithms]
%A category including the fourth, optional field follows...
%\category{G.1.3}{Numerical Analysis}{Numerical Linear Algebra}[Sparse, structured, and very large systems]

%\terms{Theory}

%\keywords{Matrix completion, low-rank matrix recovery, matrix coherence} % NOT required for Proceedings

\section{Introduction}

Matrix completion is a well established problem with extensive real-world applications
such as collaborative filtering \cite{dror2012yahoo,sigkdd2007netflix},
computer vision \cite{ji2010robust,wang2012colorization,hu2013fast},
localization in sensor networks \cite{so2007theory,candes2009exact}, etc.
Matrix completion is usually formulated as the penalized nuclear norm minimization model \cite{candes2009exact,mazumder2010spectral}
or its variants \cite{srebro2004maximum},
and can be efficiently solved by many algorithms \cite{zhang2012accelerated,recht2013parallel,yun2014nomad}.

%Formally speaking,
Let $\M$ be an $n_1 \times n_2$ matrix with rank $k \ll \min(n_1, n_2)$.
We observe only a portion of the entries of $\M$ and seek to recover $\M$ based on the incomplete observations.
Let $\Omega \subset [n_1]\times [n_2]$ be an index set such that $(i,j) \in \Omega$ if the $(i,j)$-th entry of $\M$ is observed,
and let $\PM_{\Omega} (\M)$ be an $n_1 \times n_2$ matrix with $\big[\PM_{\Omega} (\M)\big]_{i j} = M_{i,j}$ for all $(i,j) \in \Omega$
and  $\big[\PM_{\Omega} (\M)\big]_{i j} = 0$ for all $(i,j) \not\in \Omega$.
In order to recover $\M$ based on the partial observation,
the cardinality of $\Omega$ must be greater than some factor.

It was shown in
\cite{candes2010power} that under {\it the uniform sampling model},
that is, each entry of $\M$ is observed independently with the same probability,
the sample complexity must be greater than $c n k (\mu+\nu) \log n $ for some constant $c$ in order to exactly recover $\M$.
Here $n=n_1 + n_2$, and $\mu \in [1, \frac{n_1}{k}]$ and $\nu \in [1, \frac{n_2}{k}]$ are the row and column matrix coherence of $\M$, respectively.
When the matrix coherence is as large as $\mu + \nu = \Theta\big( \frac{n}{k \log n} \big)$,
it is simply impossible to directly complete the matrices.

Some recent work attempts to alleviate the matrix coherence requirements.
\cite{krishnamurthy2013low} proposed an active learning approach which allows the row space to be  coherent.
They use adaptive sampling to select columns followed by accessing all entries in the selected columns.
However, their active learning setting is not suitable for most real-world problems
because it is usually impossible to access all the missing entries of a column.
For example, it is impossible to demand all users to rate one specific item or to pay one user to rate every item in the system.

In another recent work \cite{chen2014coherent}, the matrix coherence requirement is eliminated by assuming that
each entry be observed independently with probability proportional to the sum of its row and column leverage scores.
Obviously, the assumption is much more restrictive than the uniform sampling assumption used in the
previous work of \cite{candes2010power,keshavan2010matrix,candes2011robust,recht2011simpler}.

The results of \cite{chen2014coherent} can be used in the reverse direction: one may adjust the leverage scores to align with a given set of observations using appropriate weighting described below.
Suppose the $(i,j)$-th entry of $\M$ is observed with probability $p_{i j}$.
Let $\R \in \RB^{n_1 \times n_1}$ and $\C \in \RB^{n_2 \times n_2}$ be diagonal weight matrices with positive diagonal entries.
Instead of directly completing the matrix $\PM_{\Omega} (\M)$,
one may first compute $\R$ and $\C$ and then complete the matrix $\R \PM_{\Omega} (\M) \C = \PM_{\Omega} (\R \M \C)$.
The column and row weighting obviously changes the leverage scores of $\M$.
After weighting, if the sum of the $i$-th row and the $j$-th column leverage scores of $\R \M \C$
is proportional to $p_{i j}$,
then the dependence of the sample complexity on matrix coherence is eliminated.
As a special case, if the observations are uniformly sampled,
one needs to find $\R$ and $\C$ such that the leverage scores of $\R \M \C$ are as uniform as possible.
This is the idea that motivates our work.
We note that none of the previous studies  provides any satisfactory approach to find the weight matrices
$\R$ and $\C$; therefore the focus and the main contribution of this paper is a practical algorithm for
computing these matrices.

This work offers the following contributions.
\begin{itemize}
\item
    This paper provides the first practical approach to coherent matrix completion.
    Our method only makes the uniform sampling assumption, which is standard in the literature;
    our method can be potentially applied to the non-uniform sampling settings.
    Before this work, there is no way to completing coherent matrix without adding unrealistic assumptions,
    e.g.\ fulling observing a number of rows/columns \cite{krishnamurthy2013low},
    accessing any unobserved entry as one wish \cite{chen2014coherent}, etc.
\item
    To complete coherent matrix, the matrix leverage scores must be known or at least approximately estimated.
    We provide in Theorem~\ref{thm:perturbation} a way to estimate the leverage scores of $\M$ from its incompletely observed entries,
    and the estimated leverage scores are near their true values with additive-error bound.
    This result may be of independent interest.
\item
    We derived an ADMM algorithm for solving the weighted nuclear norm minimization model (\ref{eq:weightedMC}).
    Our ADMM algorithm is efficient on single machine.
\item
    We apply our method to improve another low-rank matrix recovery method called the robust principal component analysis (RPCA) \cite{candes2011robust}.
    Experiments on coherent low-rank matrix show that our proposed weighted RPCA exactly recovers the low-rank matrix from heavily noisy observation,
    whereas the standard RPCA fails on either noisy or noise-free data.
\end{itemize}

The rest of this paper is organized as follows.
Section~\ref{sec:notation} defines the notation used in this paper.
Section~\ref{sec:MC} formally describe the matrix completion problem and provides an efficient algorithm for solving the weighted nuclear norm minimization problem.
Section~\ref{sec:weight_matrix_optimization} formulates an optimization model, by solving which the weight matrices can be found and coherent matrices can be completed.
Section~\ref{sec:leverage} provides an additive-error perturbation bound for estimating the leverage scores from incompletely observed matrix
and applies this technique to perform row weighting.
Sections~\ref{sec:row_weighting} and \ref{sec:weighting_completion} devise practical algorithms for solving the model formulated in Section~\ref{sec:weight_matrix_optimization}.
Section~\ref{sec:experiments} empirically evaluates our proposed methods on several synthetic datasets.
Section~\ref{sec:rpca} applies the proposed row weighting method to improve the robust principal component analysis (RPCA) method.
The appendix is available at arXiv:1412.7938, and the MATLAB code is on the first author's home page.

\section{Notation and Preliminaries} \label{sec:notation}

Let $[m]$ denote the set $\{1, 2, \ldots, m\}$.
Given a matrix $\A$, let $\a^{(i)}$ be its $i$-th row, $\a_j$ be its $j$-th column, and $A_{i j}$ be its $(i,j)$-th entry.
Let $\I_n$ be the $n\times n$ identity matrix,
$\0$ be the all zero vector (or matrix) of the appropriate size,
%$\1_n$ be the $n\times 1$  vector of ones,
and $\e_i$ be the $i$-th standard basis whose $i$-th entry is one and the remaining entries are zero.

Suppose we are given an $n_1 \times n_2$ matrix $\A$.
The singular value decomposition of $\A$ is
\vspace{-2mm}
\begin{small}
\begin{eqnarray}
\A & = & \sum_i \sigma_i (\A) \u_i \v_i^T
    \; = \; \U_{\A} \Si_{\A} \V_{\A} \nonumber \\
    & = & \underbrace{{\U_{\A,k} \Si_{\A,k} \V_{\A,k} }}_{:=\A_k}
            + {\U_{\A,-k} \Si_{\A,-k} \V_{\A,-k}}. \nonumber
\end{eqnarray}%
\end{small}%
The matrix norms are defined as follows.
Let $\|\A\|_F = \big(\sum_{i, j} A_{i j}^2\big)^{1/2}$ be the matrix Frobenius norm,
$\|\A\|_2 = \sigma_1 (\A)$ be the spectral norm,
$\|\A\|_* = \sum_i \sigma_i (\A)$ be the matrix nuclear norm,
$\|\A\|_1 = \sum_{i j} |A_{i j}|$ be the matrix $\ell_1$ norm,
and $\| \A \|_{\infty} = \max_{i,j} |A_{i j}|$ be the infinity norm.
We let the $\ell_0$ pseudo-norm $\|\A\|_0$ be the number of nonzero entries of $\A$.

The row leverage scores of $\A$ according to the best rank $k$ approximation are defined by
\vspace{-2mm}
\begin{small}
\begin{eqnarray}
\mu_i (\A_k) & = & \big( \U_{\A,k}  \U_{\A,k}^T \big)_{i i} . \nonumber
\end{eqnarray}%
\end{small}%
The row cross leverage scores are defined by
\vspace{-2mm}
\begin{small}
\begin{eqnarray}
\mu_{i l} (\A_k) & = & \big( \U_{\A,k}  \U_{\A,k}^T \big)_{i l} , \nonumber
\end{eqnarray}%
\end{small}%
The column leverage scores $\nu_j$ and cross leverage scores $\nu_{j l}$ are defined similarly by replacing $\U_{\A,k}$ by $\V_{\A,k}$.
The leverage scores and the cross leverage scores satisfy
\vspace{-2mm}
\begin{small}
\begin{align}
& \mu_i (\A_k) \; \leq \; 1
\textrm{\quad and \quad}
\sum_{i=1}^{n_1} \mu_i (\A_k) = k, \label{eq:leverage_scores} \\
\vspace{-2mm}
& \mu_i (\A_k)
\; = \;
\sum_{j=1}^{n_1} \mu_{i j}^2 (\A_k). \label{eq:cross_leverage}
\end{align}%
\end{small}%
The row matrix coherence of $\A_k$ is defined by
\vspace{-1mm}
\begin{small}
\[
\mu (\A_k) \; = \; \frac{n_1}{k} \max_i \mu_i (\A_k) \; \in \; \Big[1 , \frac{n_1}{k } \Big],
\]%
\end{small}%
and the column coherence $\nu$ is defined similarly.
In the matrix completion problem with uniform sampling,
the idealized leverage scores are $\mu_1 = \cdots = \mu_{n_1} = \frac{k}{n_1}$,
where the matrix coherence attains its minimum: $\mu = 1$.

In this paper we are particularly interested in the called rank one row weighting,
that is, scaling one row by a factor $\sqrt{1-\gamma } \in (0, 1)$.
To represent such a row weighting matrix,
we define the notation:
\begin{equation} \label{eq:def_W}
\WM({n_1}, i, \gamma) = \I_{n_1}  + \big(\sqrt{1-\gamma} - 1\big) \e_i \e_i^T,
\end{equation}
whose the $i$-th diagonal entry equals to $\sqrt{1-\gamma}$.
Suppose we are given the row leverage scores and cross leverage scores of $\M$,
then the (cross) leverage scores of $\WM({n_1}, i, \gamma) \M$ can be computed in a closed from by the following lemma.

\begin{lemma}[\cite{cohen2014uniform}] \label{lem:weighting}
Given any $\M \in \RB^{n_1 \times n_2}$ and $\gamma \in (0, 1)$,
let $\W = \WM \big(n_1, i, \gamma \big) $
be defined in (\ref{eq:def_W}).
Then the $i$-th row leverage score of $\W \M$ is
\begin{small}
\[
\mu_i (\W \M)
\; = \;
\frac{(1-\gamma) \mu_i (\M)}{1 - \gamma \mu_i (\M)}
\; \leq \;
\mu_i (\M),
\]%
\end{small}%
and the $j$-th ($j\neq i$) leverage scores are
\begin{small}
\[
\mu_j (\W\M) \; = \;
\mu_j (\M) + \frac{\gamma \mu_{ij}(\M)^2}{1 - \gamma \mu_i (\M)}
\; \geq \;
\mu_j (\M).
\]%
\end{small}%
The cross leverage scores ($j,l\neq i$) are
\begin{small}
\begin{eqnarray}
\mu_{i j} (\W \M)
& = &
\bigg( 1 + \frac{\gamma \mu_i (\M)}{1 - \gamma \mu_i (\M)} \bigg) \mu_{i j} (\M), \nonumber \\
\mu_{j l} (\W \M)
& = &
(1-\gamma) \mu_{j l} (\M) + \gamma (1-\gamma) \frac{\mu_{i j} (\M) \mu_{i l} (\M)}{1 - \gamma \mu_i (\M)} . \nonumber
\end{eqnarray}%
\end{small}%
\end{lemma}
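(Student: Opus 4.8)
The plan is to reduce everything to computing the orthogonal projector onto the column space of $\W\M$. Since the scores in the statement are those of the rank-$k$ part, I would first take $\M = \U\Si\V^T$ to have rank $k$, with $\U \in \RB^{n_1 \times k}$ having orthonormal columns; writing $\g_j := \U^T \e_j \in \RB^k$ for the transpose of the $j$-th row of $\U$, the defining identities become $\mu_j(\M) = \g_j^T \g_j$ and $\mu_{jl}(\M) = \g_j^T \g_l$. Because $\W$ is invertible for $\gamma \in (0,1)$, the matrix $\W\M = (\W\U)\Si\V^T$ is still rank $k$ and its column space is exactly $\spann(\W\U)$. Consequently the leverage and cross leverage scores of $\W\M$ are the diagonal and off-diagonal entries of the orthogonal projector onto that subspace,
\[
\PP_{\W} \;=\; \W\U \big( \U^T \W^2 \U \big)^{-1} \U^T \W .
\]

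The key simplification comes from the structure of $\W$. First I would record that $\W^2 = \I_{n_1} - \gamma\, \e_i \e_i^T$, so that the inner Gram matrix collapses to a rank-one update of the identity,
\[
\U^T \W^2 \U \;=\; \I_k - \gamma\, \g_i \g_i^T ,
\qquad \g_i^T \g_i = \mu_i(\M) .
\]
Inverting this by the Sherman--Morrison formula gives $\big(\U^T\W^2\U\big)^{-1} = \I_k + \frac{\gamma}{1 - \gamma \mu_i(\M)}\, \g_i \g_i^T$, which is where the recurring denominator $1 - \gamma\mu_i(\M)$ originates. The only other ingredient is that $\W$ acts as the identity on every standard basis vector except the $i$-th, for which $\W \e_i = \sqrt{1-\gamma}\,\e_i$; hence $\U^T \W \e_j = \g_j$ for $j \neq i$ and $\U^T\W\e_i = \sqrt{1-\gamma}\,\g_i$.

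With these pieces in hand, each claimed formula is obtained by evaluating $\big(\PP_{\W}\big)_{jl} = (\U^T\W\e_j)^T \big(\U^T\W^2\U\big)^{-1} (\U^T\W\e_l)$ in the four cases $j=l=i$, $j=l\neq i$, $j=i\neq l$, and $j,l \neq i$, substituting the Sherman--Morrison inverse and replacing the resulting inner products by the corresponding (cross) leverage scores $\mu_i(\M), \mu_j(\M), \mu_{ij}(\M), \mu_{jl}(\M)$. The two inequalities then drop out for free: $\mu_i(\W\M) = \frac{(1-\gamma)\mu_i(\M)}{1-\gamma\mu_i(\M)} \le \mu_i(\M)$ because $\frac{1-\gamma}{1-\gamma\mu_i(\M)}\le 1$ whenever $\mu_i(\M)\le 1$, and $\mu_j(\W\M)\ge \mu_j(\M)$ because the correction term $\frac{\gamma\mu_{ij}(\M)^2}{1-\gamma\mu_i(\M)}$ is nonnegative.

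I expect the only real obstacle to be bookkeeping rather than anything conceptual: the projector onto $\spann(\W\U)$ must be formed through the Gram matrix precisely because $\W\U$ is no longer orthonormal, and the asymmetric factor $\sqrt{1-\gamma}$ enters only through the $i$-th coordinate. Tracking exactly when this factor does and does not appear, so that the $\sqrt{1-\gamma}$ coming from a single weighted index combines correctly with the Sherman--Morrison denominator, is the step most prone to error; as a final sanity check I would confirm the resulting expressions against the consistency identity $\mu_i(\W\M) = \sum_l \mu_{il}(\W\M)^2$ implied by (\ref{eq:cross_leverage}).
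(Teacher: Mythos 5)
Your strategy---reduce everything to the orthogonal projector onto $\spann(\W\U)$ and invert the Gram matrix $\U^T\W^2\U = \I_k - \gamma\,\g_i\g_i^T$ by Sherman--Morrison---is sound, and it is a genuinely different route from the paper's: the paper offers no derivation at all, citing Lemma 5 of the referenced work for the two diagonal formulas and asserting that the cross-leverage formulas follow ``by the same techniques.'' Your computation is self-contained, and for the two diagonal scores and the two inequalities (the only parts of the lemma actually used in the rest of the paper) it comes out exactly right.

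However, your assertion that all four cases ``evaluate to the claimed formulas'' fails for the two cross terms, and this is precisely the bookkeeping trap you warn about. Evaluating $(\U^T\W\e_j)^T\big(\U^T\W^2\U\big)^{-1}(\U^T\W\e_l)$ with your $\g_j = \U^T\e_j$ and $\U^T\W\e_i = \sqrt{1-\gamma}\,\g_i$ gives, for $j,l\neq i$,
\[
\mu_{ij}(\W\M) \;=\; \frac{\sqrt{1-\gamma}\;\mu_{ij}(\M)}{1-\gamma\mu_i(\M)},
\qquad
\mu_{jl}(\W\M) \;=\; \mu_{jl}(\M) + \frac{\gamma\,\mu_{ij}(\M)\,\mu_{il}(\M)}{1-\gamma\mu_i(\M)},
\]
whereas the lemma's $\mu_{ij}(\W\M)$ lacks the factor $\sqrt{1-\gamma}$ and its $\mu_{jl}(\W\M)$ carries a spurious overall factor $(1-\gamma)$. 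Your own proposed sanity check decides the matter: with the formulas displayed above,
\[
\sum_{l}\mu_{il}(\W\M)^2
\;=\; \frac{(1-\gamma)^2\mu_i(\M)^2 + (1-\gamma)\big(\mu_i(\M)-\mu_i(\M)^2\big)}{\big(1-\gamma\mu_i(\M)\big)^2}
\;=\; \frac{(1-\gamma)\mu_i(\M)}{1-\gamma\mu_i(\M)}
\;=\; \mu_i(\W\M),
\]
while the lemma's stated cross formulas leave an excess term $\gamma\mu_i(\M)\big(1-\mu_i(\M)\big)$ in the numerator and fail the identity; a rank-one example $\U = (a,b)^T$ with $a^2+b^2=1$ confirms the same discrepancy directly. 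So your method, executed faithfully, proves a corrected version of the lemma rather than the lemma as printed: you should carry out the four evaluations explicitly and state the resulting cross-leverage formulas, rather than asserting agreement with the statement as given.
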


\begin{proof}
The first two equations are given in Lemma 5 of \cite{cohen2014uniform}.
The last two equations can be proved directly by the techniques of \cite{cohen2014uniform}.
\end{proof}

Since this kind of row weighting does not change matrix rank,
we have
\begin{small}
\begin{eqnarray*}
\sum_{l=1}^{n_1} \mu_l (\W \M)
\; = \;
\rk (\W \M)
\; = \;
\rk (\M)
\; = \;
\sum_{l=1}^{n_1} \mu_l (\M).
\end{eqnarray*}%
\end{small}%
That is, the sum of leverage scores remains constant during row weighting.
Therefore, if the $i$-th row is scale,
the decrease in the $i$-th leverage score turns to the increase in the $j$-th leverage score for all $j\neq i$.

\section{Matrix Completion} \label{sec:MC}

In Section~\ref{sec:standard_nuclear_norm} we describe the standard nuclear norm minimization model for the matrix completion problem
and discuss the sample complexities under different sampling models.
In Section~\ref{sec:weighted_nuclear} we introduce the weighted nuclear norm minimization model which is explored in this paper.
In Section~\ref{sec:solution_weighted_nuclear} we provide an efficient algorithm for solving the weighted nuclear norm minimization problem;
the algorithm is described in Algorithm~\ref{alg:admm}.

\subsection{Nuclear Norm Minimization} \label{sec:standard_nuclear_norm}

Let $\M$ be an $n_1 \times n_2$ matrix with rank $k \ll \min(n_1 , n_2)$, and let $n = n_1 + n_2$.
Given a partial observation $\PM_{\Omega}(\M)$,
one can solve the following nuclear norm minimization problem to obtain a low-rank matrix $\LL^\star$ which approximates $\M$.
\begin{small}
\begin{eqnarray} \label{eq:MC}
\min_{\LL}  \; \big\| \LL \big\|_* ;\qquad
\st         \; L_{i j} = M_{i j} \quad \textrm{for all} \; (i,j) \in \Omega .
\end{eqnarray}%
\end{small}%
This model has been well studied in the literature.

Under the uniform sampling model,
\cite{candes2010power} showed that $|\Omega| = \OM(n k (\mu+\nu)^2 \log^6 n)$
or $|\Omega| = \OM(n (\mu+\nu)^4 \log^2 n)$ will be sufficient for the exact recovery of $\M$ with high probability
(when entries of $\M$ are not corrupted with noise).
Later on, \cite{recht2011simpler} improved the sample complexity to $|\Omega| = \OM\big(\max\{\tau, \mu+\nu\} n k \log^2 n)$,
where $\tau = \frac{n_1 n_2}{k} \|\U_{\M,k} \V_{\M,k}^T \|_{\infty}^2$.

If the observations are non-uniformly sampled, with
the $(i,j)$-th entry of $\M$ sampled according to a probability $p_{i j} \propto \mu_i + \nu_j$,
then $|\Omega| = \OM (n k \log^2 n)$ will be sufficient for exact recovery \cite{chen2014coherent}.
Under this setting, the sample complexity does not depend on the matrix coherence.

\subsection{Weighted Nuclear Norm Minimization} \label{sec:weighted_nuclear}

A generalization of the standard nuclear norm minimization was proposed in \cite{salakhutdinov2010collaborative,foygel2011learning} for matrix completion, known as the weighted nuclear norm minimization model:
\begin{small}
\begin{eqnarray} \label{eq:weightedMC}
\min_{\LL}  \; \big\| \R \LL \C \big\|_*; \quad
\st         \; L_{i j} = M_{i j} \quad \textrm{for all} \; (i,j) \in \Omega ,
\end{eqnarray}%
\end{small}%
where $\R$ and $\C$ are diagonal matrices.

This model was proposed in \cite{foygel2011learning} to tackle the matrix completion problem
when the elements in $\Omega$ are not uniformly sampled from $[n_1]\times [n_2]$;
that is, $M_{i j}$ is observed with probability $p_{i j} $ which is not necessary that $p_{ij}= \frac{|\Omega|}{n_1 n_2}$.
In \cite{foygel2011learning} the authors set $R_{i i}^2 = \sum_j p_{ij} := p_i^R$ and $C_{j j}^2 = \sum_i p_{i j} := p_j^C$
to compensate the unbalance in the observation.

Unfortunately, this kind of weighting does not help completing coherent matrices.
Consider a highly coherent matrix whose entries are observed uniformly at random.
In this example, the weight matrices used in \cite{foygel2011learning} should be $\R = \sqrt{n_2 p} \I_{n_1}$ and $\C = \sqrt{n_2 p} \I_{n_1}$.
Thus, \eqref{eq:weightedMC} degenerates to \eqref{eq:MC}.

In fact, naively setting $\R$ and $\C$ according to the sampling probabilities $\{p_i^R\}$ and $\{p_j^C\}$ is not the correct way of weighting.
Very recently, \cite{chen2014coherent} showed that $\R$ and $\C$
should be chosen such that the leverage scores of $\R \M \C$ are aligned with the non-uniform observations;
that is, $\mu_i (\R \M \C) + \nu_j (\R \M \C)$ should be proportional to $p_{i j}$.
However, how to compute such weight matrices remains unresolved.
We observe that simply setting $R_{ii} = \max \big\{ \mu_i^{-\alpha} (\M), \, \beta \big\}$
(and similarly for $C_{jj}$) and tuning $\alpha$ and $\beta$ does not work.
We thus need to devise a more sophisticated method to find $\R$ and $\C$, which is the main focus of the paper.

\begin{algorithm}[tb]
   \caption{ADMM for the Weighted Nuclear Norm Minimization.}
   \label{alg:admm}
%\algsetup{indent=2em}
\begin{footnotesize}
\begin{algorithmic}[1]
   \STATE {\bf Input:} the partially observed matrix $\PM_{\Omega} (\M)$, the parameter $\lambda$, diagonal matrices $\R$ and $\C$.
   \STATE Initialize $\LL$, $\X$, $\Y$, $\rho$;
   \REPEAT
   \STATE For all $(i,j)\in \Omega$ update $L_{i j}$ by
            \begin{footnotesize}
            \[
            L_{i j} \longleftarrow \big(1+ \rho R_{ii}^2 C_{jj}^2\big)^{-1} \big(M_{ij} + \rho R_{ii} C_{jj} X_{i j} - R_{ii} C_{jj} Y_{i j} \big);
            \vspace{-1mm}
            \]
            \end{footnotesize}
   \STATE For all $(i,j)\not\in \Omega$ update $L_{i j}$ by
            \[
            L_{i j} \longleftarrow R_{i i}^{-1} C_{j j}^{-1} \big( X_{i j} - \rho^{-1} Y_{i j} \big);
            \vspace{-1mm}
            \]
   \STATE $[\U, \Si, \V]  \longleftarrow $ SVD$\big( {\rho}^{-1}\Y + \R \LL \C \big)$;
   \STATE Update $\X$ by
            \[
            \X \longleftarrow \U \big[ \Si - \rho^{-1} \lambda  \I \big]_+ \V^T;
            \vspace{-1mm}
            \]
   \STATE $\Y \longleftarrow \Y + \rho (\R \LL \C - \X)$;
   \UNTIL{converged}
   \STATE {\bf Output:} $\LL$.
\end{algorithmic}
\end{footnotesize}
\end{algorithm}

\subsection{Solving the Weighted Nuclear Norm Minimization} \label{sec:solution_weighted_nuclear}

In practice, since the observation is perturbed by noise,
it is better to use the following regularized alternative of (\ref{eq:weightedMC}):
\begin{small}
\begin{eqnarray} \label{eq:weightedMC_noise}
\min_{\LL}  & \frac{1}{2}\| \PM_{\Omega} (\LL) - \PM_{\Omega} (\M) \big\|_F^2 + \lambda \big\| \R \LL \C \big\|_* .
\vspace{-2mm}
\end{eqnarray}%
\end{small}%
Let $\U$ and $\V$ be two matrices of sizes $n_1 \times r$ and $n_2 \times r$, respectively.
It is well known that when $r = \min \{ n_1 , n_2 \}$, we have that
\begin{small}
\begin{eqnarray}
\big\| \R \LL \C \big\|_*
\; = \;
\min_{\LL = \U \V^T} \frac{1}{2}\Big \{ \big\| \R \U \big\|_F^2 + \big\| \C \V \big\|_F^2 \Big\}.\nonumber
\end{eqnarray}%
\end{small}%
Following the max-margin matrix factorization model \cite{srebro2004maximum,rennie2005fast},
\cite{salakhutdinov2010collaborative} proposed the following optimization problem to replace (\ref{eq:weightedMC_noise}):
\begin{footnotesize}
\begin{eqnarray} \label{eq:weightedMC_MMMF}
\min_{\U, \V}  \;
\frac{1}{2}\| \PM_{\Omega} (\U \V^T) - \PM_{\Omega} (\M) \big\|_F^2 +
\frac{\lambda}{2} \Big( \big\| \R \U \big\|_F^2 + \big\| \C \V \big\|_F^2 \Big).
\end{eqnarray}%
\end{footnotesize}%
This model can be solved in the same way as the standard max-margin matrix factorization model
by algorithms such as coordinate descent \cite{yu2012scalable},
alternating least square \cite{zhou2008large,jain2013low},
stochastic gradient descent \cite{gemulla2011large}, etc.

Model (\ref{eq:weightedMC_MMMF}) naturally admits parallel computing algorithms which are efficient on distributed systems;
however, their convergence is usually slow, which leads to poor performance on single machine.
In our off-line experiments performed on a single machine,
coordinate descent, alternating least square, and stochastic gradient descent for solving (\ref{eq:weightedMC_MMMF})
all converge slowly.
We thus employ a different algorithm to solve (\ref{eq:weightedMC_MMMF}) based on the alternating direction method with multiplier (ADMM) \cite{boyd2011distributed}.
The algorithm is described in Algorithm~\ref{alg:admm} and its derivation is left to Appendix~\ref{sec:admm}.
Empirically, on a single machine, the proposed ADMM algorithm is significantly faster than
the algorithm of \cite{salakhutdinov2010collaborative}.

\section{Computing the Weight Matrices by Optimization} \label{sec:weight_matrix_optimization}

In this section we formulate an optimization model for finding the weight matrices $\R$ and $\C$.
Since row weighting does not affect the column leverage scores,
we will discuss only row weighting in the remainder of this paper. The column weight matrix $\C$ can be similarly computed.
We propose to minimize {\it the $\ell_q$ hinge loss function} to find the diagonal weight matrix $\R$:
\begin{small}
\begin{eqnarray} \label{eq:def_lq_loss}
\min_{\0 \prec \R \preceq \I_{n_1}}
L_{\M, q} (\R) :=  \bigg( \sum_{i=1}^{n_1} \max \Big\{ \mu_i (\R \M ) - \mu_i^\star , \, 0 \Big\}^q \bigg)^{\frac{1}{q}},
\end{eqnarray}%
\end{small}%
where $\{\mu_i^\star \}$ are  defined in (\ref{eq:ideal_leverage_scores}) below.
In the following we will justify this model, devise a coordinate descent algorithm to solve this model,
and discuss the choice of $q$.

\subsection{Model Formulation}

It was shown in \cite{chen2014coherent} that  the sample complexity can be reduced if
the leverage scores are aligned with the sampling probabilities of the entries.
We seek to find the row and column weight matrices $\R$ and $\C$ by solving some optimization models such that
the leverage scores of $\R\M\C$ are aligned with the sampling probabilities.
We begin with a general non-uniform sampling model.

Let the $(i,j)$-th entry of $\M$ be observed with probability $p_{i j}$ ($\geq \min \{ n_1 , n_2 \}^{-10}$),
and  denote
\vspace{-1mm}
\begin{small}
\[
p^R_i = \sum_{j=1}^{n_2} p_{i j}
\quad\textrm{  and  }\quad
p^C_i = \sum_{i=1}^{n_1} p_{i j} .
\vspace{-1mm}
\]
\end{small}%
It was shown in \cite{chen2014coherent} that the sample complexity $|\Omega |$ does not depend on the matrix coherence parameters provided that
\begin{small}
\begin{equation} \label{eq:def_pij}
p_{i j} \; \geq \;
\min \bigg\{ c_0 \frac{(n_1 \mu_i + n_2 \nu_j ) \log^2 (n_1 + n_2)}{\min\{n_1 , n_2\}} ,\; 1 \bigg\},
\end{equation}%
\end{small}%
where $c_0$ is constant.
Assume that
\[
|\Omega | = \sum_{i j} p_{i j} \geq 2 c_0 \max\{ n_1 , n_2\} k \log^2 (n_1+n_2) .
\]%
We can write inequality (\ref{eq:def_pij}) as
\[
c_1 p_{i j} \geq n_1 \mu_i + n_2 \nu_j,
\]%
summing up the two sides of which w.r.t.\ $i$ or/and $j$ yields
\begin{small}
\begin{align*}
&c_1 p_i^R \geq  n_2 (n_1 \mu_i + k ), \qquad
c_1 p_j^C \geq n_1 (n_2 \nu_j + k), \\
&c_1 = 2 n_1 n_2 k / \sum_{i j} p_{i j}.
\end{align*}%
\end{small}%
Hence, the desired leverage scores are those satisfying
\begin{eqnarray} \label{eq:ideal_leverage_scores}
\left.
  \begin{array}{c}
    \mu_i \; \leq \; \frac{2 k p_i^R}{\sum_{i=1}^{n_1} p_i^R}   - \frac{k}{n_1} \; := \; \mu_i^\star, \\
    \nu_j \; \leq \; \frac{2 k p_j^C}{\sum_{j=1}^{n_2} p_j^C}   - \frac{k}{n_2} \; := \; \nu_j^\star. \\
  \end{array}
\right.
\end{eqnarray}
Notice that $\mu_i^\star$ and $\nu_j^\star$ can be less than zero,
indicating that (\ref{eq:def_pij}) does not hold even if the leverage scores are adjusted by row weighting.
For such rows or columns, we can abandon them by setting the corresponding weights $R_{ii}$ or $C_{j j}$ to be zero.

Suppose we are given $\PM_{\Omega} (\M)$. We can estimate $p_i^R$ and $p_j^C$ by counting the indices in the set $\Omega$,
and then compute $\{ \mu_i^\star \}$ and $\{ \nu_j^\star \}$ according to (\ref{eq:ideal_leverage_scores}).
We now hope to find the diagonal matrices $\R$ and $\C$ such that $\mu_i (\R \M \C)$ and $\nu_j (\R \M \C)$ are less than or equal to $\mu_i^\star $ and $\nu_j^\star $, respectively.
We expect that
\[
\mu_i (\R \M) \; = \; \mu_i (\R \M \C) \; \leq \; \mu_i^\star  \quad \textrm{for all} \; i \in [n_1].
\]
Thus, any leverage score of $\R\M$ violating the inequality should be penalized.
Based on the above discussion, we propose the optimization model (\ref{eq:def_lq_loss}) to find the weight matrix $\R$.

As a special case, if the entries of $\M$ are uniformly observed, that is, $p_{i j} = p$ for all $i$ and $j$,
then the optimal row leverage scores are all $\mu_i^\star = \frac{k}{n_1}$.
The $\ell_q$ hinge loss function becomes
\vspace{-2mm}
\begin{small}
\begin{eqnarray}\label{eq:def_lq_uniform}
L_{\M, q} (\R) \; := \; \bigg( \sum_{i=1}^{n_1} \max \Big\{ \mu_i (\R \M ) - \frac{k}{n_1} , \; 0 \Big\}^q \bigg)^{\frac{1}{q}}.
\end{eqnarray}
\end{small}

\subsection{Solving Model (\ref{eq:def_lq_loss}) by Coordinate Descent} \label{sec:coordinate_descent}

In this subsection we propose a coordinate descent algorithm to solve the model (\ref{eq:def_lq_loss}).
Here we assume that we have access to the exact leverage scores;
this is obviously unrealistic in the matrix completion problem,
but it can help us have a good understanding of our work.
In later sections we will adapt this algorithm to solve the real matrix completion problem by estimating the leverage scores from the incomplete observation of $\M$.

The algorithm begins with $\R^{(0)} = \I_{n_1}$.
In the $t$-th step, the algorithm picks one coordinate (say $i \in [n_1]$) that violates $\mu_i \big(\R^{(t-1)} \M \big) \leq \mu_i^\star$,
and scales the $i$-th diagonal entry of $\R$ by a factor $\sqrt{1-\gamma}$ where $\gamma \in (0, 1)$ can be determined by line search.
For the $\ell_1$ hinge loss function, the best step size can be computed in a closed form (see Theorem \ref{thm:steepest_descent}).
%For the general $\ell_q$ hinge loss functions, one can find $\gamma$ by line search.
%for the $\ell_1$ hinge loss function, the best step size can be computed in a closed form (to be discussed later)

%
%\begin{algorithm}[tb]
%   \caption{Solve Model (\ref{eq:def_lq_loss}) by Coordinate Descent.}
%   \label{alg:coordinate_descent}
%%\algsetup{indent=2em}
%\begin{small}
%\begin{algorithmic}[1]
%   \STATE {\bf Input:} the observation $\PM_{\Omega} (\M)$, a target rank $k$.
%   \STATE Count the indices in $\Omega$ to estimate $\{p_i^R\}$;
%   \STATE Compute $\{\mu_i^\star\}$ by (\ref{eq:ideal_leverage_scores});
%   \STATE Initialize: $\R^{(0)} = \I_{n_1}$;
%   \FOR{$t=1$ to $T$}
%   \STATE Acquire all the row leverage scores of $\R^{(t-1)} \M$ and compute $L_{\M, q} \big( \R^{(t-1)} \big)$;
%   \STATE Pick index $i$ that violates $\mu_i \big(\R^{(t-1)} \M \big) \leq \mu_i^\star$;
%   \STATE {\it // Compute $\gamma$ by line search}
%   \STATE Compute
%        $
%        \gamma_0 \; = \;\frac{1 - \mu_i^\star / \mu_i (\R^{(t-1)} \M)}{ 1 - \mu_i^\star} \; \leq \; 1;
%        $ \label{alg:coordinate_descent:gamma_0}
%   \REPEAT
%        \STATE Propose $\gamma \in [0, \gamma_0]$ and let $\W = \WM({n_1}, i, \gamma)$;
%        \STATE Acquire all the row leverage scores of $\W \R^{(t-1)} \M$;
%   \UNTIL{$L_{\M, q} \big(\W \R^{(t-1)} \big) < L_{\M, q} \big( \R^{(t-1)} \big)$}
%   \STATE $\R^{(t)}  \longleftarrow \W \R^{(t-1)} $;
%   \ENDFOR
%   \STATE {\bf Output:} $\R = \R^{(T)} $.
%\end{algorithmic}
%\end{small}
%\end{algorithm}

The coordinate descent algorithm is equivalent to a series of {\it rank one row weightings}:
\vspace{-1mm}
\begin{small}
\[
\R^{(T)} \M = \W^{(T)} \cdots \W^{(2)} \W^{(1)} \M ,
\vspace{-1mm}
\]
\end{small}%
where $\W^{(t)}$ is a diagonal matrix with all but one diagonal entries equal to one.
Lemma~\ref{lem:weighting} indicates that the leverages after rank one row weighting can be computed in a closed form.
The closed-form solution makes computation and analysis much simpler;
this is the reason why we use coordinate descent to solve  model (\ref{eq:def_lq_loss}).

Given $\mu_i (\M)$ and the desired leverage score $\mu_i'$,
we can compute a weight $\gamma$ in order that $\mu_i (\WM (n_1 , i, \gamma) \M) = \mu_i'$.
We show this in the following corollary, which follows directly from Lemma~\ref{lem:weighting}.

\begin{corollary} \label{cor:gamma}
Given any ${n_1 \times n_2}$ matrix $\M$ and any index $i \in [n_1]$.
Suppose we are given $\mu_i (\M) \in (0, 1)$ and the desired leverage score $\mu_i' \in [0, \mu_i (\M)]$.
We compute $\gamma$ by
\begin{small}
\begin{equation} \label{eq:gamma}
\gamma \; = \;
\frac{1 - \mu_i' / \mu_i (\M)}{ 1 - \mu_i'},
\end{equation}%
\end{small}%
and scale the $i$-th row of $\M$ by $\sqrt{1-\gamma}$.
Then we have that $\mu_i \big( \WM (n_1 , i, \gamma) \M \big)  = \mu_i'$.
\end{corollary}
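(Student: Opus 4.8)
The plan is to invert the first identity in Lemma~\ref{lem:weighting}. That lemma already supplies, in closed form, the post-weighting leverage score of row $i$ under $\W = \WM(n_1, i, \gamma)$, namely
$\mu_i(\W\M) = (1-\gamma)\,\mu_i(\M) / \big(1 - \gamma\,\mu_i(\M)\big)$.
The corollary merely asks which weight $\gamma$ drives this quantity to a prescribed target $\mu_i'$, so I would set the right-hand side equal to $\mu_i'$ and solve the resulting equation for $\gamma$.

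Concretely, I would clear the denominator to obtain $(1-\gamma)\,\mu_i(\M) = \mu_i'\big(1 - \gamma\,\mu_i(\M)\big)$, expand, and collect the $\gamma$-terms. This is linear in $\gamma$: the coefficient of $\gamma$ is $\mu_i(\M)\,(1 - \mu_i')$ and the constant term is $\mu_i(\M) - \mu_i'$. Dividing yields $\gamma = \big(\mu_i(\M) - \mu_i'\big)/\big(\mu_i(\M)(1-\mu_i')\big) = \big(1 - \mu_i'/\mu_i(\M)\big)/(1 - \mu_i')$, which is exactly (\ref{eq:gamma}). Substituting this $\gamma$ back into the lemma's formula then returns $\mu_i'$ by construction, so the only remaining task is a consistency check on the admissible range of $\gamma$.

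The one point requiring care is verifying that the computed $\gamma$ actually lies in the interval $(0,1)$ on which Lemma~\ref{lem:weighting} is valid. Using the hypotheses $\mu_i(\M) \in (0,1)$ and $\mu_i' \in (0, \mu_i(\M))$, both the numerator and the denominator of (\ref{eq:gamma}) are positive, so $\gamma > 0$; and the inequality $\gamma < 1$ reduces, after cross-multiplication, to $\mu_i(\M) < 1$, which holds by assumption. Hence $\gamma \in (0,1)$ and the lemma applies directly. The boundary case $\mu_i' = 0$ forces $\gamma = 1$, i.e.\ scaling the $i$-th row to zero; here $\mu_i(\W\M) = 0$ holds directly because a zero row contributes nothing to the projection $\U_{\A,k}\U_{\A,k}^T$, recovering the formula by continuity. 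This boundary verification is the main (and only mild) obstacle; everything else is the one-line algebraic inversion above.
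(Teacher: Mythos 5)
Your proof is correct and follows essentially the same route as the paper, which simply notes that the corollary ``follows directly from Lemma~\ref{lem:weighting}'': you invert the identity $\mu_i(\W\M) = (1-\gamma)\mu_i(\M)/(1-\gamma\mu_i(\M))$ for $\gamma$, exactly as intended, and your range check $\gamma\in(0,1)$ plus the $\mu_i'=0$ boundary case are sound additions (the remaining boundary $\mu_i'=\mu_i(\M)$, giving $\gamma=0$ and the identity weighting, is trivially consistent as well).
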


Let $i$ be the index selected in the $t$-th step of the coordinate descent.
Analysis shows that setting such a $\gamma$ that $\mu_i (\R^{(t)} \M) = \mu_i^\star$ leads to the steepest descent in the $\ell_1$ hinge loss function.
Therefore, if $\mu_i (\R^{(t-1)} \M) $ is known to us, we can set $\gamma$ according to the preceding corollary.
This property is summarized in the following theorem.

\begin{theorem} \label{thm:steepest_descent}
In the $t$-th step of the coordinate descent,
suppose that the index $i$ is selected and that $\mu_i (\R^{(t-1)} \M)$ is known to us.
Then scaling the $i$-th row by $\sqrt{1-\gamma^{(t)}}$ leads to the steepest descent in the $\ell_1$ hinge loss function,
where $\gamma^{(t)}$ is computed by
\[
\gamma^{(t)} \; = \;
\frac{1 - \mu_i^\star / \mu_i (\R^{(t-1)} \M)}{ 1 - \mu_i^\star},
\]
\end{theorem}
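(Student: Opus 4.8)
The plan is to recast the single coordinate step as a scalar minimization over the scaling parameter $\gamma \in (0,1)$ and to show that the $\ell_1$ hinge loss, as a function of $\gamma$, attains its minimum exactly where the $i$-th leverage score is pulled down to $\mu_i^\star$. Writing $\mu_l := \mu_l(\R^{(t-1)}\M)$ and $\mu_{il} := \mu_{il}(\R^{(t-1)}\M)$, and noting that $i$ was selected precisely because $\mu_i > \mu_i^\star$ (so the $i$-th term is active at $\gamma=0$ and $\gamma^{(t)} \in (0,1)$ is well defined), Lemma~\ref{lem:weighting} gives every post-weighting score in closed form, so that
\begin{small}
\[
L(\gamma) = \max\{\mu_i(\gamma) - \mu_i^\star,\, 0\} + \sum_{j \neq i} \max\{\mu_j(\gamma) - \mu_j^\star,\, 0\},
\]
\end{small}
with $\mu_i(\gamma) = \frac{(1-\gamma)\mu_i}{1-\gamma\mu_i}$ and $\mu_j(\gamma) = \mu_j + \frac{\gamma \mu_{ij}^2}{1-\gamma\mu_i}$. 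First I would differentiate, obtaining $\frac{d}{d\gamma}\mu_i(\gamma) = -\frac{\mu_i(1-\mu_i)}{(1-\gamma\mu_i)^2} < 0$ and $\frac{d}{d\gamma}\mu_j(\gamma) = \frac{\mu_{ij}^2}{(1-\gamma\mu_i)^2} \ge 0$, confirming that increasing $\gamma$ monotonically lowers the targeted score while raising every other one.

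The crux is a conservation identity obtained from the cross-leverage relation~(\ref{eq:cross_leverage}) together with $\mu_{ii} = \mu_i$: one has $\sum_{j\neq i}\mu_{ij}^2 = \mu_i - \mu_i^2 = \mu_i(1-\mu_i)$, and therefore $\sum_{j\neq i}\frac{d}{d\gamma}\mu_j(\gamma) = -\frac{d}{d\gamma}\mu_i(\gamma)$. This is the differential form of the fact, noted after Lemma~\ref{lem:weighting}, that row weighting preserves $\sum_l \mu_l = \rk(\M)$: whatever leverage the $i$-th row sheds is redistributed among the remaining rows. Using it, on any interval where the active set is constant the loss derivative collapses to $\frac{dL}{d\gamma} = -\sum_{j\neq i,\, \mu_j(\gamma)\le\mu_j^\star}\frac{d}{d\gamma}\mu_j(\gamma) \le 0$ while the $i$-th term is still active (the drop in the $i$-th term always dominates the pooled rise in the others), and to $\frac{dL}{d\gamma} = \sum_{j\neq i,\, \mu_j(\gamma)>\mu_j^\star}\frac{d}{d\gamma}\mu_j(\gamma) \ge 0$ once the $i$-th term has vanished.

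From these two sign patterns the conclusion follows: $L$ is non-increasing for $\gamma$ below the value that makes $\mu_i(\gamma)=\mu_i^\star$ and non-decreasing above it, so by continuity of $L$ this value is a global minimizer over $(0,1)$. Invoking Corollary~\ref{cor:gamma} with $\mu_i' = \mu_i^\star$ and with $\M$ replaced by $\R^{(t-1)}\M$ then yields the stated closed form for $\gamma^{(t)}$. The step I expect to require the most care is the non-smoothness of the hinge: the derivative formulas are valid only away from the breakpoints where some $\mu_j(\gamma)$ crosses $\mu_j^\star$, so I would frame the final argument through the continuity of $L$ and piecewise sign control (equivalently, via one-sided derivatives) rather than a single global derivative, checking that active-set changes among the $j \neq i$ terms never overturn the sign pattern established on each side of $\gamma^{(t)}$.
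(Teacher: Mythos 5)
Your proof is correct, but it reaches the conclusion by a route that differs in execution from the paper's. The paper derives this theorem as an immediate consequence of Lemma~\ref{lem:monotone}, which in turn rests on the discrete decrement formula of Lemma~\ref{lem:decrement}: for any step keeping $\mu_i(\R^{(t)}\M) \geq \mu_i^\star$, the loss drops by exactly $\sum_{j \in \JM} \min\big\{\Delta\mu_j,\; \mu_j^\star - \mu_j(\R^{(t-1)}\M)\big\}$; this decrement is shown to increase with $\gamma$ up to the value $\gamma^{(t)}$ at which $\mu_i$ hits $\mu_i^\star$, and not to increase beyond it. You instead treat $L(\gamma)$ as a piecewise-smooth scalar function, differentiate the closed forms of Lemma~\ref{lem:weighting}, and use the identity $\sum_{j\neq i}\mu_{ij}^2 = \mu_i(1-\mu_i)$ from (\ref{eq:cross_leverage}) to obtain the differential conservation law $\sum_{j\neq i}\frac{d}{d\gamma}\mu_j(\gamma) = -\frac{d}{d\gamma}\mu_i(\gamma)$, from which the sign pattern of the one-sided derivatives of $L$ on either side of $\gamma^{(t)}$ gives unimodality and hence global optimality; Corollary~\ref{cor:gamma} then supplies the closed form. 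Both arguments pivot on the same two facts---the closed-form leverage updates and conservation of total leverage---but yours is a genuine infinitesimal re-derivation that bypasses Lemmas~\ref{lem:decrement} and \ref{lem:monotone} entirely, and your breakpoint handling is sound since each $\mu_j(\gamma)$ is monotone in $\gamma$, so the active set changes only finitely often. What your version buys is a shorter, self-contained proof of this one theorem. What the paper's version buys is the explicit decrement formula itself, which is not merely a proof device: it is reused to prove Theorem~\ref{thm:approx_cd} and Corollary~\ref{cor:decrement}, where the algorithm, working with leverage scores known only up to additive error, deliberately stops at step sizes $\gamma \leq \gamma^{(t)}$ and one needs a quantitative lower bound on the descent rather than just its sign.
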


\subsection{Comparing among Different Loss Functions}

Now we discuss how to set $q$ in the the optimization problem (\ref{eq:def_lq_loss}).
We compare $\ell_\infty$, $\ell_2$, and $\ell_1$ hinge loss functions and conclude that
the $\ell_1$ hinge loss function may be the best choice among the three for two reasons.
First, optimizing the $\ell_1$ hinge loss function admits provable descent in the objective function.
Second, optimizing the $\ell_1$ function leads to big step size,
optimizing the $\ell_2$ function usually leads to small step size,
and optimizing the $\ell_\infty$ function can get stuck in some very bad configurations.
The detailed discussion and empirical comparisons are presented in Appendix~\ref{sec:compare_lq}.
In the rest of this paper, we consider optimizing the $\ell_1$ hinge loss function.

\section{Leverage Score Estimation and Inexact Row Weighting} \label{sec:leverage}

The essence of the proposed coordinate descent algorithm is
to find a series of weights $\gamma^{(1)}, \cdots , \gamma^{(T)}$ and
scale the selected rows accordingly.
However, to compute the weights, we need to know the leverage scores of $\R^{(0)} \M , \cdots, \R^{(T-1)} \M $, at lease approximately.
In Section~\ref{sec:leverage_perturbation} we provide a provable approach to estimate the leverage scores.
Then in Sections~\ref{sec:medium_leverage} and  \ref{sec:large_leverage}
we use the estimated leverage scores to compute the weights,
and provide bounds on the true leverage scores of the matrix after weighting.

\subsection{Additive-Error Bound on Leverage Scores} \label{sec:leverage_perturbation}

Suppose the $n_1 \times n_2$ rank $k$ matrix $\M$ is the ground truth,
and we only have a perturbed observation of $\M$, denote $\widetilde{\M}$.
The matrix perturbation theory of \cite{wedin1972perturbation} indicates that we can get a rough estimate of
$\U_\M \in \RB^{n_1 \times k}$ using the SVD of $\widetilde{\M}$.
Since the row leverage scores are the squared $\ell_2$ norms of rows of $\U_{\M} \in \RB^{n_1 \times k}$,
so it is possible to approximately compute the leverage scores of $\M$ through the SVD of $\widetilde{\M}$.

Based on the matrix perturbation theory of singular vectors,
\cite{ipsen2014sensitivity} studied the perturbation bounds of leverage scores
when the observation $\widetilde{\M}$ is the superposition of $\M$ and data noise $\De$.
\cite{ipsen2014sensitivity} also conducted empirical studies and concluded that
large leverage scores have small relative perturbation and
that small leverage scores yield large relative perturbation.
Although their results and analysis do not apply to the matrix completion problem,
their work motivates us to estimate the large leverage scores based on the observation $\PM_{\Omega} (\M)$.

Under the uniform sampling model,
we establish an additive-error perturbation bound for the (cross) leverage scores
based on the previous work of \cite{keshavan2010matrix,jain2013low}.
If we are given a partial observation of $\M$ with $|\Omega|=\OM \big(n k^2 \rho^2 \kappa^2 (\M)\big)$,
Theorem~\ref{thm:perturbation} ensures that an estimate of the leverage scores of $\M$
up to an additive-error of $\pm \frac{1}{2 \rho}$ can be obtained.
In this way, the rows with high leverage scores (i.e., greater than $\frac{1}{\rho}$)
can be identified and then scaled.
Here $\kappa (\M) = \sigma_1 (\M) / \sigma_k (\M)$ is the condition number of the rank $k$ matrix $\M$.

\begin{theorem} \label{thm:perturbation}
Let $\M$ be an $n_1 \times n_2$ matrix of rank $k$,
$n=\max\{n_1 , n_2\}$,
$\Omega$ be an index set whose elements are chosen from $[n_1] \times [n_2]$ uniformly at random,
and $\widetilde{\M} =  \trim \big( \PM_{\Omega} (\M) \big)$.
When the sample complexity satisfies
\vspace{-1mm}
\begin{small}
\[
|\Omega| \; \geq \; C \kappa^2 (\M) n k^{2} \rho^2
\vspace{-1mm}
\]
\end{small}
for a large enough constant $C$,
we have that with probability at least $1- n^{-3}$ all the leverage scores and the cross leverage scores satisfy that
\vspace{-2mm}
\begin{small}
\begin{eqnarray*}
\big| \mu_i (\M) - \mu_i ({\widetilde{\M}}_k) \big|
& \leq &
\frac{1}{2 \rho } ,\\
\big| \mu_{i j} (\M) - \mu_{i j} ({\widetilde{\M}}_k) \big|
& \leq &
\frac{1}{2 \rho},
\vspace{-2mm}
\end{eqnarray*}
\end{small}
for all $i, j \in \{1, \cdots , n_1 \}$.
\end{theorem}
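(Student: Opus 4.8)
The plan is to reduce all $n_1^2$ leverage and cross-leverage comparisons to a single spectral-norm bound on the rank-$k$ left projectors, and then control that spectral norm with a subspace perturbation theorem fed by a trimming-based concentration inequality.

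First I would note that both quantities are entries of orthogonal projectors. Writing $\PP = \U_{\M,k}\U_{\M,k}^T$ and $\widetilde{\PP} = \U_{\widetilde{\M},k}\U_{\widetilde{\M},k}^T$, we have $\mu_{ij}(\M) = \e_i^T\PP\e_j$ and $\mu_{ij}(\widetilde{\M}_k) = \e_i^T\widetilde{\PP}\e_j$, with the ordinary leverage scores being the diagonal case $j=i$. Hence
\[
\big|\mu_{ij}(\M) - \mu_{ij}(\widetilde{\M}_k)\big| = \big|\e_i^T(\PP - \widetilde{\PP})\e_j\big| \leq \|\PP - \widetilde{\PP}\|_2 ,
\]
so it is enough to prove the one inequality $\|\PP - \widetilde{\PP}\|_2 \leq \frac{1}{2\rho}$. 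This yields all the additive bounds at once and means no union bound over the $n_1^2$ entries is needed; the failure probability $n^{-3}$ will come entirely from the concentration step.

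Next, since singular subspaces are scale invariant I would rescale and set $E = \frac{n_1 n_2}{|\Omega|}\widetilde{\M} - \M$, so that $\widetilde{\PP}$ is the top-$k$ left projector of $\M + E$. Because $\M$ has rank exactly $k$ we have $\sigma_{k+1}(\M) = 0$, and Weyl's inequality gives $\sigma_{k+1}(\M+E) \leq \|E\|_2$. Wedin's $\sin\Theta$ theorem then yields
\[
\|\PP - \widetilde{\PP}\|_2 = \|\sin\Theta\|_2 \leq \frac{\|E\|_2}{\sigma_k(\M) - \|E\|_2} \leq \frac{2\|E\|_2}{\sigma_k(\M)} ,
\]
the last step holding once $\|E\|_2 \leq \frac{1}{2}\sigma_k(\M)$, which the final sample complexity will guarantee. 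It then remains to establish, with probability at least $1 - n^{-3}$, a concentration bound of the form
\[
\|E\|_2 = \Big\|\frac{n_1 n_2}{|\Omega|}\widetilde{\M} - \M\Big\|_2 \leq C\,\sigma_1(\M)\,k\sqrt{\frac{n}{|\Omega|}} .
\]
Plugging this in gives $\|\PP - \widetilde{\PP}\|_2 \leq 2C\kappa(\M)k\sqrt{n/|\Omega|}$, which is at most $\frac{1}{2\rho}$ precisely when $|\Omega| \geq 16C^2\kappa^2(\M)nk^2\rho^2$, matching the stated threshold.

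The hard part will be this last concentration bound, and specifically making it hold for a possibly highly coherent $\M$. A direct matrix-Bernstein estimate on $\PM_{\Omega}(\M) - \frac{|\Omega|}{n_1 n_2}\M$ has its variance governed by the row and column energies $\|\m^{(i)}\|_2^2 \leq \sigma_1^2\mu_i(\M)$ and its uniform bound by $\|\M\|_{\infty} \leq \sigma_1(\M)$; for coherent $\M$ the $\mu_i$ can be as large as $1$, so these quantities blow up and force a useless threshold of order $\kappa^2\rho^2 n^2$ rather than $nk^2$. The trimming operator $\trim(\cdot)$, which zeroes out the rows and columns that received anomalously many observed entries, is exactly what removes this coherence dependence: after trimming, the residual spectral norm concentrates at the incoherent rate. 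I would therefore import the trimming-based spectral estimates of \cite{keshavan2010matrix,jain2013low}, and the delicate work is to track their dependence on $\sigma_1(\M)$, $k$, and $|\Omega|$ carefully enough to land on the factor $\kappa^2 k^2$ with no residual coherence parameter. The remaining reduction (Steps 1 and 2) is then routine.
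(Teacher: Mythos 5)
Your Steps 1 and 2 coincide with the paper's own argument: the paper likewise collapses all $n_1^2$ estimates into the single bound $\big\| \U_{\M,k}\U_{\M,k}^T - \U_{\widetilde{\M},k}\U_{\widetilde{\M},k}^T \big\|_2 \leq \frac{1}{2\rho}$ (via max-norm $\leq$ spectral norm and the identification of this quantity with $\sin\angle(\M,\widetilde{\M}_k)$), and your Weyl--Wedin step carries the same subspace-perturbation content as the paper's inequality $\sin\angle(\M,\widetilde{\M}_k) \leq \|\PM_\M^\perp \U_{\widetilde{\M},k}\|_2$. The difference is the last step: the paper simply cites Lemma C.1 of \cite{jain2013low} to assert $\|\PM_\M^\perp \U_{\widetilde{\M},k}\|_2 \leq C' \kappa(\M)\sqrt{n k^2/|\Omega|}$ with probability $1-n^{-3}$, whereas you undertake to prove the underlying spectral concentration yourself --- and that is where your proposal has a genuine gap.

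The estimate your Step 3 requires, namely $\big\|\frac{n_1 n_2}{|\Omega|}\widetilde{\M} - \M\big\|_2 \leq C \sigma_1(\M)\, k \sqrt{n/|\Omega|}$ with probability $1-n^{-3}$, is false for coherent $\M$, and your assertion that trimming ``removes this coherence dependence'' is exactly the point of failure. Counterexample: take $\M = \sigma_1 \e_1 \e_1^T$, so $k=1$ and $\kappa(\M)=1$. Whenever $(1,1)\notin\Omega$ --- an event of probability $1 - |\Omega|/(n_1 n_2)$, which dwarfs $n^{-3}$ in any regime with $|\Omega| \ll n_1 n_2$ --- we have $\PM_\Omega(\M) = \0$, trimming changes nothing, and the error is $\|E\|_2 = \sigma_1(\M)$, while your claimed bound is $C\sigma_1(\M)\sqrt{n/|\Omega|} \ll \sigma_1(\M)$ once $|\Omega| \gg n$, i.e.\ once $\rho$ is large. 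Trimming cannot help because it cures a different pathology: it zeroes rows and columns of the \emph{sampling pattern} that receive anomalously many observations, which tames degree fluctuations of the random bipartite graph at low sampling rates; it cannot restore energy of $\M$ that sits on unobserved entries, which is precisely what coherence permits. Accordingly, the results you propose to import do not have the form you need: the trimming bound of \cite{keshavan2010matrix} reads $\|E\|_2 \leq C \|\M\|_\infty\, n \sqrt{n/|\Omega|}$, and for coherent matrices $\|\M\|_\infty$ can be of order $\sigma_1(\M)$ --- your rate is what results from substituting the \emph{incoherent} estimate $\|\M\|_\infty = O(\sigma_1 k/n)$, which is unavailable here --- while the sharper statements in \cite{jain2013low} are proved under an explicit incoherence hypothesis. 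No amount of constant-tracking will yield a coherence-free bound, since the example above rules one out. (In fairness, this difficulty is inherited from the paper: its proof invokes Lemma C.1 of \cite{jain2013low} without the incoherence hypothesis under which that lemma is established, and the same rank-one example is problematic for the theorem as stated; but the paper's step is a citation, whereas your write-up makes the false intermediate claim explicit and rests the proof on it.)
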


In the statement of the theorem, ``$\trim \big( \PM_{\Omega} (\M) \big)$'' is the operation that sets to zero
all rows in $\PM_{\Omega} (\M)$ with degrees larger than $2|\Omega | / n_1$
and all columns in $\PM_{\Omega} (\M)$ with degrees larger than $2|\Omega | / n_2$.
\cite{keshavan2010matrix} show that the trim operation is necessary to ensure the bound on $\| \M - \widetilde{\M}\|_2$.
In practice, it is not necessary to throw away such rows and columns;
if a row has degree larger than $2|\Omega | / n_1$, we can randomly hold $|\Omega | / n_1$ entries of that row.

%For the uniform sampling model, setting the missing entries to zeros gives reasonably good estimate of the leverage score.
%However, we observe that such strategy does not work  if the entries are observed non-uniformly.

\subsection{Weighting Rows with the Medium-Scale Leverage Scores} \label{sec:medium_leverage}

We apply Theorem~\ref{thm:perturbation} to enable provable row weighting using the estimated leverage scores instead of the exact ones.
Suppose the leverage scores of $\M$ can be estimated within additive error $\pm \frac{1}{\rho}$.
The following theorem considers a row with estimated leverage score in this region: $\hmu_i \in \big[\frac{1}{\rho} , 1-\frac{1}{\rho} \big]$.

\begin{theorem} \label{thm:mu_medium}
Let $\M$ be an $n_1 \times n_2$ rank $k$ matrix which is unknown to us.
The leverage scores of $\M$ are all in $(0,1)$.
Suppose we are given the estimated leverages $\hmu_1 , \cdots, \hmu_n$
such that
\vspace{-1mm}
\begin{small}
\[
\big| \mu_i (\M) - \hmu_i \big|
\; \leq \;
\frac{1}{2\rho}
\textrm{\quad for all \;} i \in [n].
\vspace{-1mm}
\]
\end{small}
Then for any row $i \in [n_i]$ whose estimated leverage is $\hmu_i \in \big[\frac{1}{\rho} , 1-\frac{1}{\rho} \big]$, we let
\vspace{-1mm}
\begin{small}
\begin{equation} \label{eq:lem:medium_leverage}
\gamma \; = \;
\frac{n_1 - 2k/\hmu_i}{n_1 - 2k}
\vspace{-1mm}
\end{equation}
\end{small}
and compute the diagonal matrix $\W = \WM (n_1 , i , \gamma)$ according to (\ref{eq:def_W}).
The true $i$-th leverage score after the rank one row weighting is
\begin{small}
\vspace{-1mm}
\[
\mu_i (\W \M)
\; \in \;
\bigg(  \frac{k}{n_1} , \: \frac{4 k}{n_1} \big(1+ o(1) \big) \bigg).
\vspace{-1mm}
\]
\end{small}
\end{theorem}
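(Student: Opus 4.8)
The plan is to reduce everything to the exact closed form of Lemma~\ref{lem:weighting} and then control the error introduced by using $\hmu_i$ in place of the true score $\mu_i(\M)$. First I would observe that the prescribed weight is exactly the value produced by Corollary~\ref{cor:gamma} when one targets the leverage $\mu_i' = \frac{2k}{n_1}$ but substitutes the estimate $\hmu_i$ for the unknown $\mu_i(\M)$: indeed $\gamma = \frac{1-\mu_i'/\hmu_i}{1-\mu_i'}$ with $\mu_i' = \frac{2k}{n_1}$ simplifies to $\frac{n_1-2k/\hmu_i}{n_1-2k}$. Hence, were the estimate exact, the true post-weighting score would be precisely $\frac{2k}{n_1}$, the center of the claimed interval, and the target $\frac{2k}{n_1}$ (twice the ideal $\frac{k}{n_1}$) is chosen precisely to leave a factor-two safety margin on either side.

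Next I would apply Lemma~\ref{lem:weighting}, which gives $\mu_i(\W\M) = \frac{(1-\gamma)\mu_i(\M)}{1-\gamma\mu_i(\M)}$ exactly, using the \emph{true} score. Substituting the prescribed $\gamma$ and clearing denominators, a short calculation yields, writing $\mu=\mu_i(\M)$ and $\hmu=\hmu_i$,
\[
\mu_i(\W\M) \; = \; \frac{2k}{n_1}\cdot\frac{(1-\hmu)\,\mu}{\hmu(1-\mu)-\frac{2k}{n_1}(\hmu-\mu)}.
\]
The perturbation factor multiplying $\frac{2k}{n_1}$ equals $1$ exactly when $\hmu=\mu$, confirming the reduction above, so the remaining task is to show this factor lies in $\big(\frac{1}{2},\,2(1+o(1))\big)$.

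For the bounding step I would first propagate the hypotheses: from $|\hmu-\mu|\le\frac{1}{2\rho}$ and $\hmu\in[\frac{1}{\rho},1-\frac{1}{\rho}]$ one gets $\mu\in[\frac{1}{2\rho},1-\frac{1}{2\rho}]$, and the sign of $\hmu-\mu$ refines this, giving $\mu\ge\frac{1}{\rho}$ when the estimate undershoots and $1-\mu\ge\frac{1}{\rho}$ when it overshoots. The last term in the denominator is of order $\frac{k}{n_1\rho}$, negligible against $\hmu(1-\mu)$, which is of order $\frac{1}{\rho^2}$, precisely in the regime $\rho=o(n_1/k)$; I would track it only insofar as it affects the leading constants. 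The dominant factor is $\frac{\mu}{\hmu}\cdot\frac{1-\hmu}{1-\mu}$, and the upper bound $2(1+o(1))$ is attained in the undershoot case with $\mu$ near $1$ (where $\frac{1-\hmu}{1-\mu}\to 2$ while $\frac{\mu}{\hmu}\to1$), the small correction term only helping here since it enlarges the denominator.

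The main obstacle is the lower bound under overestimation. There the post-weighting score is pushed down toward $\frac{k}{n_1}$ and the bound is essentially tight: a crude triangle-inequality split of the two ratios is too lossy (it would give the constant $3$ in place of $2$ on the upper side, and here it fails to certify $>\frac{1}{2}$ at all), so one must instead evaluate the factor directly at the extreme configuration $\hmu=\frac{1}{\rho}$, $\mu=\frac{1}{2\rho}$ and exploit that the $\frac{k}{n_1\rho}$ correction now acts in the favorable direction. The factor then works out to $\frac{1}{2}(1+o(1))$, so the strict endpoint $\frac{k}{n_1}$ is only recovered up to the same $o(1)$ slack already tolerated at the right endpoint. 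Making the constants $2$ and $4$ emerge cleanly, rather than settling for loose multiples, is the delicate part of the argument, and it is exactly where the regime assumption $\frac{k}{n_1}\to 0$ (equivalently $\rho\ll n_1/k$) must be invoked.
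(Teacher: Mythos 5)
Your proposal is correct and follows essentially the same route as the paper's own proof: both rest on the exact formula of Lemma~\ref{lem:weighting}, interpret the prescribed $\gamma$ as the Corollary~\ref{cor:gamma} weight targeting the nominal score $\frac{2k}{n_1}$ computed from the estimate $\hmu_i$, and bound the resulting perturbation factor $\frac{\mu}{\hmu}\cdot\frac{1-\hmu}{1-\mu}$ between roughly $\frac{1}{2}$ and $2$ at the same extreme configurations $(\hmu,\mu)=\big(\tfrac{1}{\rho},\tfrac{1}{2\rho}\big)$ and $\big(1-\tfrac{1}{\rho},1-\tfrac{1}{2\rho}\big)$, your displayed expression for $\mu_i(\W\M)$ being exactly the reciprocal of the paper's identity for $1/\mu'$. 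The caveat you flag --- that with this exact $\gamma$ the left endpoint $\frac{k}{n_1}$ is certified only up to a $(1-o(1))$ factor --- is equally present in the paper's argument, which secures the strict bound only by silently switching to $\frac{1}{\hmu'}=\frac{\rho-1}{2\rho-1}\frac{n_1}{k}-\frac{\rho}{\rho-1}$ rather than $\frac{n_1}{2k}$, so your treatment matches the paper's level of rigor.
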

Recall from (\ref{eq:def_lq_uniform}) that under the unform sampling model,
the ideal leverage scores are $\mu_1=\cdots=\mu_{n_1} = \frac{k}{n_1}$.
Thus weighting a row according to the theorem makes its leverage score more desirable.

\subsection{Weighting Rows with the Leverage Scores Close to One} \label{sec:large_leverage}

Theorem~\ref{thm:mu_medium} indicates that when the estimated leverage score $\hmu_i \in \big[\frac{1}{\rho} , 1-\frac{1}{\rho} \big]$,
we can safely scale the $i$-th row by the factor $\sqrt{1-\gamma}$,
where $\gamma$ is a function of $\hmu_i$ defined in (\ref{eq:lem:medium_leverage}).
We plot $\hmu_i$ versus $\sqrt{1-\gamma}$ in Figure~\ref{fig:gamma_hmu}.

We can see that when $\hmu_i \in \big[\frac{1}{\rho} , 1-\frac{1}{\rho} \big]$,
a small perturbation in $\hmu_i$ leads to a small change in $\sqrt{1-\gamma}$.
However, when $\hmu_i$ is close to one, a small perturbation in $\hmu_i$ leads to a big change in $\sqrt{1-\gamma}$.
Thus, taking a big step size as in (\ref{eq:lem:medium_leverage}) can be hazardous when $\hmu_i$ is close to one.
When $\hmu_i > 1 - \frac{1}{\rho}$, we should set $\gamma$ small and gradually and safely decrease the $i$-th leverage score.
This strategy is described and analyzed in the following theorem.

\begin{theorem} \label{thm:mu_large}
Let the assumptions of Theorem~\ref{thm:mu_medium} hold except for that
$\hmu_i \in (1-\frac{1}{\rho}, 1)$.
We compute $\gamma$ by
\vspace{-2mm}
\begin{small}
\begin{equation} \label{eq:large_leverage}
\gamma
\; = \; \frac{\rho - \frac{1}{\hmu_i - 1/2\rho}}{\rho - 1},
\vspace{-1mm}
\end{equation}
\end{small}%
and let $\W = \WM (n_1 , i , \gamma)$.
Then we have $\mu_i (\W \M) \in  \big[ \frac{1}{\rho}, \, \mu_i (\M) \big)$.
\end{theorem}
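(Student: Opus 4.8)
The plan is to reduce the entire statement to the closed-form expression for $\mu_i(\W\M)$ supplied by Lemma~\ref{lem:weighting}, and to exploit the fact that the prescribed $\gamma$ in (\ref{eq:large_leverage}) is calibrated against the worst-case lower estimate of the unknown true leverage score $\mu_i(\M)$. First I would unpack the additive-error hypothesis: combining $\big|\mu_i(\M)-\hmu_i\big|\le \frac{1}{2\rho}$ with $\hmu_i\in(1-\frac{1}{\rho},1)$ gives the bracket $a := \hmu_i - \frac{1}{2\rho} \le \mu_i(\M) < 1$, and I would record that $a\in(1-\frac{3}{2\rho},\,1-\frac{1}{2\rho})$. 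The point is that $a$ is exactly the pessimistic lower bound on $\mu_i(\M)$ that appears inside the formula for $\gamma$, which may be rewritten as $\gamma=\frac{\rho-1/a}{\rho-1}$.

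Second, I would verify that the weighting is legitimate, i.e. $\gamma\in(0,1)$, so that $\W=\WM(n_1,i,\gamma)$ is well-defined in the sense of (\ref{eq:def_W}). The inequality $\gamma<1$ reduces to $a<1$, which holds; the inequality $\gamma>0$ reduces to $a>1/\rho$, which follows from $a>1-\frac{3}{2\rho}$ once $\rho$ exceeds a small absolute constant (it suffices that $1-\frac{3}{2\rho}\ge\frac{1}{\rho}$, i.e. $\rho\ge\frac{5}{2}$). These are short inequality manipulations.

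The heart of the argument is the third step. I would substitute $\mu_i(\M)$ into the closed form of Lemma~\ref{lem:weighting}, $\mu_i(\W\M)=\frac{(1-\gamma)\mu_i(\M)}{1-\gamma\mu_i(\M)}$, and treat the right-hand side as a function $f(x)=\frac{(1-\gamma)x}{1-\gamma x}$ of the true leverage $x=\mu_i(\M)$. Differentiating gives $f'(x)=\frac{1-\gamma}{(1-\gamma x)^2}>0$, so $f$ is strictly increasing on $(0,1)$. The upper bound $\mu_i(\W\M)<\mu_i(\M)$ is then immediate, since $\gamma>0$ and $\mu_i(\M)<1$ force $f(x)<x$ (this is the strict form of the inequality already in Lemma~\ref{lem:weighting}). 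For the lower bound I would substitute $x=a$ and simplify using $1-\gamma=\frac{(1-a)/a}{\rho-1}$ and $1-\gamma a=\frac{\rho(1-a)}{\rho-1}$; the factors $(1-a)$ and $(\rho-1)$ cancel and yield exactly $f(a)=\frac{1}{\rho}$. Since $\mu_i(\M)\ge a$ and $f$ is increasing, $\mu_i(\W\M)=f(\mu_i(\M))\ge f(a)=\frac{1}{\rho}$, which closes the range $\big[\frac{1}{\rho},\,\mu_i(\M)\big)$.

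The only genuine subtlety I would flag is conceptual rather than computational: because $\mu_i(\M)$ is unknown and only bracketed by $a\le\mu_i(\M)<1$, the lower bound must be a worst-case guarantee, and $\gamma$ is deliberately tuned so that the smallest admissible true leverage, $x=a$, maps precisely to $\frac{1}{\rho}$; monotonicity of $f$ then propagates the bound to every larger $x$. The identity $f(a)=\frac{1}{\rho}$ is therefore not incidental but the defining property that reverse-engineers formula (\ref{eq:large_leverage}), and checking it via the two cancellations above is the one place where the algebra must be carried out with care.
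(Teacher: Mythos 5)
Your proof is correct and is essentially the paper's own argument: the paper likewise takes the pessimistic estimate $\hmu_i - \frac{1}{2\rho} \le \mu_i(\M)$, chooses $\gamma$ via Corollary~\ref{cor:gamma} so that this worst-case value is mapped exactly to $\frac{1}{\rho}$, and concludes by monotonicity of $\mu_i(\W\M)$ in $\mu_i(\M)$ for fixed $\gamma$. Your write-up only makes explicit what the paper leaves implicit --- the derivative computation behind the monotonicity, the cancellation giving $f\big(\hmu_i - \tfrac{1}{2\rho}\big)=\tfrac{1}{\rho}$, and the check that $\gamma \in (0,1)$.
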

%In this way, $\mu_i (\W \M )$ is still greater than $\frac{1}{\rho}$, and it can be identified.
By scaling the $i$-th row multiple times, we can make sure that $\hmu_i$ falls in the interval $\big[\frac{1}{\rho} , 1-\frac{1}{\rho} \big]$,
and can then take a big step size according to Theorem \ref{thm:mu_medium}.

\begin{algorithm}[tb]
   \caption{Coordinate Descent Using Estimated Leverage Scores.}
   \label{alg:weighing}
%\algsetup{indent=2em}
\begin{small}
\begin{algorithmic}[1]
   \STATE {\bf Input:} an $n_1 \times n_2$ matrix $\widetilde{\M}$ and a target rank $k$;
   \STATE Initialize: $\R^{(0)} = \I_{n_1}$, $\rho^{(0)} = \sqrt{\frac{|\Omega|}{C n k^2 \kappa^2 ( \M )}} \gg 1$;
   \FOR{$t=1$ to $T$}
   \STATE $\hmu_i \longleftarrow$ the leverage scores of $(\R^{(t-1)} \widetilde{\M})_k$, for all $i\in [n_1]$; \label{alg:weighing:compute_lev}
   \STATE Pick an index such that $\hmu_i \geq \frac{1}{\rho^{(t-1)}}$;  Stop if there is no such $i$;
   \STATE If $\hmu_i \in \big[\frac{1}{\rho^{(t-1)}}, 1 - \frac{1}{\rho^{(t-1)}} \big]$,
            compute $\gamma$ according to (\ref{eq:lem:medium_leverage});
   \STATE If $\hmu_i \geq 1 - \frac{1}{\rho^{(t-1)}}$,
            compute $\gamma$ according to (\ref{eq:large_leverage});
   \STATE $\W = \WM (\gamma, n, i)$;
   \STATE $\R^{(t)}  \longleftarrow \W  \R^{(t-1)} $; \quad
            $\rho^{(t)} \longleftarrow\frac{\kappa (\M)}{ \kappa ( \R^{(t)} \M )} \rho^{(0)}$;
   \ENDFOR
   \STATE {\bf Output:} $\R = \R^{(T)} $.
\end{algorithmic}
\end{small}
\end{algorithm}

\begin{figure}[htbp]
\centering
\begin{minipage}[t]{0.23\textwidth}
    \begin{flushleft}
    \includegraphics[width=0.95\textwidth]{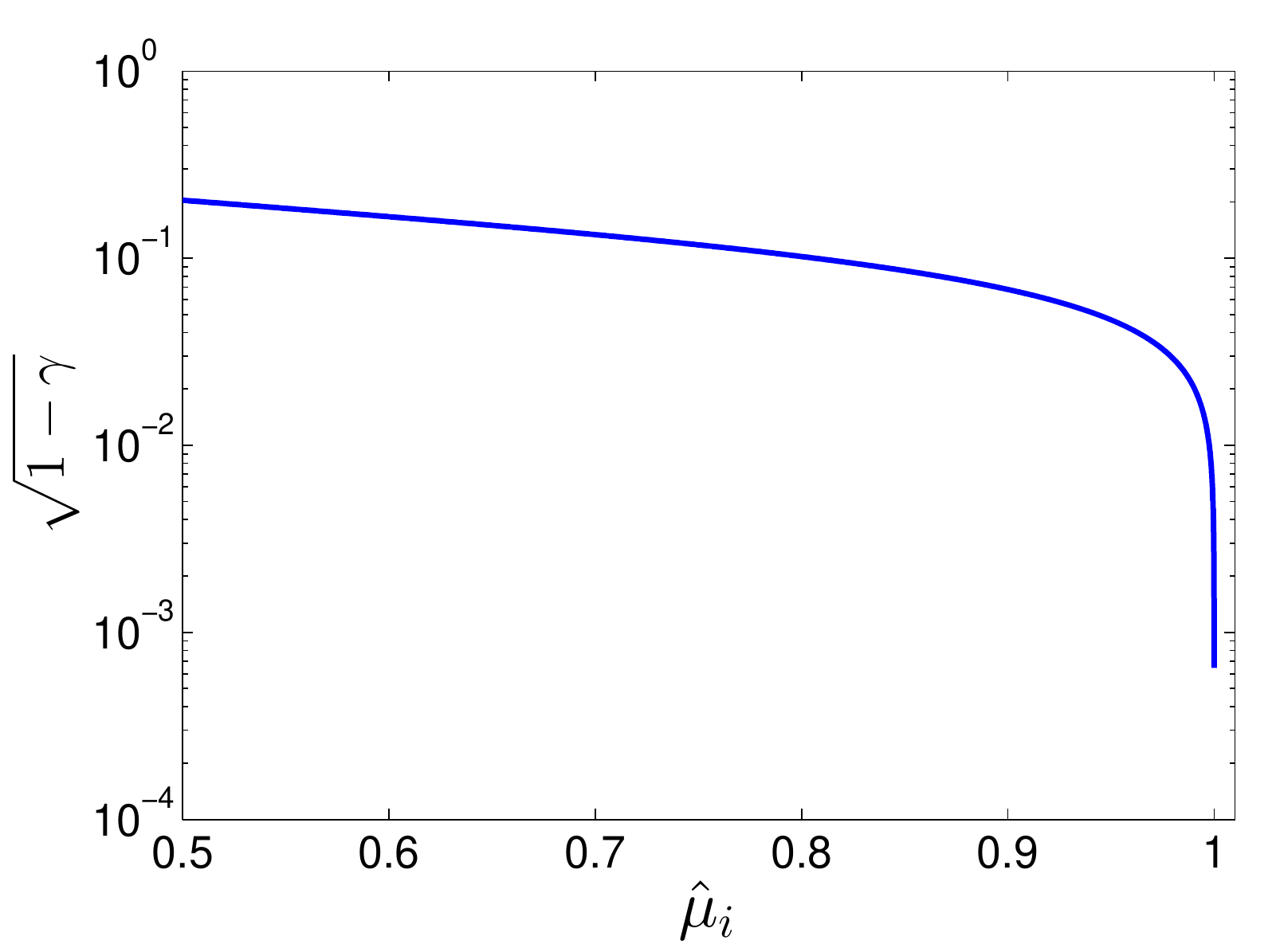}
    \caption{The plot of $\hmu_i$ versus $\sqrt{\gamma}$, where $\hmu_i$ and $\gamma$ are defined (\ref{eq:lem:medium_leverage}) with $k=20$, $n_1=1,000$, and $\rho = 20$.}
    \label{fig:gamma_hmu}
    \end{flushleft}
\end{minipage}
\begin{minipage}[t]{0.23\textwidth}
    \begin{flushright}
    \includegraphics[width=0.95\textwidth]{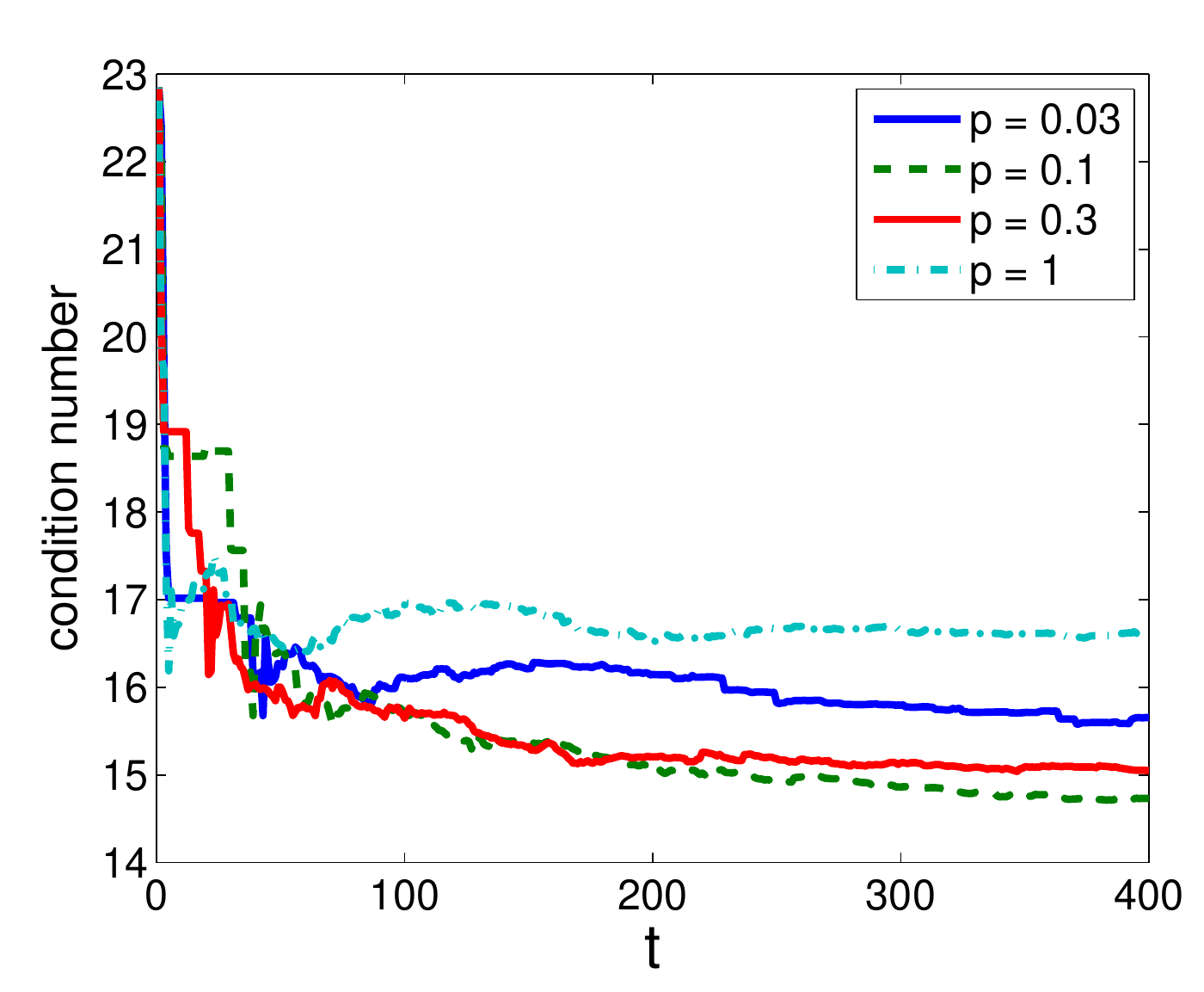}
    \caption{The change in condition number $\kappa \big( \R^{(t)} \M \big)$ during the rank one row weightings.}
    \label{fig:condnum_density}
    \end{flushright}
\end{minipage}
\end{figure}

\section{Practical Coordinate Descent Algorithm} \label{sec:row_weighting}

In this section we provide a practical coordinate descent algorithm to optimize the $\ell_1$ hinge loss function under the uniform sampling model without knowing the exact leverage scores.
The algorithm is described in Algorithm~\ref{alg:weighing}.
The algorithms proposed in this section do not directly apply to the non-uniform sampling model
because currently we do not know how to estimate the leverage scores from non-uniformly sampled entries.

\subsection{Algorithm Description}

The algorithm is based on the same idea as the one in Section~\ref{sec:coordinate_descent}
except that here we use the estimated leverage scores instead of the exact ones.
To optimize the $\ell_1$ hinge loss function in (\ref{eq:def_lq_uniform}),
in each step the algorithm picks an index $i\in [n]$ which violates $\mu_i (\R \M ) \leq \frac{k}{n_1}$.
Notice that our estimated leverage scores have $\pm \frac{1}{2\rho}$ additive error,
thus only the leverage score greater than $\frac{k}{n_1} + \frac{1}{2\rho}$ can be identified.

We let $\widetilde\M$ be the observation, which is $\PM_{\Omega} (\M)$ in the matrix completion problem.
We let $i$ be the selected index and $\hmu_i$ be the estimated leverage score of $\R^{(t-1)} \M$.
We can compute the weight $\gamma$ according to Theorems~\ref{thm:mu_medium} and \ref{thm:mu_large}
and weight the $i$-th row by $\sqrt{1-\gamma}$.
This kind of row weighting has provable bound on the true leverage scores,
and thus leads to provable decrease in the objective function of (\ref{eq:def_lq_uniform}).

\subsection{Analysis}

In this subsection we show that each iteration in Algorithm~\ref{alg:weighing}
does not increase the objective function value (\ref{eq:def_lq_uniform})
and decrease the objective function value under some conditions.

It follows from Lemma~\ref{lem:weighting} that the decrement in the $i$-th leverage score turns to the increments
in the $j$-th leverage scores for all $j \neq i$.
We denote the increments in the $j$-th ($j \neq i$) leverage scores by
\begin{small}
\begin{align} \label{eq:l1_loss_decrement1}
&\Delta \mu_{j}
\;:=\;  \mu_j \big(\R^{(t)}\M \big) - \mu_j \big(\R^{(t-1)}\M \big) \nonumber \\
& = \; \frac{\mu_{i j}^2 (\R^{(t-1)} \M)}{\sum_{l} \mu_{i l}^2 (\R^{(t-1)} \M)} \Big( \mu_i \big(\R^{(t-1)}\M\big) - \mu_i \big(\R^{(t)}\M\big) \Big) \geq 0.
\end{align}
\end{small}
We let $\JM$ be the index set
\begin{small}
\begin{align} \label{eq:l1_loss_decrement_set}
&\JM \; = \;
\Big\{  j \in [n_1] \Big| \; j \neq i
    \textrm{ and }\; \mu_{i j} \big(\R^{(t-1)} \M\big) \neq 0 \nonumber \\
&\qquad\qquad\qquad\qquad\qquad    \textrm{ and }\; \mu_j \big(\R^{(t-1)}\M\big) < \mu_j^\star \Big\}.
\end{align}
\end{small}
The the following theorem ensures that the $\ell_1$ hinge loss function value does not increase
and strictly decrease when $\JM$ is nonempty.

\begin{theorem}\label{thm:approx_cd}
In each iteration of the algorithm, the following inequality holds:
\begin{small}
\[
L_{\M,1} \big(\R^{(t)} \big) \; \leq \; L_{\M,1} \big( \R^{(t-1)} \big).
\]
\end{small}
The decrement in the objective function is
\begin{small}
\begin{align} \label{eq:l1_loss_decrement2}
&L_{\M,1} \big(\R^{(t-1)}\big) - L_{\M,1} \big(\R^{(t)}\big) \nonumber \\
& = \;
\sum_{ j \in \JM }
\min \bigg\{  \Delta \mu_j , \; \mu_j^\star - \mu_j \big(\R^{(t-1)}\M\big) \bigg\}
\:\geq \: 0.
\end{align}%
\end{small}%
Let $i \in [n_1]$ be the index selected in the $t$-th iteration.
Suppose that $\JM$ is nonempty, equivalently, at least one row (say $j\neq i$) satisfies
\begin{small}
\[
\mu_j \big( \R^{(t-1)} \M \big) < \mu_j^\star
\quad \textrm{ and } \quad
\mu_{i j} \big( \R^{(t-1)} \M \big) > 0.
\]%
\end{small}%
Then we have that
\begin{small}
\[
L_{\M,1} \big(\R^{(t)} \big) \; < \; L_{\M,1} \big( \R^{(t-1)} \big).
\]
\end{small}
\end{theorem}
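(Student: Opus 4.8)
The plan is to track how each of the $n_1$ hinge terms $\max\{\mu_m(\R\M) - \mu_m^\star, 0\}$ changes under the single rank-one weighting $\R^{(t)} = \W\R^{(t-1)}$ and then sum these changes. Two structural facts drive everything. By Lemma~\ref{lem:weighting} only the selected index $i$ sees its leverage score decrease, while every $j\neq i$ sees a sign-definite increase $\Delta\mu_j \geq 0$. Moreover, since row weighting preserves the rank and hence the trace $\sum_m \mu_m(\R\M) = k$, these increments exactly absorb the decrement: writing $\delta_i := \mu_i(\R^{(t-1)}\M) - \mu_i(\R^{(t)}\M) \geq 0$, the excerpt's remark that the decrease in the $i$-th score ``turns to the increase'' in the others yields the conservation identity $\sum_{j\neq i}\Delta\mu_j = \delta_i$.

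First I would handle the $i$-th term. By the construction of $\gamma$ in Theorems~\ref{thm:mu_medium} and \ref{thm:mu_large}, the update drives $\mu_i$ toward but never below $\mu_i^\star$, so $\mu_i(\R^{(t)}\M) \geq \mu_i^\star$; since $i$ was selected precisely because $\mu_i(\R^{(t-1)}\M) > \mu_i^\star$, the $i$-th hinge is active both before and after the step, and its contribution to the change in $L_{\M,1}$ is exactly $-\delta_i$. Next I would account for the terms $j\neq i$. Setting $a_j := \mu_j(\R^{(t-1)}\M) - \mu_j^\star$, the change in the $j$-th hinge is $\max\{a_j + \Delta\mu_j, 0\} - \max\{a_j, 0\}$: when $a_j\geq 0$ the term stays active and changes by $\Delta\mu_j$, whereas when $a_j < 0$ one checks the elementary identity $\max\{a_j + \Delta\mu_j, 0\} = \Delta\mu_j - \min\{\Delta\mu_j,\, \mu_j^\star - \mu_j(\R^{(t-1)}\M)\}$. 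The below-target indices that actually move are exactly those in $\JM$, since $\mu_{ij}\neq 0 \Leftrightarrow \Delta\mu_j > 0$, while indices with $\mu_{ij}=0$ have $\Delta\mu_j = 0$ and contribute nothing.

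Summing over all $j\neq i$ and invoking the conservation identity to collapse the active and $\JM$ contributions into $\sum_{j\neq i}\Delta\mu_j = \delta_i$, the $-\delta_i$ from the $i$-th term cancels the $+\delta_i$ gathered from the rest, leaving
\[
L_{\M,1}\big(\R^{(t-1)}\big) - L_{\M,1}\big(\R^{(t)}\big) = \sum_{j\in\JM}\min\Big\{\Delta\mu_j,\; \mu_j^\star - \mu_j\big(\R^{(t-1)}\M\big)\Big\},
\]
which is precisely (\ref{eq:l1_loss_decrement2}). Each summand is a minimum of two positive quantities, hence nonnegative, giving the monotonicity claim; and when $\JM\neq\emptyset$ every summand is strictly positive there (as $\Delta\mu_j > 0$ and $\mu_j^\star - \mu_j > 0$), forcing the strict decrease.

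I expect the only delicate point to be the middle-step bookkeeping: correctly partitioning $\{j\neq i\}$ into active indices, below-target moving indices ($\JM$), and static indices, and verifying the clean three-case collapse of $\max\{a_j+\Delta\mu_j,0\} - \max\{a_j,0\}$. The one genuine hypothesis that must be flagged is that the $i$-th hinge remains active after the update; this is exactly what guarantees the $-\delta_i$ cancellation, and it is supplied by the step-size rules of Theorems~\ref{thm:mu_medium} and \ref{thm:mu_large}, which by design never push $\mu_i$ strictly below $\mu_i^\star$.
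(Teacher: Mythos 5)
Your proof is correct and follows essentially the same route as the paper's: the paper's Lemma~\ref{lem:decrement} performs exactly your bookkeeping---splitting the indices $j\neq i$ into active and below-target sets, using the conservation identity $\sum_{j\neq i}\Delta\mu_j = \delta_1$, and collapsing the hinge differences via the same $\max$/$\min$ identity---while the theorem's proof supplies the hypothesis $\mu_i^\star \leq \mu_i\big(\R^{(t)}\M\big) < \mu_i\big(\R^{(t-1)}\M\big)$ from Theorems~\ref{thm:mu_medium} and \ref{thm:mu_large}, which is precisely the delicate point you flag. Your strict-decrease argument for nonempty $\JM$ also matches the paper's.
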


The following corollary indicates that $\gamma$ is the greater the better when $\mu_i \big( \R^{(t)} \M \big) \geq \mu_i^\star$ holds.
Therefore, under the uniform sampling model, setting $\gamma$ according to Theorem~\ref{thm:mu_medium} leads
to the greatest provable decrease in the $\ell_1$ hinge loss function.

\begin{corollary}\label{cor:decrement}
Let $i \in [n_1]$ be the index selected in the $t$-th iteration, that is,
$\mu_i \big( \R^{(t-1)} \M \big) > \mu_i^\star$.
Since $\mu_i \big( \R^{(t)} \M \big)$ decreases as $\gamma$ increases,
we assume that $\gamma$ is small enough such that $\mu_i \big( \R^{(t)} \M \big) \geq \mu_i^\star$,
equivalently, assume that
\begin{small}
\begin{equation*}
\gamma \; \leq \;
\frac{1 - \mu_i^\star / \mu_i ( \R^{(t-1)} \M )}{ 1 - \mu_i^\star}.
\end{equation*}%
\end{small}%
Then the decrement in the $\ell_1$ hinge loss function increases as $\gamma$ increases.
\end{corollary}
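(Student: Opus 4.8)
The plan is to reduce the statement to the closed-form decrement already established in Theorem~\ref{thm:approx_cd} and then analyze that decrement purely as a function of $\gamma$. First I would record that the stated bound on $\gamma$ is exactly the condition guaranteeing $\mu_i(\R^{(t)}\M)\geq\mu_i^\star$. By Lemma~\ref{lem:weighting} the weighted score $\mu_i(\R^{(t)}\M)=\frac{(1-\gamma)\mu_i(\R^{(t-1)}\M)}{1-\gamma\mu_i(\R^{(t-1)}\M)}$ is strictly decreasing in $\gamma\in(0,1)$, and Corollary~\ref{cor:gamma} identifies $\gamma=\frac{1-\mu_i^\star/\mu_i(\R^{(t-1)}\M)}{1-\mu_i^\star}$ as the unique value for which $\mu_i(\R^{(t)}\M)=\mu_i^\star$. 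Hence the admissible range of $\gamma$ is precisely the range keeping $\mu_i(\R^{(t)}\M)\geq\mu_i^\star$, which is the regime in which the decrement formula of Theorem~\ref{thm:approx_cd} applies.

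In that regime the decrement equals $\sum_{j\in\JM}\min\{\Delta\mu_j,\ \mu_j^\star-\mu_j(\R^{(t-1)}\M)\}$. The crucial observation is that almost everything on the right-hand side is frozen at the state $\R^{(t-1)}$ and is therefore independent of $\gamma$: the index set $\JM$ of (\ref{eq:l1_loss_decrement_set}), the cross-leverage coefficient appearing in (\ref{eq:l1_loss_decrement1}), and each constant $\mu_j^\star-\mu_j(\R^{(t-1)}\M)$, which is strictly positive for $j\in\JM$ by the definition of that set. The only $\gamma$-dependence enters through the single scalar $\delta(\gamma):=\mu_i(\R^{(t-1)}\M)-\mu_i(\R^{(t)}\M)$, since (\ref{eq:l1_loss_decrement1}) writes $\Delta\mu_j=\frac{\mu_{ij}^2(\R^{(t-1)}\M)}{\sum_l\mu_{il}^2(\R^{(t-1)}\M)}\,\delta(\gamma)$ with a nonnegative, $\gamma$-free coefficient.

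I would then finish by monotonicity. From the first step $\mu_i(\R^{(t)}\M)$ is strictly decreasing in $\gamma$, so $\delta(\gamma)$ is strictly increasing, and hence each $\Delta\mu_j$ is non-decreasing in $\gamma$ (strictly increasing for those $j\in\JM$ with $\mu_{ij}(\R^{(t-1)}\M)\neq0$, which by the definition of $\JM$ is all of them). Each summand $\min\{\Delta\mu_j(\gamma),\ \mu_j^\star-\mu_j(\R^{(t-1)}\M)\}$ is the minimum of a non-decreasing function of $\gamma$ and a constant, hence non-decreasing in $\gamma$; summing over the finite set $\JM$ preserves this, so the decrement is non-decreasing in $\gamma$, and strictly increasing as long as some summand remains in its unclipped regime. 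This gives the claim that the larger $\gamma$ the better.

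The argument is essentially routine once Theorem~\ref{thm:approx_cd} is in hand; the only point that requires care---and the one I would flag as the main obstacle---is verifying that the clean decrement formula stays valid over the whole admissible range of $\gamma$. This is exactly why the hypothesis restricts $\gamma$ so that $\mu_i(\R^{(t)}\M)\geq\mu_i^\star$: if $\gamma$ were pushed past this threshold the $i$-th hinge term would itself clip to zero, contributing a bounded and no-longer-linear amount, which would break both the simple $\delta(\gamma)$ dependence and the monotonicity conclusion.
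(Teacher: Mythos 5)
Your proposal is correct and follows essentially the same route as the paper: the paper proves this corollary by citing its Lemma~\ref{lem:monotone}, whose second part is exactly your chain ($\gamma$ up $\Rightarrow$ $\mu_i(\R^{(t)}\M)$ down $\Rightarrow$ each $\Delta\mu_j$ up $\Rightarrow$ each summand $\min\{\Delta\mu_j,\,\mu_j^\star-\mu_j(\R^{(t-1)}\M)\}$ up), resting on the same decrement formula you invoke. Your write-up merely inlines that lemma's argument, and is if anything slightly more careful in noting that the monotonicity is non-strict once summands clip at their constants.
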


%
%
%\begin{theorem} \label{thm:coordinate_descent}
%Let $\M$ be an $n_1 \times n_2$ matrix of rank $k$,
%let the $i$-th row be picked in the $t$-th step of Algorithm~\ref{alg:coordinate_descent}.
%Let $\Delta \mu_{j}$ be defined in (\ref{eq:l1_loss_decrement1}) and $\JM$ be defined in (\ref{eq:l1_loss_decrement_set}).
%If
%\[
%\mu_i \big(\R^{(t-1)}\M \big) \; > \;
%\mu_i \big(\R^{(t)}\M \big) \; \geq \; \mu_i^\star,
%\] then the decrement in the $\ell_1$ hinge loss function is
%\begin{small}
%\begin{align*}
%&L_{\M,1} \big(\R^{(t-1)}\big) - L_{\M,1} \big(\R^{(t)}\big) \nonumber \\
%& = \;
%\sum_{ j \in \JM }
%\min \bigg\{  \Delta \mu_j , \; \mu_j^\star - \mu_j \big(\R^{(t-1)}\M\big) \bigg\}
%\:\geq \: 0,
%\end{align*}
%\end{small}
%where the equality holds only if the index set $\JM$ is empty.
%\end{theorem}

\begin{remark}
%To compute $\gamma$ according to (\ref{eq:lem:medium_leverage}) or (\ref{eq:large_leverage}),
To perform the rank one row weighting,
we need to obtain an estimate of $\mu_i \big( \R^{(t-1)} \M \big)$ within additive error $\pm \frac{1}{2\rho}$
for all $i \in [n_1]$ and $t \in [T]$.
If we use Theorem~\ref{thm:perturbation} to estimate the leverage score, the parameter $\rho$ is not constant.
In fact, in the $t$-th iteration, the parameter $\rho$ becomes
\begin{small}
\vspace{-1mm}
\[
\rho^{(t)} \; = \;
\sqrt{\frac{|\Omega|}{C n k^2 \kappa^2 \big( \R^{(t)} \M \big)}}
\; = \;
\frac{\kappa (\M)}{ \kappa \big( \R^{(t)} \M \big)} \rho^{(0)} .
\vspace{-1mm}
\]
\end{small}%
Here $n$, $k$, $\kappa$, $C$ are defined in Theorem~\ref{thm:perturbation},
and larger $\rho$ is more desirable.
Unfortunately, how to estimate the condition numbers
$\kappa\big( \R^{(t)} \M \big) = \sigma_1 \big( \R^{(t)} \M \big) \big/ \sigma_k \big( \R^{(t)} \M \big) $ remains unknown,
so we are unable to update $\rho^{(t)}$ in each iteration.
We have to empirically fix $\rho^{(0)}=\cdots =\rho^{(T-1)} = \rho$
and assume that the condition number $\kappa \big( \R^{(t)} \M \big)$ does not deteriorate
much during the sequence of rank one row weightings.
\end{remark}

In practice, the assumption that the
condition number $\kappa \big( \R^{(t)} \M \big)$ does not deteriorate holds.
In fact, in our experiments, the condition number actually gets better during the sequence of
rank one row weightings.
Figure~\ref{fig:condnum_density} shows the changes of the condition number of the matrix $\R^{(t)} \M$
under the same experimental setting as that of Section~\ref{sec:experiments:density}.

\subsection{Computational Issues}

The most expensive operation in Algorithm~\ref{alg:weighing} is computing the leverage score of $(\R^{(t-1)} \widetilde\M)_k$ for all $t \in [T]$,
which requires the rank $k$ truncated SVD of $\R^{(t-1)} \widetilde\M$ and costs time $\OM(T n_1 n_2 k)$.
The memory cost of Algorithm~\ref{alg:weighing} is $\OM \big( \|\widetilde\M \|_0 + n_1 k \big)$, which is not a big challenge.
We seek to reduce the time costs in the following ways.

If $\rk (\widetilde\M) = r$ and $k \leq r  < n_2$,
we can reduce the time cost to $\OM (n_1 n_2 r + T n_1 r k)$ in the following way.
We first compute the condensed SVD of $\widetilde\M$ to obtain its column bases $\U_{\widetilde\M} \in \RB^{n_1\times r}$ in time $\OM (n_1 n_2 r)$.
Exploiting the fact that the leverage scores of $\R \widetilde\M$ and $\R \U_{\widetilde\M}$ are the same for any nonsingular matrix $\R$
(see Theorem~\ref{thm:equivalent_leverage} in the appendix),
we replace $\widetilde\M \in \RB^{n_1 \times n_2}$ in Algorithm~\ref{alg:weighing} by the smaller matrix $\U_{\widetilde\M} \in \RB^{n_1 \times r}$.
Then in each iteration it cost only $\OM (n_1 r k)$ time to compute the leverage scores of $\R^{(t-1)} \U_{\widetilde\M}$.

If $\rk(\widetilde\M)$ is not much smaller than $n_2$, the above approach does not help accelerating computation.
Since $(\R^{(t-1)} \widetilde\M)_k$ is merely a rough approximation of $\R^{(t-1)} \M$,
it is unnecessary to compute the exact rank $k$ SVD of $(\R^{(t-1)} \widetilde\M)_k$.
Instead, we propose to compute the rank $k$ SVD of $\R^{(t-1)} \widetilde\M$ approximately using the matrix sketching techniques~\cite{woodruff2014sketching}.
For example, when $\widetilde\M = \PM_{\Omega} (\M)$, which is a highly sparse matrix,
the rank $k$ SVD of $\widetilde\M$ can be computed in time
$\OM (|\Omega| + \poly (k))$ by the sparse matrix embedding method \cite{clarkson2013low}.
In this way, the total time cost drops to $\OM \big(T |\Omega| + T \cdot \poly (k) \big)$.

\section{The Weighting---Completion Algorithm} \label{sec:weighting_completion}

In practice, we find that %if the connectivity of the graph $\GM(\M)$ is ``good''
%and the condition number $\kappa \big(\R^{(t)} \M \big)$ does not much deteriorate during the coordinate descent,
the algorithm can gradually make all the leverage score below $\frac{1}{\rho}$.
However, the algorithm can do nothing to the leverage scores between $\frac{k}{n_1}$ and $\frac{1}{\rho}$
because a very small leverage score may appear large due to the additive error perturbation.
Therefore, the row matrix coherence after the row weighting is $\frac{n_1}{\rho k }$, ideally.
If we want to attain an even lower matrix coherence,
we must acquire more accurate estimates of the leverage scores.
For this purpose, we propose a heuristic for obtaining better estimates of the leverage scores.

We can first use $\PM_{\Omega} (\M)$ to estimate the row and column leverage scores
and then perform matrix completion using the weighted matrix completion model (\ref{eq:weightedMC}) to obtain $\LL^\star$.
Empirically, compared with $\PM_{\Omega} (\M)$,
the leverage scores of $\LL^\star$ better approximates those of $\M$,
and we can thus obtain better estimation of the leverage scores of $\M$ based on $\LL^\star$.
We propose to perform once more row and column weighting using $\LL^\star$ and $k$ as the input of Algorithm~\ref{alg:weighing}.
We can also repeat this weighting---completion procedure multiple times to attain better results.
This weighting---completion procedure is described in Algorithm~\ref{alg:iterative}.

\begin{algorithm}[tb]
   \caption{The Weighting---Completion Algorithm.}
   \label{alg:iterative}
\begin{small}
\begin{algorithmic}[1]
   \STATE {\bf Input:} the partially observed matrix $\PM_{\Omega} (\M)$, a target rank $k$.
   \STATE $\widetilde{\M}^{(0)} \longleftarrow$ Trim $\PM_{\Omega} (\M)$ according to \cite{keshavan2010matrix};
   \FOR{$s=1, 2, \cdots $}
   \STATE $\R \longleftarrow$ Algorithm \ref{alg:weighing} taking ${\widetilde{\M}^{(s)}}$ and $k$ as input;
   \STATE $\C \longleftarrow$ Algorithm \ref{alg:weighing} taking $(\widetilde{\M}^{(s)})^T$ and $k$ as input;
   \STATE $\widetilde{\M}^{(s)} \longleftarrow$ weighted matrix completion taking $\PM_{\Omega} (\M)$, $\R$, $\C$ as input;
   \ENDFOR
   \STATE {\bf Output:} $\widetilde{\M}^{(s)}$.
\end{algorithmic}
\end{small}
\end{algorithm}

\section{Experiments} \label{sec:experiments}

We conduct experiments to evaluate the coordinate descent algorithm and the weighted matrix completion.
The data and algorithms are all for the uniform sampling model.

\subsection{Datasets} \label{sec:datasets}

%{\bf Synthetic Data.}
To demonstrate the effectiveness of our approach,
we generate an $n_1\times n_2$ low-rank matrix $\LL_0 = \U \V^T$ with high row and column coherences.
We generate $\U\in \RB^{n_1 \times k}$ and $\V\in \RB^{n_2 \times k}$ with each row sampled from the multivariate $t$ distribution
with $2$ degrees of freedom and the covariance matrix $\Lam \in \RB^{k\times k}$ whose $(i,j)$-th entry is
$\Lambda_{i j} \; = \; 2 \times 0.5^{|i-j|}$.
We generate an index set $\Omega $ by sampling each element of $[n_1]\times [n_2]$ with probability $p$.
Thus, the expected number of observed entries is $\EB |\Omega| = p n_1 n_2$.
The $n_1 \times n_2$ matrix $\PM_{\Omega} (\M)$ is our observation,
where $\M = \LL_0 + \S_0$ and $\S_0$ captures the noise.

Our weighted approach does not achieve markedly advantage over the unweighted on the collaborative filtering data.
The collaborative filtering data are highly unbalanced, and thus the uniform sampling model
considered in this paper is violated.
Our proposed algorithms in Section~\ref{sec:row_weighting} does not apply to the general non-uniform sampling model because the leverage scores cannot be accurately estimated using our approach.
Our tentative experiments on synthetic data shows that naively estimating the leverage scores in the same way as the uniform sampling model does not work well:
the weighted approach does not have noticeable advantage over the unweighted approach
unless a large portion of entries were observed, e.g. $20\%$ entries were observed.
However, in the collaborative filtering problems, commonly less than $1\%$ entries are observed.

\subsection{Compared Methods} \label{sec:experiments:methods}

We use model (\ref{eq:weightedMC_noise}) under different settings of the weight matrices $\R$ and $\C$.
If $k$ (the rank of the underlying low-rank matrix) is unknown,
we treat is as a parameter and tune it.
\begin{itemize}
\vspace{-3mm}
\item {\bf Unweighted.}
    Set $\R=\I_{n_1}$ and $\C = \I_{n_2}$.
\vspace{-2mm}
\item {\bf Type 1.}
    Take $\PM_{\Omega} (\M) $ and $k$ as the input of Algorithm~\ref{alg:weighing} to compute $\R$ and $\C$,
    and solve model (\ref{eq:weightedMC_noise}) to obtain the solution $\LL^\star$.
    That is, run Algorithm~\ref{alg:iterative} with only one repeat.
\vspace{-2mm}
\item {\bf Type 2.}
    Let $\LL^\star$ be computed by Type 1.
    Take $\widetilde{\M} = \LL^\star$ and $k$ as the input of Algorithm~\ref{alg:weighing} to re-compute $\R$ and $\C$.
    That is, run Algorithm~\ref{alg:iterative} for two repeats.
    {\bf Type 3}, {\bf Type 4}, \dots, are similarly defined.
\vspace{-2mm}
\end{itemize}

For each data matrix and each method, we tune the parameter $\lambda$ to the best and report the matrix completion error
\begin{small}
\begin{equation} \label{eq:def_error}
\vspace{-1mm}
\textrm{Error} \; = \; {\|\LL^* - \LL_0 \|_F} \, \big/ \, {\|\LL_0 \|_F},
\vspace{-1mm}
\end{equation}
\end{small}%
where $\LL_0$ denotes the ground truth, i.e., the low-rank component of $\M$.

\subsection{Effects on Leverage Scores} \label{sec:experiments:density}

In this subsection we test how the leverage scores changes in each step of the coordinate descent algorithm~\ref{alg:weighing}.
Let $T$, $t$, $\rho$ be defined in Algorithm~\ref{alg:weighing},
and set $\rho = 20\sqrt{p}$ and the total iterative number of coordinate descent to be $T = k^2$.
We generate the synthetic data matrix $\LL_0$ described in Section~\ref{sec:datasets} with different settings of $n_1$, $n_2$, $k$, and $p$,
and let $\M = \LL_0$ without adding noise.

We first test the coordinate descent algorithm under different $p = |\Omega| / n_1 n_2$.
We fix $n_1 = 2,000$, $n_2 = 1,000$, $k = 20$ to generate the low rank matrix $\M$,
and vary the number of samples by setting $p=0.03$, $0.1$, $0.3$, $1$ respectively to obtain $\Omega$.
We take $\PM_{\Omega}(\M)$ and $k$ as the input of Algorithm~\ref{alg:weighing},
and report in Figure~\ref{fig:leverage_scores_density} the row coherence $\mu \big(\R^{(t)} \M\big)$
and the $\ell_1$ hinge loss function value $L_{\M, 1} \big(\R^{(t)} \big)$ in each step of the coordinate descent algorithm, respectively.

%---------------------------------Figure---------------------------------%
\begin{figure}[!ht]
\subfigtopskip = 0pt
\begin{center}
\centering
\subfigure[$t$ versus matrix coherence]{\includegraphics[width=0.23\textwidth]{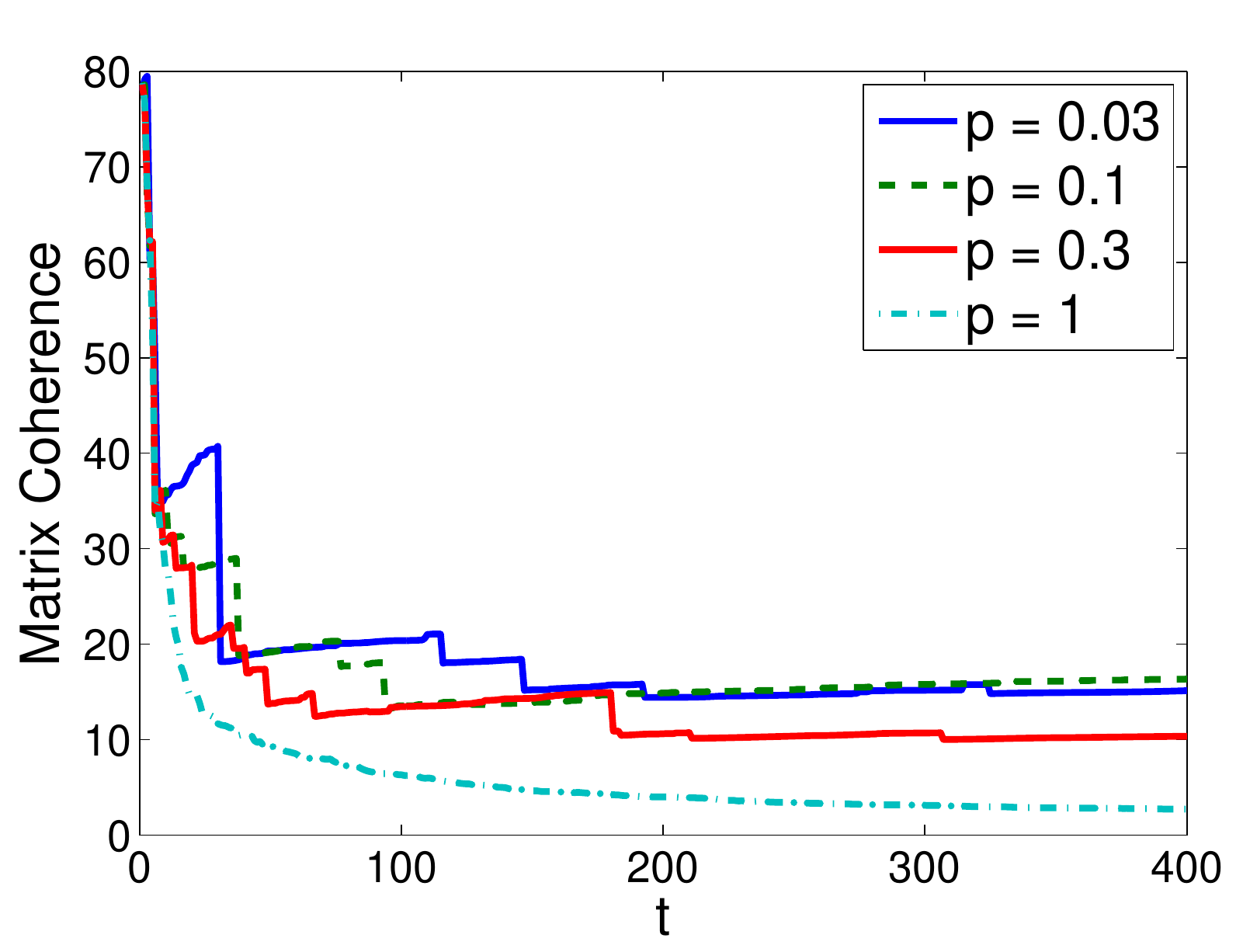}}
\subfigure[$t$ versus the $\ell_1$ hinge loss function value]{\includegraphics[width=0.23\textwidth]{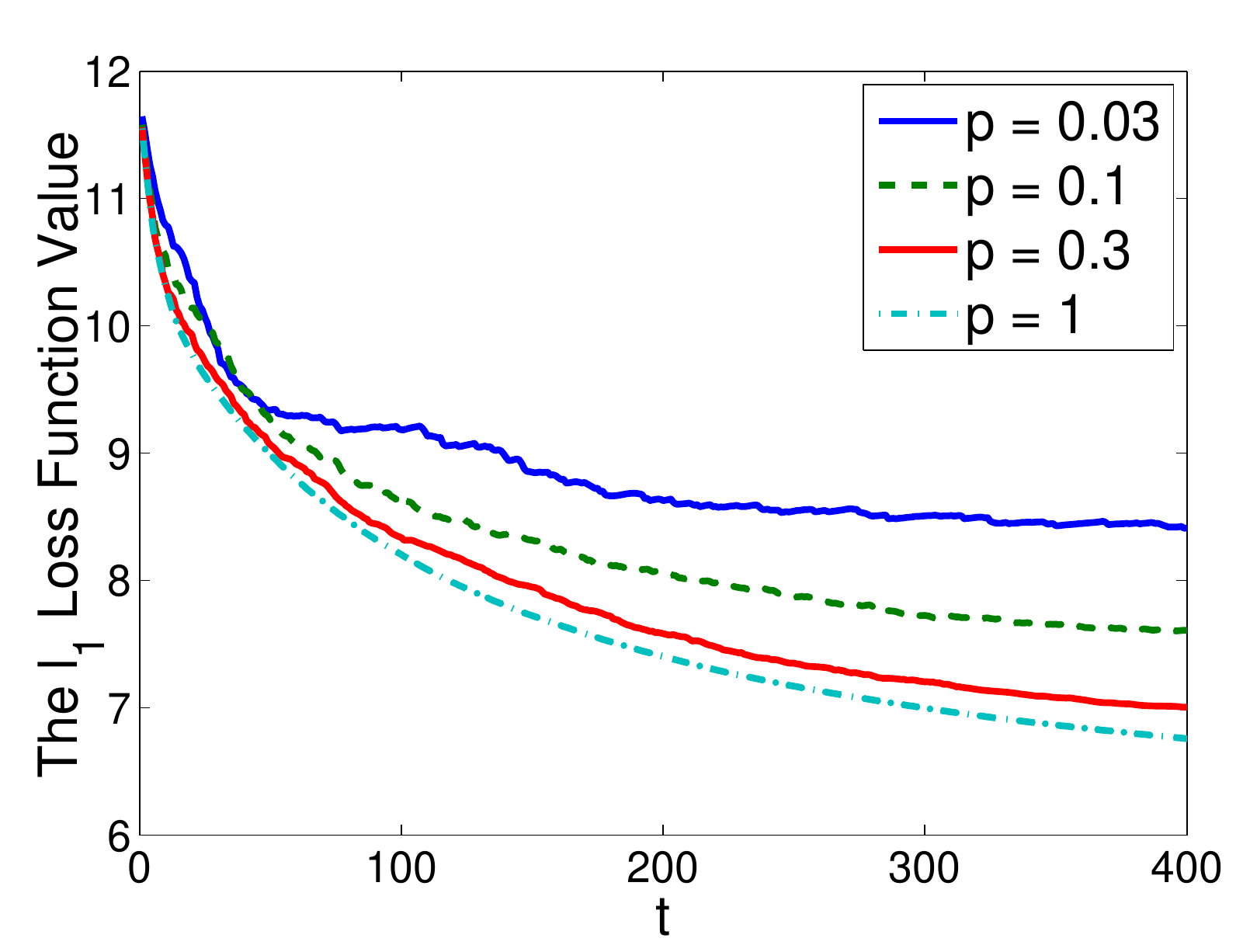}}
\end{center}
   \caption{Adjusting leverage scores by row weighting under different number of samples.}
\label{fig:leverage_scores_density}
\end{figure}
%---------------------------------Figure---------------------------------%

In Figure~\ref{fig:leverage_scores_density}
the plot of the $\ell_1$ hinge loss function indicates that
the leverage scores become more uniform after each rank one row weighting using Algorithm~\ref{alg:weighing}.
Theorem~\ref{thm:perturbation} indicates that larger $p$ results in better estimation of the leverage scores,
and consequently the row weighting can be better performed.
The experimental results verify our intuition.

We test the weighting---completion algorithm (Algorithm~\ref{alg:iterative})
to see whether alternating between weighting and completion helps making the leverage score more uniform.
We fix $n_1 = 2,000$, $n_2 = 1,000$, $k = 20$, $p = \frac{|\Omega|}{n_1 n_2} = 0.1$.
We run the weighting---completion procedure in Algorithm~\ref{alg:iterative} for four iterations
and plot $t$ against the the matrix coherence or the $\ell_1$ hinge loss function $L_{\M,1} (\R^{(t)})$ in Figure~\ref{fig:leverage_scores_vs_t}.

%---------------------------------Figure---------------------------------%
\begin{figure}[!ht]
\subfigtopskip = 0pt
\begin{center}
\centering
\subfigure[$t$ versus matrix coherence]{\includegraphics[width=0.23\textwidth]{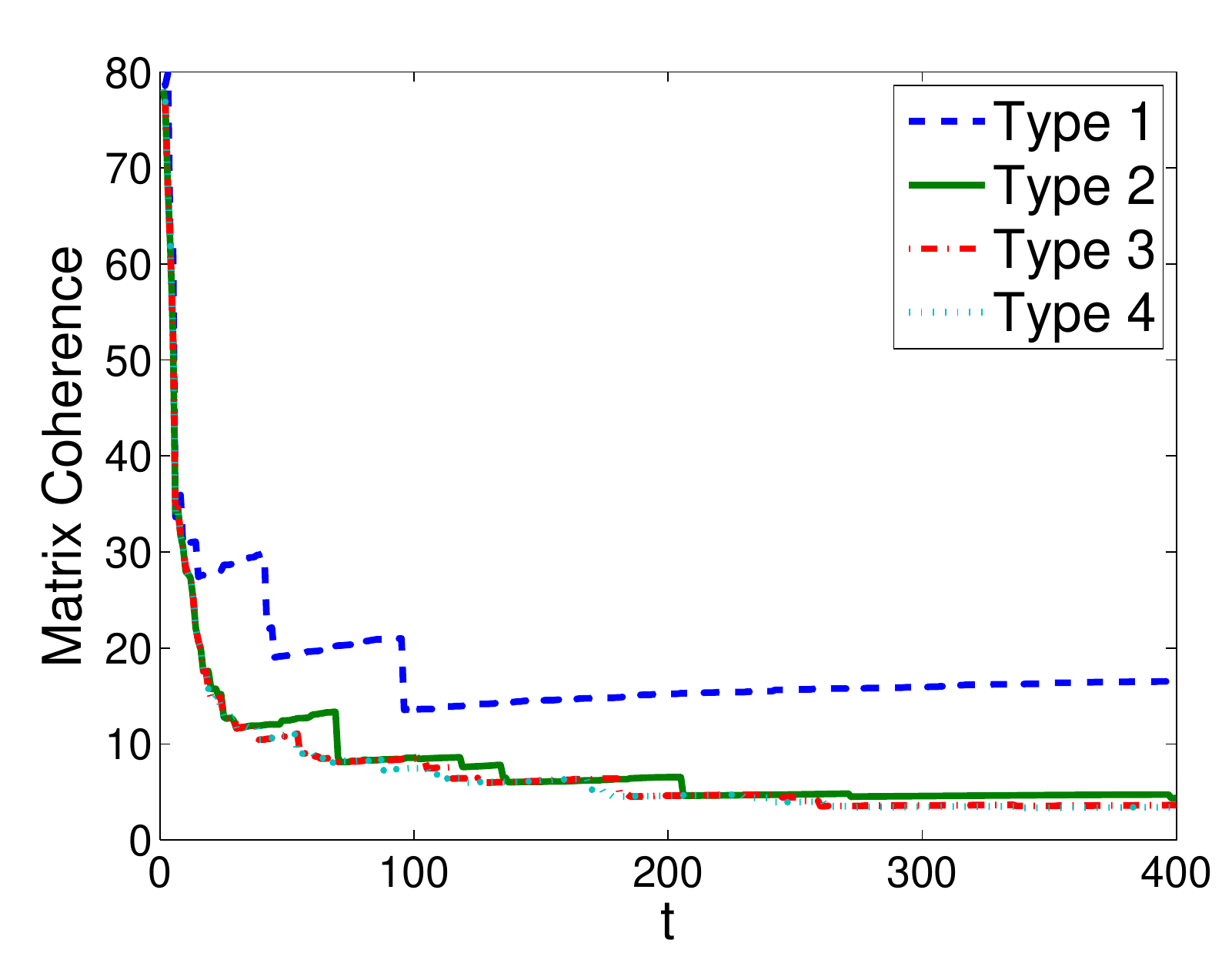}}
\subfigure[$t$ versus the $\ell_1$ hinge loss function value]{\includegraphics[width=0.23\textwidth]{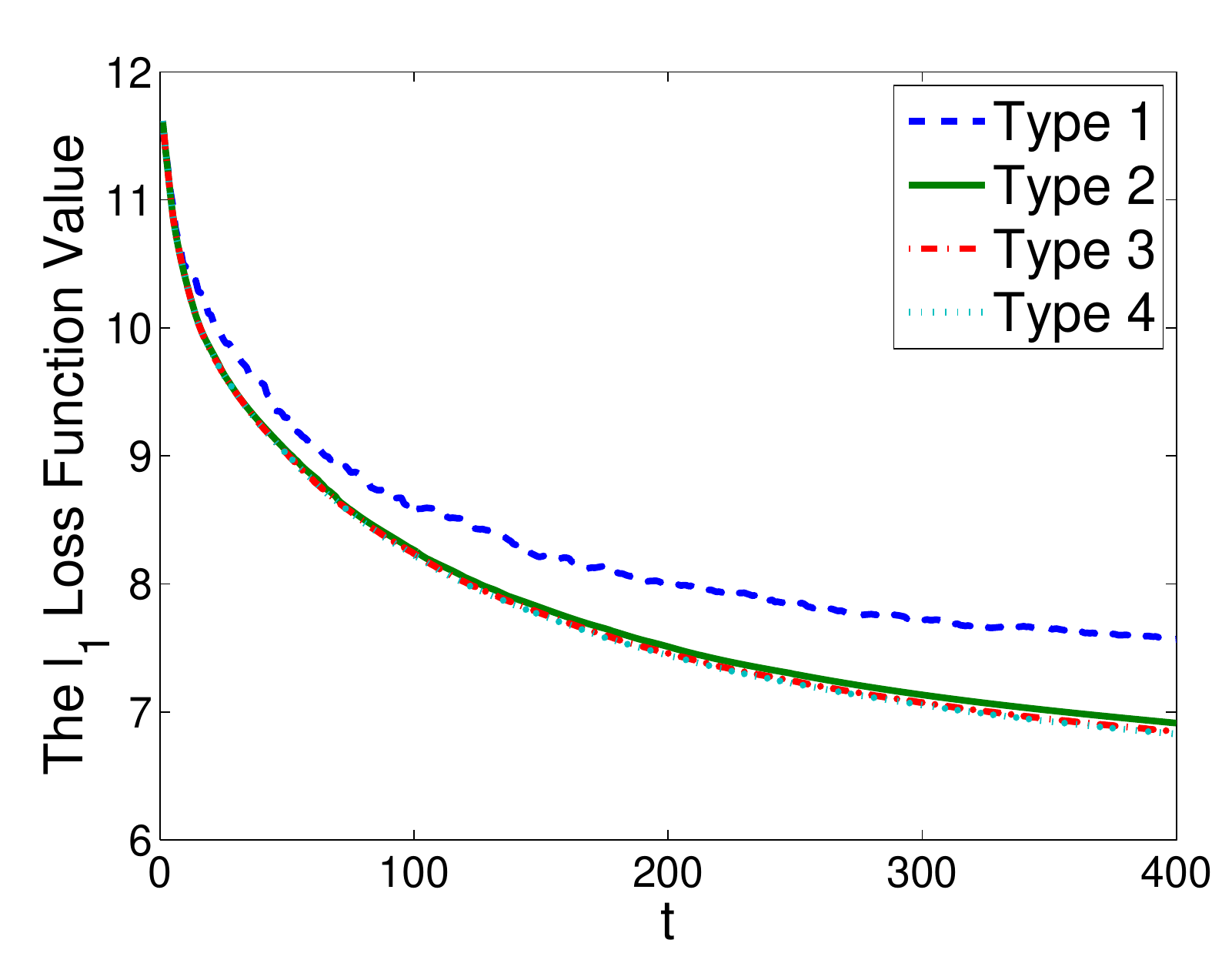}}
\end{center}
   \caption{The effects of the weighting---completion algorithm (Algorithm~\ref{alg:iterative}).}
\label{fig:leverage_scores_vs_t}
\end{figure}
%---------------------------------Figure---------------------------------%

Figure~\ref{fig:leverage_scores_vs_t} clearly shows that
performing the weighting---completion procedure multiple rounds results in much lower matrix coherence and the $\ell_1$ hinge loss function value
than performing the procedure only one round.
The results suggest that running the weighting---completion procedure for two rounds is a good choice.

\subsection{Matrix Completion Accuracy}

In this set of experiments, we use the synthetic data described in Section~\ref{sec:datasets} to generate $\LL_0$
and add i.i.d.\ Gaussian noise $\NM (1, \sigma^2)$ to $50\%$ entries of $\LL_0$ to obtain $\M$.
We set $n_1 = 2,000$, $n_2=1,000$, $k = 20$, and vary $p$ and $\sigma$.
We set $p=0.05$, $0.1$, or $0.2$, and vary $\sigma$ from $0$ to $40$.
For each data matrix, each $p$, $\sigma$, and each method,
we tune the parameter $\lambda$ to attain the lowest matrix completion error (defined in (\ref{eq:def_error})).
We plot in Figure~\ref{fig:noisyMC} the noise intensity $\frac{\| \PM_{\Omega} (\M - \LL_0) \|_F}{\| \PM_{\Omega} (\M) \|_F}$
against the matrix completion error.

%---------------------------------Figure---------------------------------%
\begin{figure}[!ht]
\subfigtopskip = 0pt
\begin{center}
\centering
\subfigure[$p=5\%$]{\includegraphics[width=0.15\textwidth]{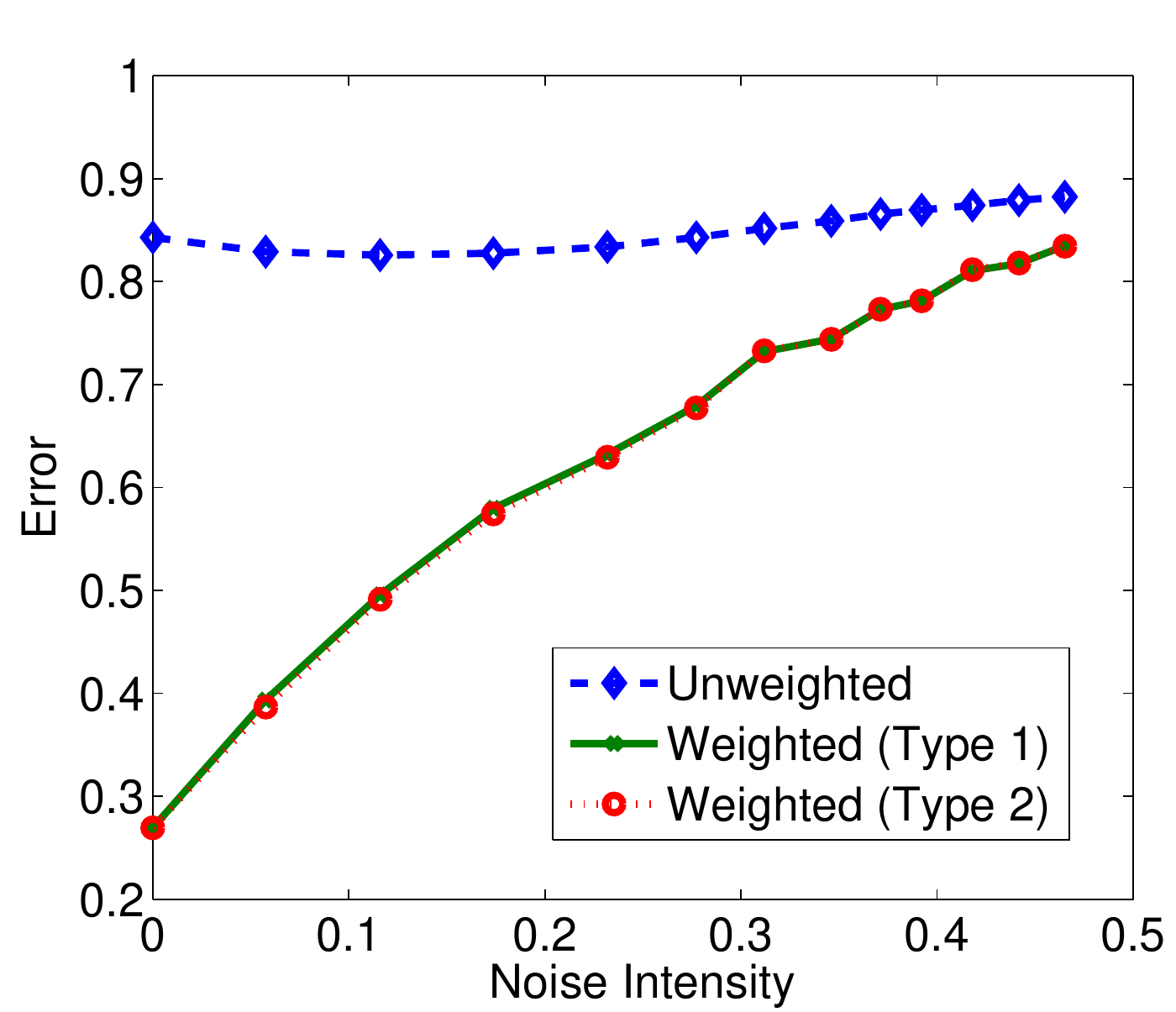}}
\subfigure[$p=10\%$]{\includegraphics[width=0.15\textwidth]{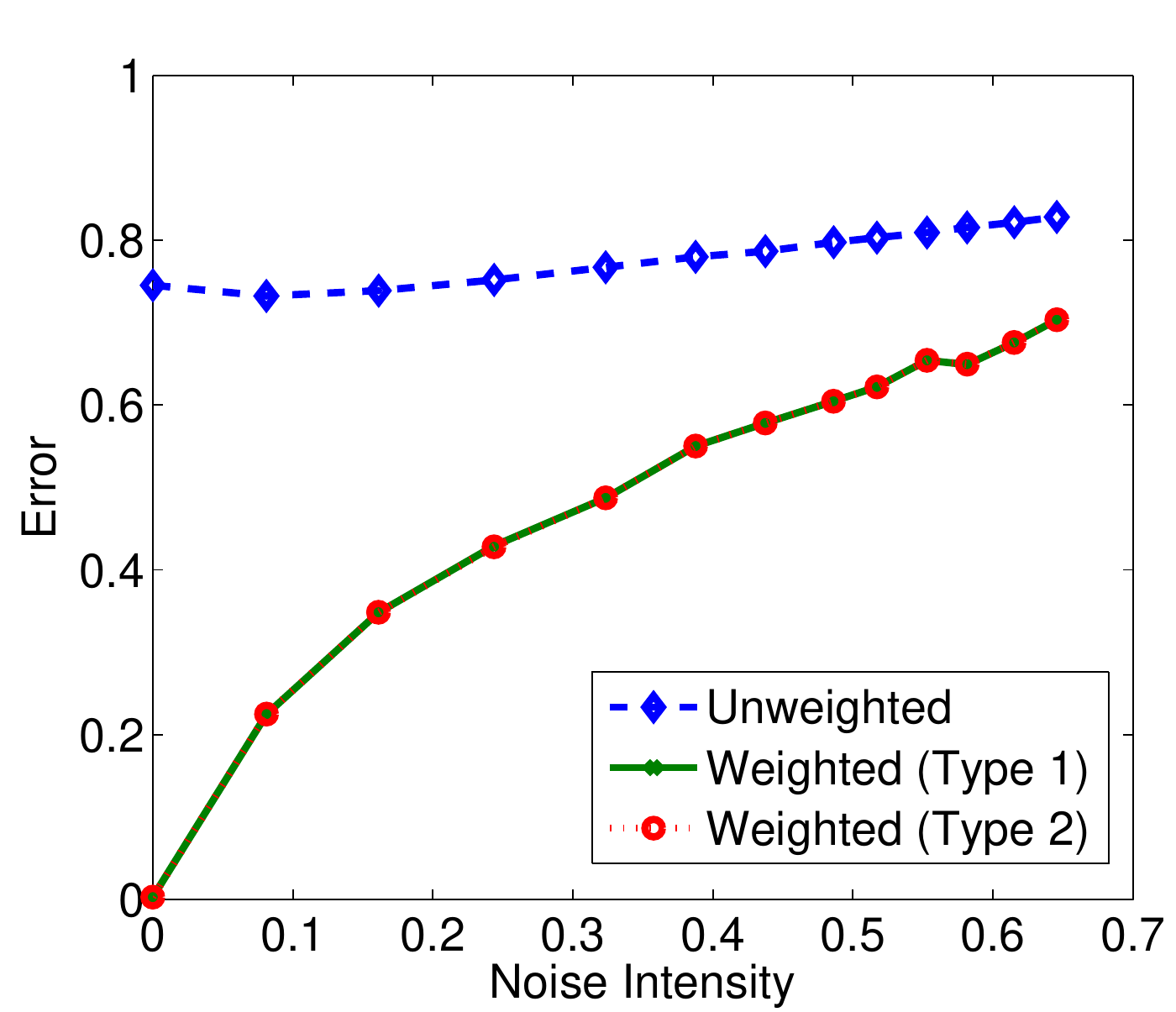}}
\subfigure[$p=20\%$]{\includegraphics[width=0.15\textwidth]{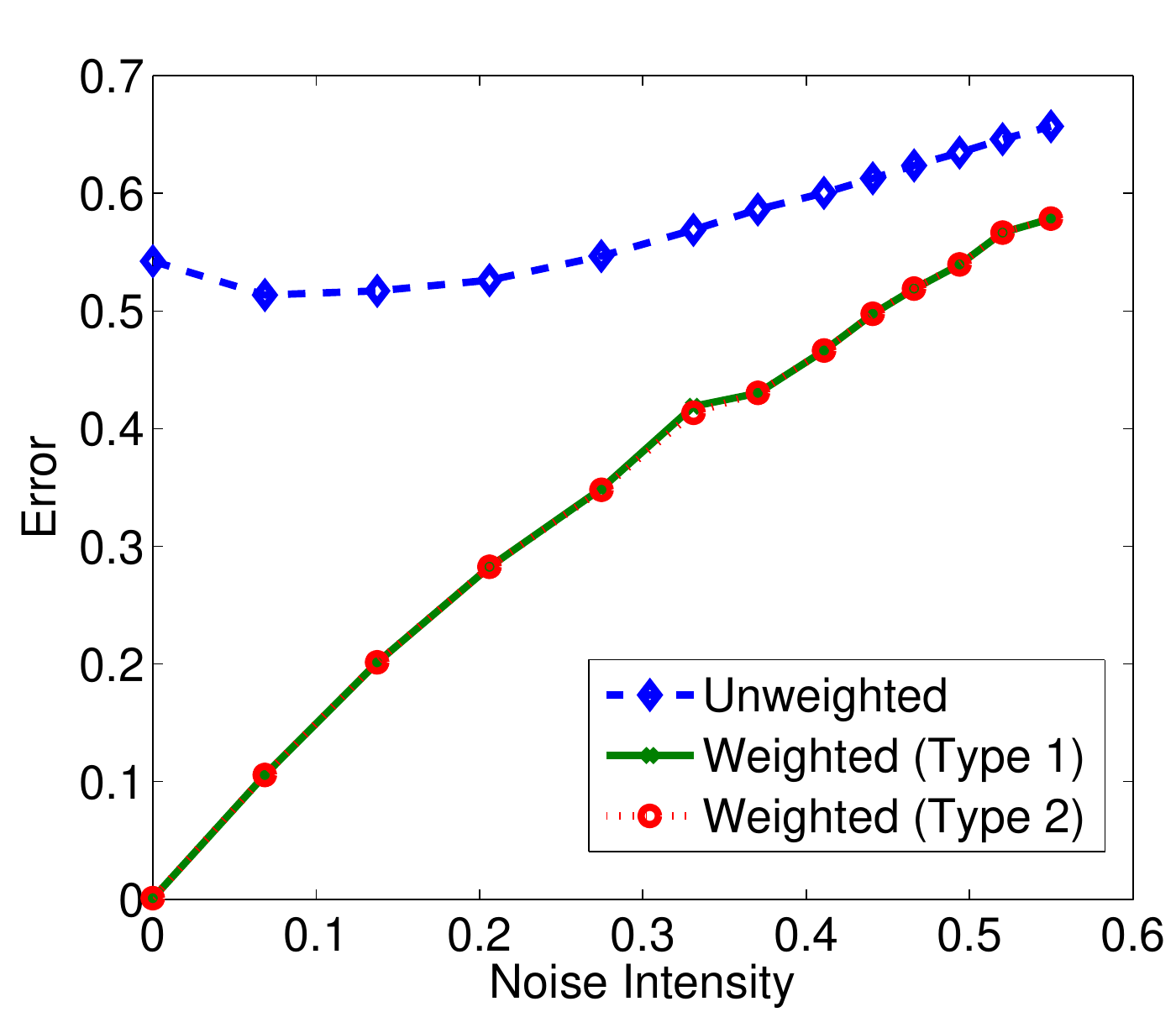}}
\end{center}
   \caption{Experiments on noisy matrix completion.}
\label{fig:noisyMC}
\end{figure}
%---------------------------------Figure---------------------------------%

The experimental results show that when the underlying low-rank matrix is coherent,
the standard nuclear norm minimization method fails even on the noise-free data with a large number of samples, i.e., $p = 20\%$ entries observed.
In comparison, our weighted method strictly succeeds on the noise-free data with $p = 10\%$ or $20\%$,
and our method achieves high accuracy even under heavy noise.

\section{Extension to the Robust Principal Component Analysis} \label{sec:rpca}

Our proposed row/column weighting method can also be applied to low-rank matrix recovery problems
other than matrix completion.
The robust principal component analysis (RPCA) is a well-known low-rank plus sparse matrix recovery model,
but it requires the matrix coherence to be small in order to recover the underlying low-rank and sparse matrices.
In this section we apply our method to RPCA and call the obtain method the weighted RPCA.
We show that highly coherent low-rank matrices can also be recovered by the weighted RPCA.

\subsection{The Standard RPCA Method}

In some real-world applications the observation $\D \in \RB^{n_1 \times n_2}$ is the superposition of a low-rank matrix $\LL_0$ and a sparse matrix $\S_0$. It is useful to recover the low-rank matrix and the sparse matrix from the observation.
For example, in the video surveillance problem,
by vectorizing each frame of a video surveillance sequence and stacking the obtained vectors,
the obtained matrix is the sum of the low-rank background and the sparse foreground.

The robust principal component analysis (RPCA) \cite{candes2011robust} is perhaps the most effective tool for such a matrix recovery task.
The RPCA model is defined as follows:
\begin{eqnarray} \label{eq:weightedRPCA}
\min_{\LL, \S} \; \big\| \S \big\|_1 + \lambda \| \LL \|_*;
\qquad \st \;  \LL + \S = \D.
\end{eqnarray}%
Let $\tau = \max \big\{ \mu (\LL_0) , \nu (\LL_0) ,$ $ \frac{n_1 n_2}{k} $ $\| \U_{\LL_0 , k} \V_{\LL_0 , k}^T \|_{\infty}^2 \big\} $
and $k = \rk (\LL_0)$.
Suppose  $n_1 \geq n_2$.
It was shown in \cite{candes2011robust} that RPCA exactly recovers $\LL_0$ and $\S_0$ provided that
$k \leq c_l (n_1 + n_2) \tau^{-1} \log^{-2} n_1 $
and $\|\S_0 \|_0 \leq c_s n_1 n_2$
for some constants $c_l$ and $c_s$.
Therefore, the exact recovery is guaranteed only when the matrix coherence parameter $\tau$ is small.

\subsection{Weighted RPCA}

When the underlying low-rank matrix $\LL_0$ is coherent,
the standard RPCA method can easily fail.
To remedy this problem, we propose the use of weighted RPCA to recover coherent low-rank matrix.

Let $\R$ and $\C$ be some diagonal matrices.
We can weight $\D$ by
\vspace{-1mm}
\begin{small}
\begin{equation}
\underbrace{\R \D \C}_{\hat{\D}} \; = \; \underbrace{\R \LL_0 \C}_{\hat{\LL}_0} + \underbrace{\R \S_0 \C }_{\hat{\S}_0} . \nonumber
\vspace{-1mm}
\end{equation}
\end{small}%
It is obvious that $\rk (\hat{\LL}_0) = \rk (\LL_0) = k$ and $\|\hat{\S}_0\|_0 = \|{\S}_0\|_0$;
thus $\hat{\D}$ is the sum of the low-rank matrix $\hat{\LL}_0$ and the sparse matrix $\hat{\S}_0$.
From the analysis of \cite{candes2011robust} we know that the matrix recovery performance can be improved if
the matrix coherence parameters of $\hat{\LL}_0$ is lower than those of ${\LL}_0$.
We therefore propose to use Algorithm~\ref{alg:weighing} to compute $\R$ and $\C$ and do row and column weighting before performing RPCA.
Our proposal is as follows:
\vspace{-1mm}
\begin{enumerate}
\item
    Compute $\R$ and $\C$ using Algorithm~\ref{alg:weighing} which takes $\D$ and $k$ as inputs.
\vspace{-1mm}
\item
    Take $\hat{\D} = \R \D \C$ instead of $\D$ as the input of RPCA and obtain the solution $\hat{\LL}^\star$, $\hat{\S}^\star$.
\vspace{-1mm}
\item
    Output $\LL^\star = \R^{-1} \hat{\LL}^\star \C^{-1}$ and $\S^\star = \R^{-1} \hat{\S}^\star \C^{-1}$.
\vspace{-1mm}
\end{enumerate}

The experiments in Section~\ref{sec:RPCA:experiments:leverage_scores}
shows that the performance of row weighting deteriorates as $\|\S_0\|_0$ or $\|\S_0\|_{\infty}$ increases.
In the same spirit with the weighting---completion algorithm in Section~\ref{sec:weighting_completion},
we propose to use the recovered matrix $\LL^\star$ instead of $\D$ to compute the weight matrices $\R$ and $\C$.
In these experiments we use two methods to compute the weight matrices $\R$ and $\C$:
\begin{itemize}
\vspace{-1mm}
\item
    {\bf Type 1.}
    Use $\D$ and $k$ as the input of Algorithm~\ref{alg:weighing} to compute $\R$ and $\C$ and perform the weighted RPCA to recover $\LL^\star$.
\vspace{-1mm}
\item
    {\bf Type 2.}
    Use Type 1 to recover $\LL^\star$, and then use $\LL^\star$ and $k$ as the input of Algorithm~\ref{alg:weighing} to obtain $\hat\R$ and $\hat\C$.
    Finally run the weighted RPCA once more using the new weight matrices $\hat\R$ and $\hat\C$.
\end{itemize}

%---------------------------------Figure---------------------------------%
\begin{figure}[!ht]
\subfigtopskip = 0pt
\begin{center}
\centering
\subfigure[Matrix coherence versus $t$]{\includegraphics[width=0.23\textwidth]{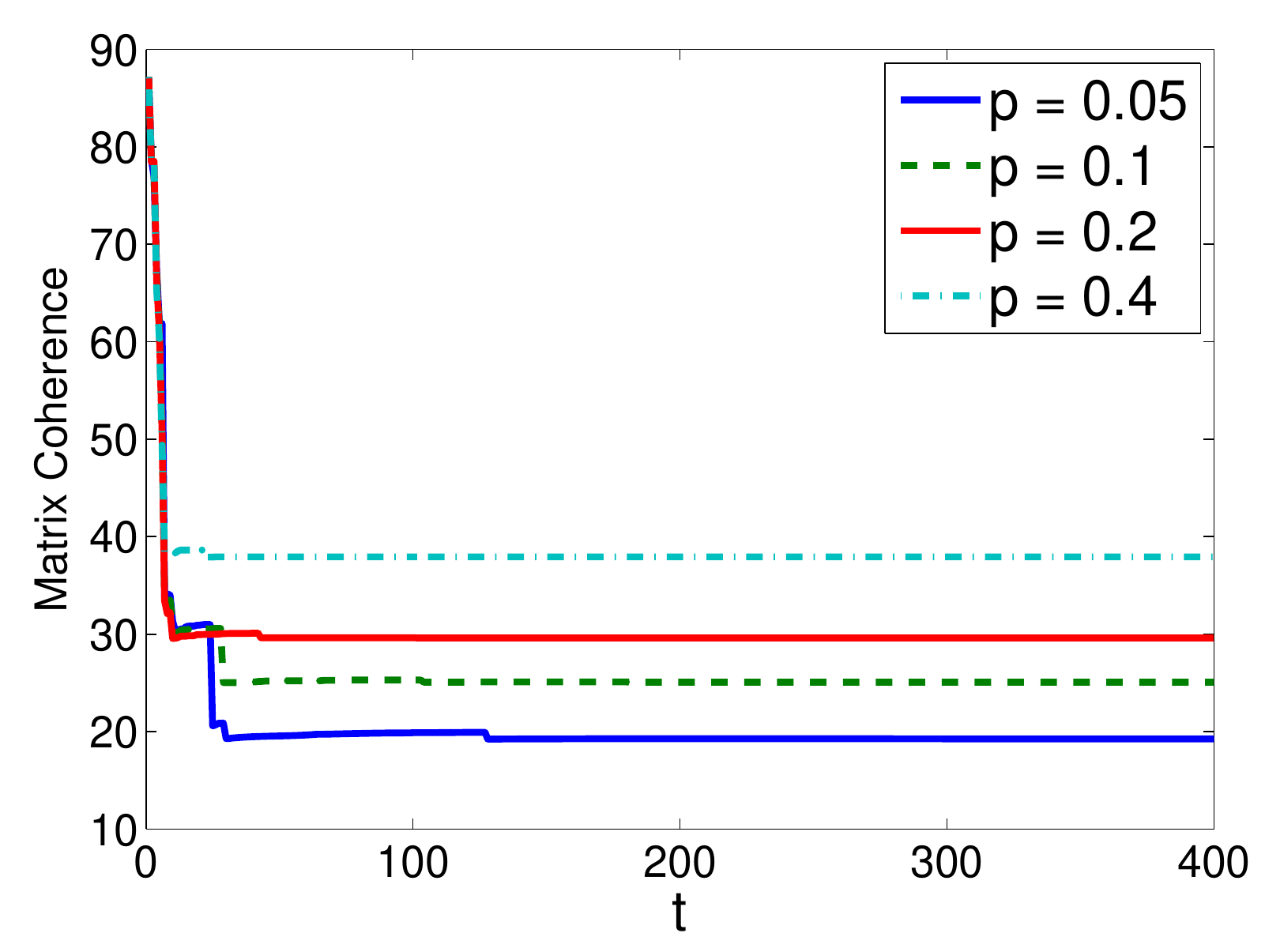}}
\subfigure[The $\ell_1$ loss function value versus $t$]{\includegraphics[width=0.23\textwidth]{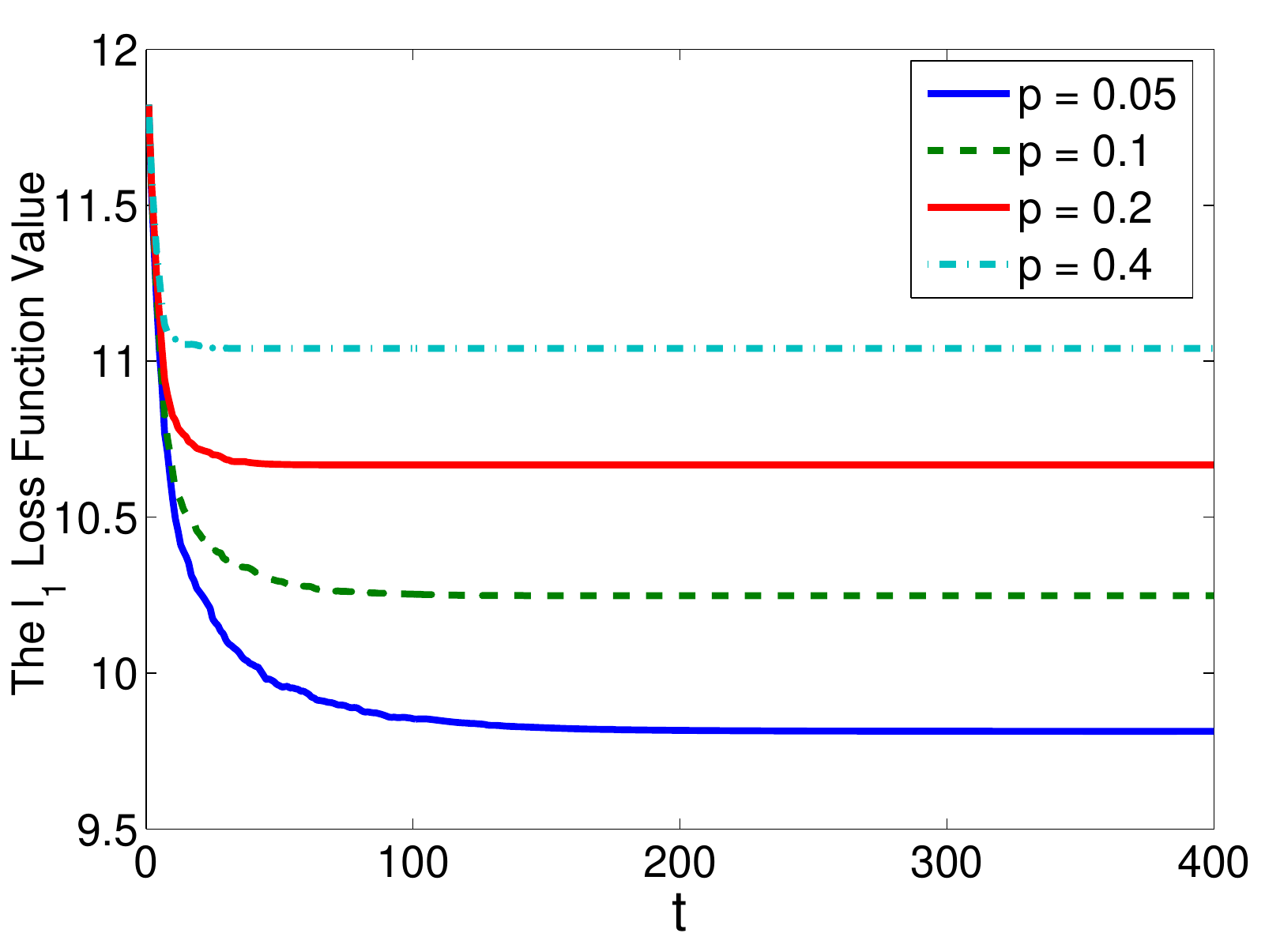}}
\end{center}
   \caption{The effects of row weighting under varying $\|\S_0\|_0$.}
\label{fig:LevDensityRPCA}
\end{figure}
%---------------------------------Figure---------------------------------%

%---------------------------------Figure---------------------------------%
\begin{figure}[!ht]
\subfigtopskip = 0pt
\begin{center}
\centering
\subfigure[$t$ versus matrix coherence]{\includegraphics[width=0.23\textwidth]{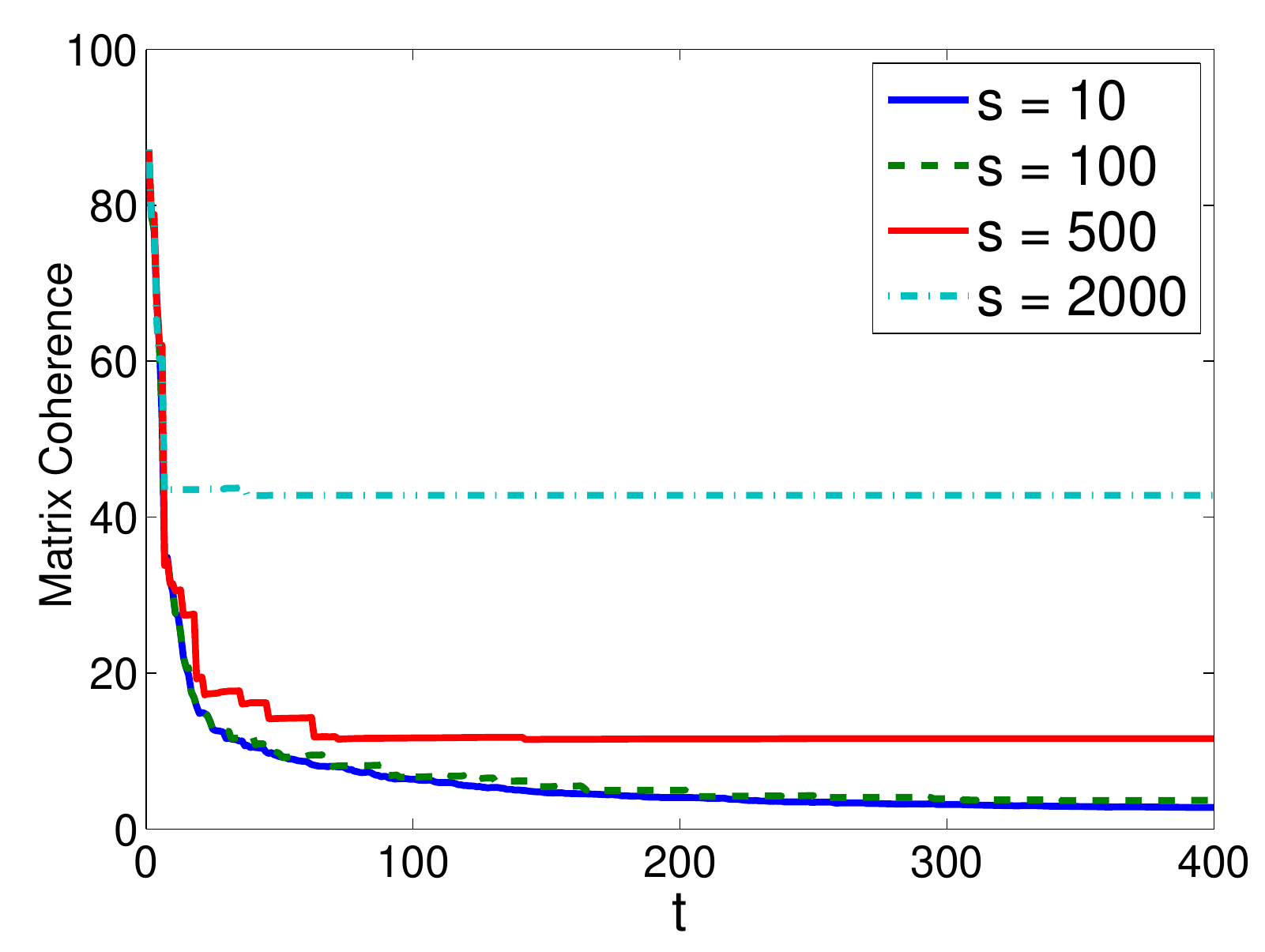}}
\subfigure[$t$ versus the $\ell_1$ loss function value]{\includegraphics[width=0.23\textwidth]{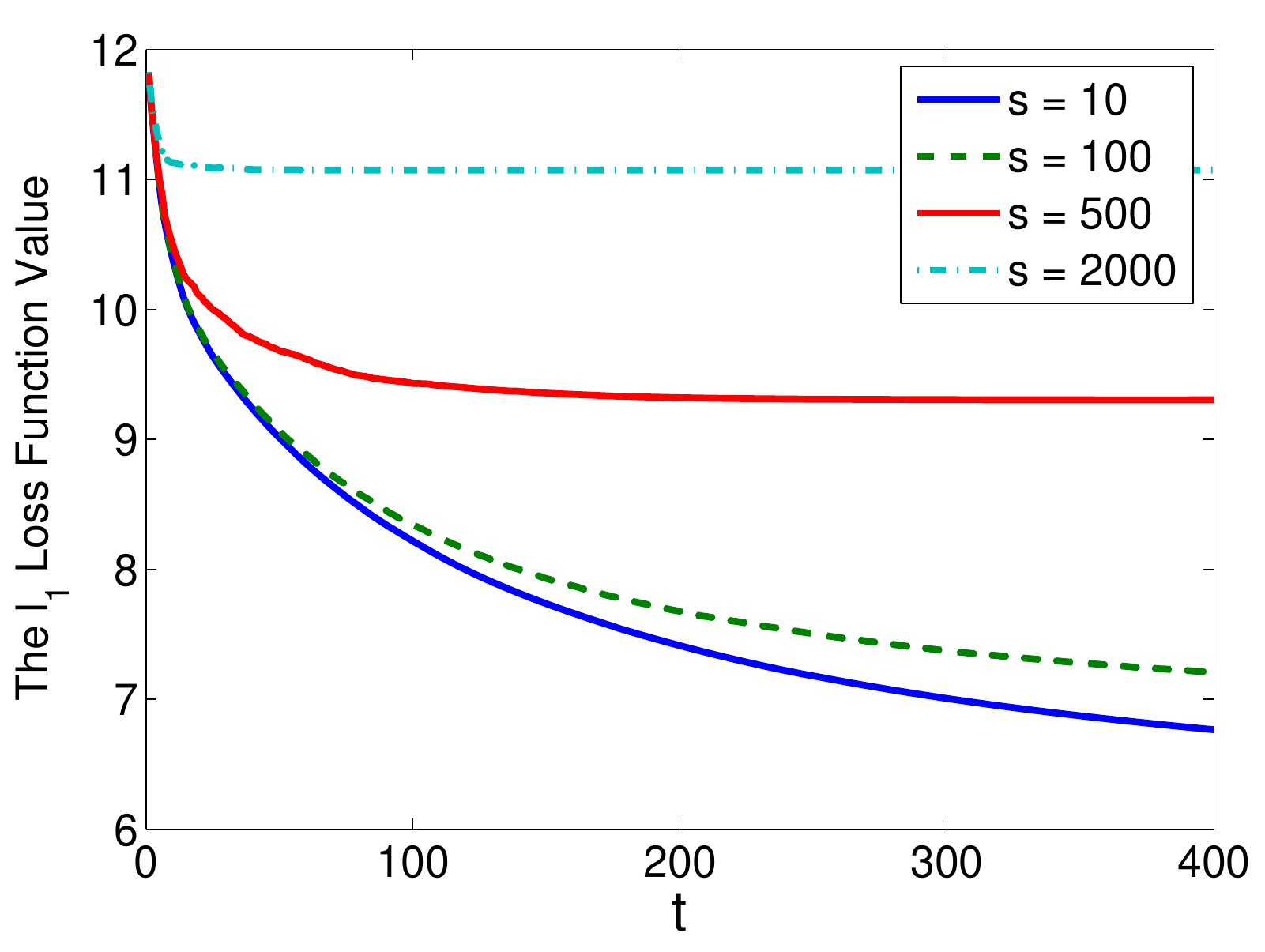}}
\end{center}
   \caption{The effects of row weighting under varying $\|\S_0\|_{\infty}$.}
\label{fig:MagnitudeDensityRPCA}
\end{figure}
%---------------------------------Figure---------------------------------%

\subsection{Effects on Leverage Scores} \label{sec:RPCA:experiments:leverage_scores}

We set $n_1 = 2,000$, $n_2 = 1,000$, $k = 20$ and use the data model defined in Section~\ref{sec:datasets} to generate the low-rank matrix $\LL_0$.
The sparse matrix $\S_0$ is generated independently by uniformly setting entries to be $s$ with probability $p/2$,
$-s$ with probability $p/2$, and zero with probability $1-p$.
That is $\|\S\|_0 = p n_1 n_2$ and $\|\S\|_\infty = s$.
We use $\D = \LL_0 + \S_0$ as the observation.

In the first set of experiments, we illustrate the effect of Algorithm~\ref{alg:weighing},
which takes $\D$ and $k$ as inputs, with varying $\S_0$.
We fix $s = 1,000$ and vary $p$, and plot the results in Figure~\ref{fig:LevDensityRPCA}.
We then fix $p = 0.1$ and vary $s$, and plot the results in Figure~\ref{fig:MagnitudeDensityRPCA}.
The results clearly indicate that under the RPCA setting,
our row weighting algorithm makes the leverage scores more uniform and the matrix coherence much lower.

\subsection{Matrix Recovery Accuracy}

In the second set of experiments,
we compare the weighted and unweighted RPCA in terms of matrix recovery accuracy.
The relative error defined in (\ref{eq:def_error}) is used
to evaluate the recovery accuracy.

We generate the synthetic data $\LL_0$ in the same way as in the previous subsection.
As for the sparse matrix $\S_0$, we first fix $s=1,000$ and vary $p$,
and report the relative error in Figure \ref{fig:ErrorRPCA:p}.
We then fix $p = 0.2$ and vary $s$,
and report the relative error in Figure \ref{fig:ErrorRPCA:s}.

Since the low-rank matrix $\LL_0$ is highly coherent,
the standard RPCA fails even if $\|\S_0\|_0$ and $\| \S \|_{\infty}$ are both very small.
Since we use $\D = \LL_0 + \S_0$ to compute the weight matrices $\R$ and $\C$,
the error in the leverage score estimation should be lower if $\D$ is closer to $\LL_0$.
The Type 1 weighted RPCA achieves much better performance when
$\|\S_0\|_0$ and $\| \S \|_{\infty}$ are reasonably small,
which is in accordance with our expectation.
We can also see that the Type 2 weighted RPCA achieves the highest accuracy.
The computational cost of the Type 2 method is only twice as much as the Type 1 method,
but the accuracy is much higher than that of Type 1.

%---------------------------------Figure---------------------------------%
\begin{figure}[!ht]
\subfigtopskip = 0pt
\begin{center}
\centering
\subfigure[Fix $s=1,000$, vary $p$.]{\includegraphics[width=0.23\textwidth]{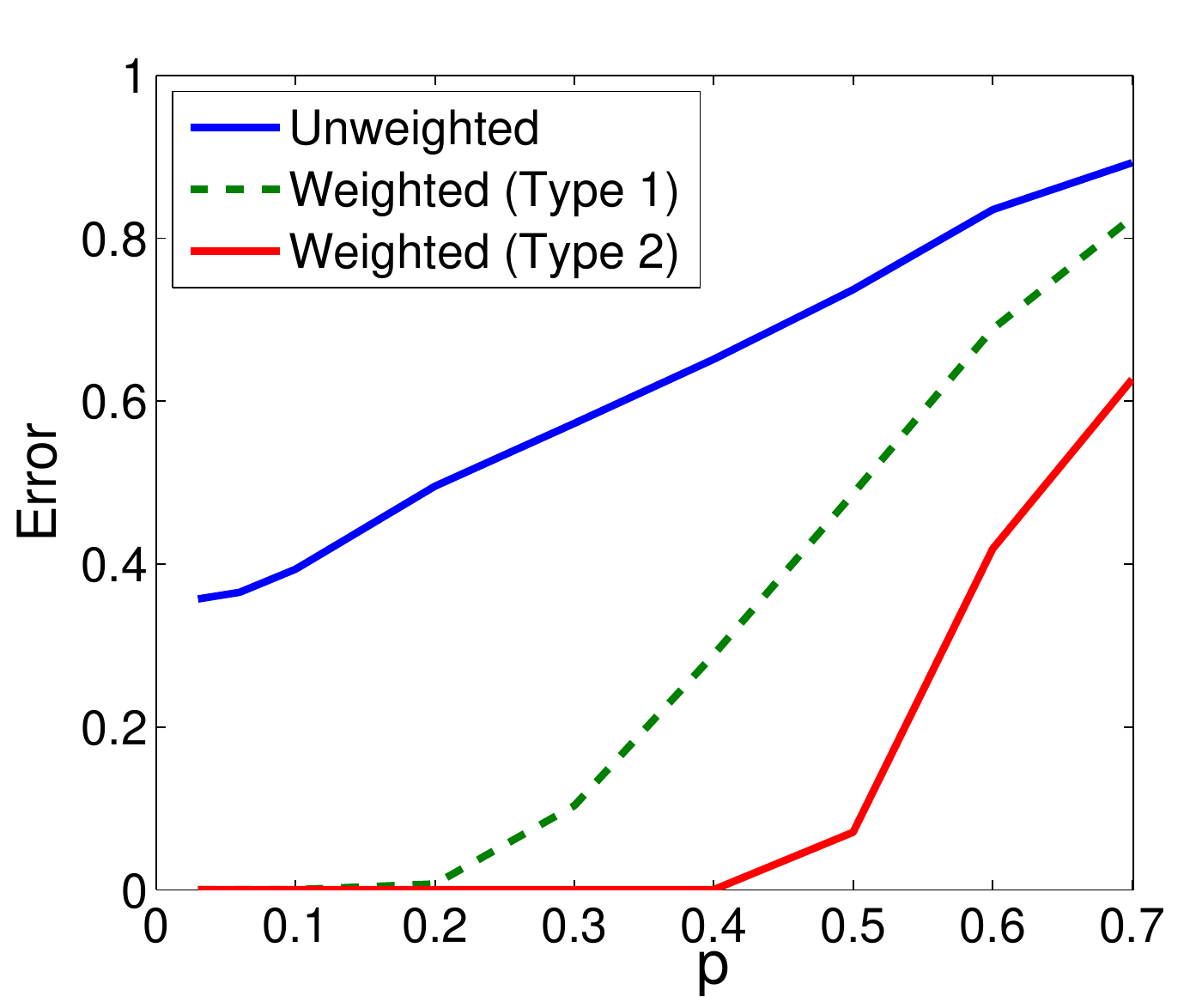}\label{fig:ErrorRPCA:p}}
\subfigure[Fix $p=0.2$, vary $s$.]{\includegraphics[width=0.23\textwidth]{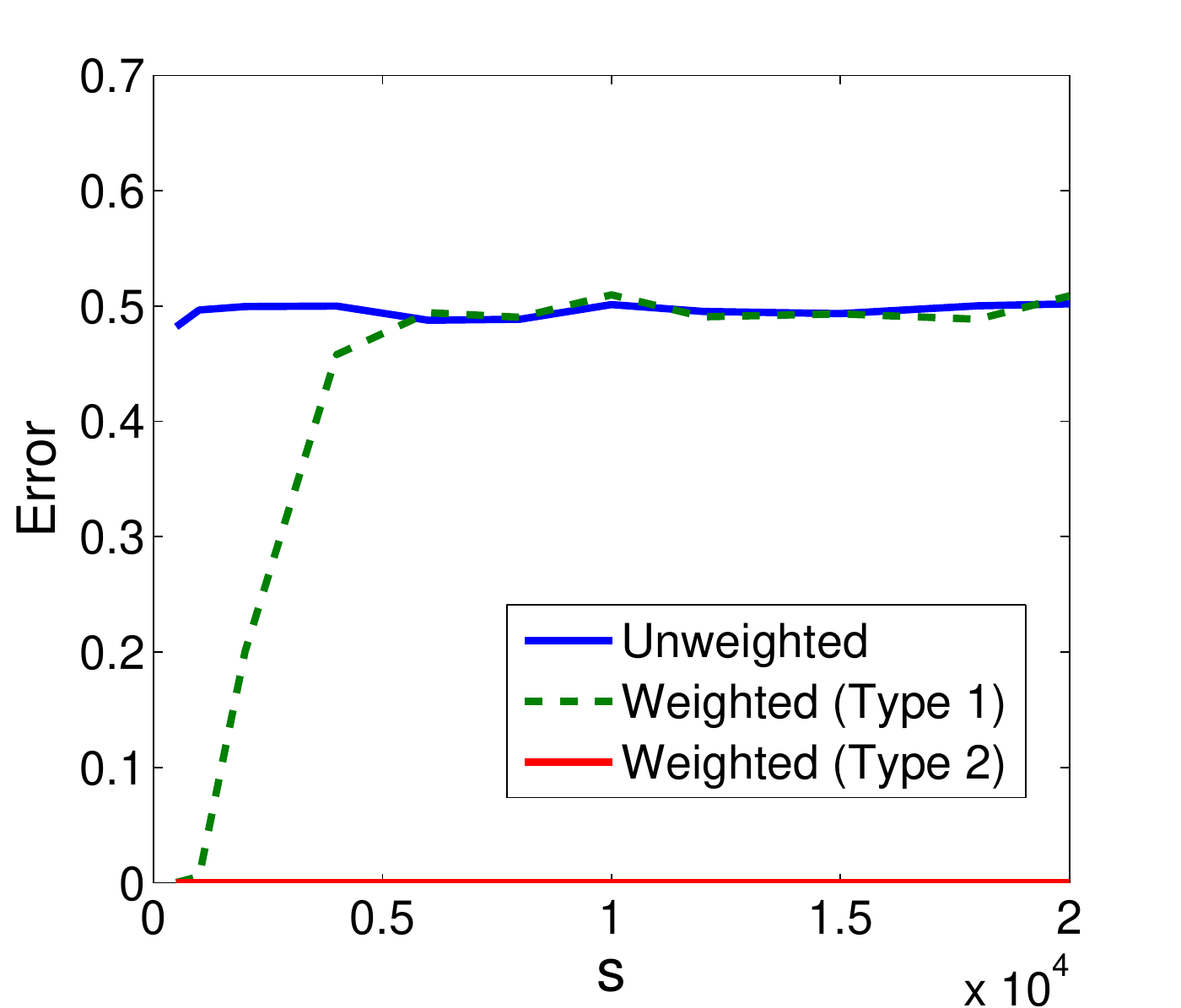}\label{fig:ErrorRPCA:s}}
\end{center}
   \caption{The matrix recovery errors.}
\label{fig:ErrorRPCA}
\end{figure}
%---------------------------------Figure---------------------------------%

\section{Conclusions and Future Work} \label{sec:conclusion}

In this paper  we have proposed a row/column weighting method for adjusting the leverage scores
such that low-rank matrix recovery can be better performed.
We have established an optimization model that describes the discrepancy between the leverage scores and desired leverage scores.
%by optimizing which the leverage scores move towards the desired ones.
We have also devised a coordinate descent algorithm for solving the model without knowing the true leverage scores.
We have proved that this algorithm does not increase the objective function value and that it decreases the objective function value under certain conditions.
We have applied our matrix weighting method to the matrix completion problem such that
coherent low-rank matrices can be completed more accurately via the weighted nuclear norm minimization formulation.
Moreover, we have  applied our matrix weighting method to the robust principal component analysis  problem,
so that highly coherent low-rank matrices can be recovered from noisy observations more accurately.

We should point out that currently our proposed method applies only to the uniform sampling model;
that is, all matrix entries are observed with the same probability.
Our row weighting method critically relies on the quality of leverage score estimation.
For the non-uniform sampling model, it is not clear how to accurately estimate the leverage scores from partial observation.
For the non-uniform sampling model, however, if we have a method that can accurately estimate the leverage scores,
the same coordinate descent algorithm can be used to find the weight matrix $\R$ by solving (\ref{eq:def_lq_loss}).
Considering that most real-world applications obey non-uniform sampling settings such as the power law distribution,
it is very useful to find a way to estimating leverage scores from non-uniformly sampled entries,
with which our row weighting approach can demonstrate its power in the real-world applications.

%\begin{small}
\bibliographystyle{abbrv}
\bibliography{adjusting}
%\end{small}

\newpage
\appendix

\section{ADMM for Weighted Nuclear Norm Minimization} \label{sec:admm}

The regularized weighted nuclear norm minimization formulation (\ref{eq:weightedMC_noise}) can be equivalently converted to (\ref{eq:weightedMC_noise_slack})
by adding the slack variable $\X$:
\begin{eqnarray} \label{eq:weightedMC_noise_slack}
\min_{\LL, \X} & \frac{1}{2} \big\| \PM_{\Omega} (\M - \LL) \big\|_F^2 + \lambda \| \X \|_*, \nonumber \\
\st & \X = \R \LL \C.
\end{eqnarray}
The augmented Lagrange function of this problem is
\begin{footnotesize}
\begin{eqnarray*}
f_{\gamma} (\LL, \X, \Y)
& = &  \frac{1}{2} \big\| \PM_{\Omega} (\M - \LL) \big\|_F^2 + \lambda \| \X \|_* \\
& &       + \langle \Y , \, \R \LL \C - \X \rangle + \frac{\gamma}{2} \big\| \R \LL \C - \X \big\|_F^2 \nonumber  \\
& = &  \frac{1}{2} \big\| \PM_{\Omega} (\M - \LL) \big\|_F^2 + \lambda \| \X \|_* \\
&&        + \frac{\gamma}{2} \big\| {\gamma}^{-1} \Y + \R \LL \C - \X \big\|_F^2 - \frac{\gamma}{2} \big\| {\gamma}^{-1}\Y \big\|_F^2. \nonumber
\end{eqnarray*}
\end{footnotesize}
We alternately minimize $f_\gamma$ w.r.t.\ $\LL$ and $\X$ and maximize $f_{\gamma}$ w.r.t.\ $\Y$.

The terms in $f_\gamma$ containing $\PM_{\Omega} (\LL)$ is
\begin{small}
\begin{eqnarray}
f_{\gamma} \big(\PM_{\Omega} (\LL) \big)
= \frac{1}{2} \big\| \PM_{\Omega} (\M - \LL) \big\|_F^2
    + \frac{\gamma}{2} \Big\| \PM_{\Omega} \Big(\frac{\Y}{\gamma}  - \X + \R \LL \C \Big) \Big\|_F^2 , \nonumber
\end{eqnarray}
\end{small}
and the minimizer is
\begin{small}
\begin{equation} \label{eq:alg:update_L_in_omega}
L_{i j}^\star \; = \;
\big(1+ \gamma R_{ii}^2 C_{jj}^2\big)^{-1} \big(M_{ij} + \gamma R_{ii} C_{jj} X_{i j} - R_{ii} C_{jj} Y_{i j} \big)
\end{equation}
\end{small}
for all $(i,j) \in \Omega$.

Let $\PM_{\Omega}^{\perp} (\LL)$ be the $n_1 \times n_2$ matrix whose the $(i,j)$-th entry is $L_{i j}$ if $(i,j) \not\in \Omega$ and is zero otherwise.
The terms in $f_\gamma$ containing $\PM_{\Omega}^{\perp} (\LL)$ is
\begin{eqnarray}
f_{\gamma} \big(\PM_{\Omega}^{\perp} (\LL) \big)
& = & \frac{\gamma}{2} \big\| \PM_{\Omega}^{\perp} \big({\gamma}^{-1}\Y  - \X + \R \LL \C \big) \big\|_F^2 , \nonumber
\end{eqnarray}
and the minimizer is
\begin{equation} \label{eq:alg:update_L_notin_omega}
L_{i j}^\star \; = \; R_{i i}^{-1} C_{j j}^{-1} \big( X_{i j} - \gamma^{-1} Y_{i j} \big)
\end{equation}
for all $(i,j) \not\in \Omega$.

The terms in $f_\gamma$ containing $\X$ is
\begin{eqnarray}
f_{\gamma} (\X)
= \frac{\gamma}{2} \big\| {\gamma}^{-1}\Y + \R \LL \C - \X \big\|_F^2  + \lambda \|\X\|_*, \nonumber
\end{eqnarray}
and the minimizer is
\begin{equation} \label{eq:alg:update_X}
\X^\star \; = \;
    \U \big[ \Si - \gamma^{-1} \lambda  \I \big]_+ \V^T,
\end{equation}
where the $(i,j)$ of $[\M]_+$ is $\max\{A_{i j} , 0\}$ and
\[
[\U, \Si, \V] \; \longleftarrow \; \textrm{SVD} \big( {\gamma}^{-1}\Y + \R \LL \C \big).
\]

We can update $\LL$ by (\ref{eq:alg:update_L_in_omega}) and (\ref{eq:alg:update_L_notin_omega}),
update $\X$ by (\ref{eq:alg:update_X}),
and update the dual variable $\Y$ by gradient ascent: $\Y \longleftarrow \Y + \gamma (\R \LL \C - \X)$.
The whole procedure is described in Algorithm~\ref{alg:admm}.

%---------------------------------Figure---------------------------------%
\begin{figure*}[!ht]
\subfigtopskip = 0pt
\begin{center}
\centering
\subfigure[the $\ell_1$ loss function value]{\includegraphics[width=0.32\textwidth]{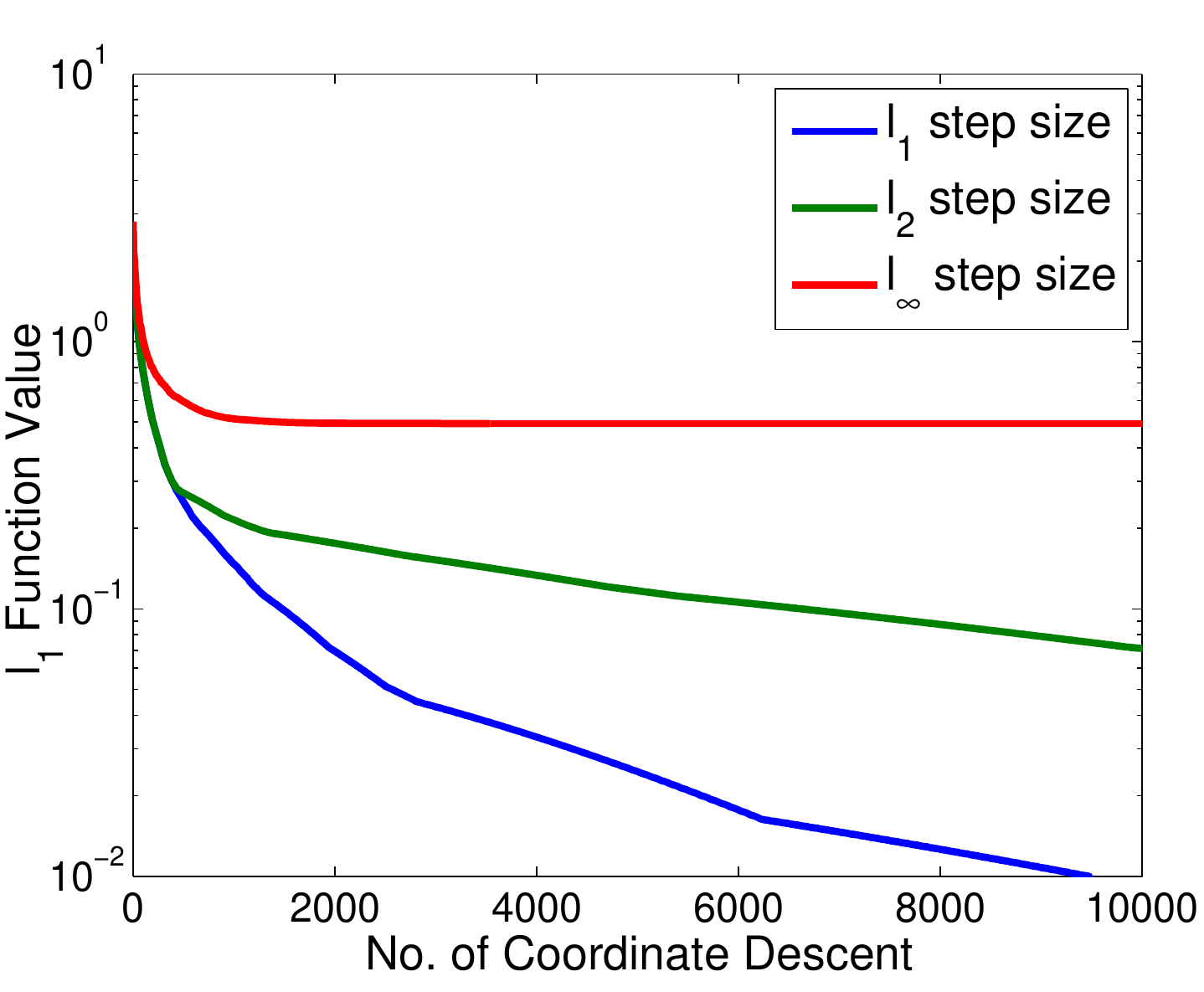}}
\subfigure[the $\ell_2$ loss function value]{\includegraphics[width=0.32\textwidth]{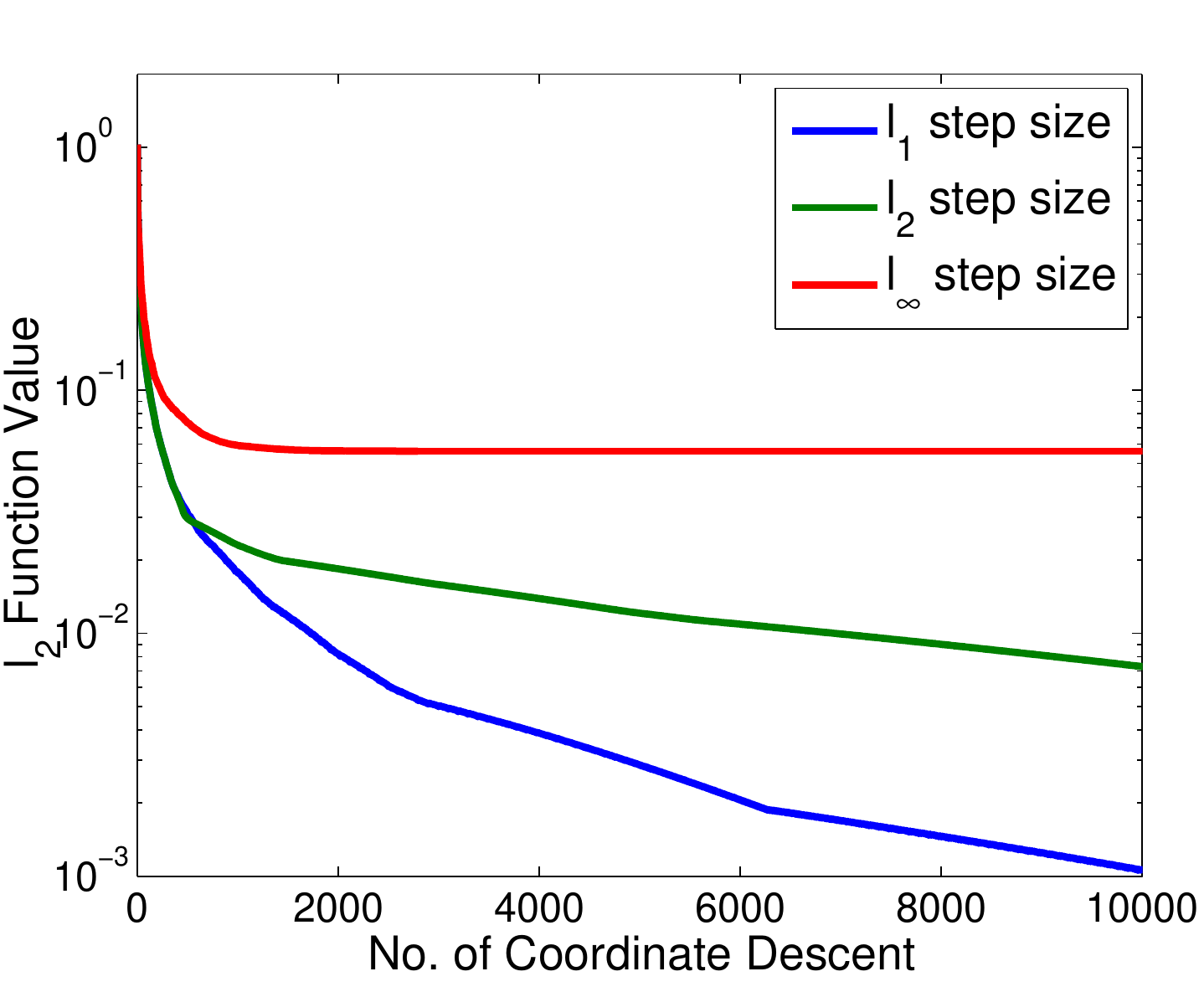}}
\subfigure[the $\ell_\infty$ loss function value]{\includegraphics[width=0.32\textwidth]{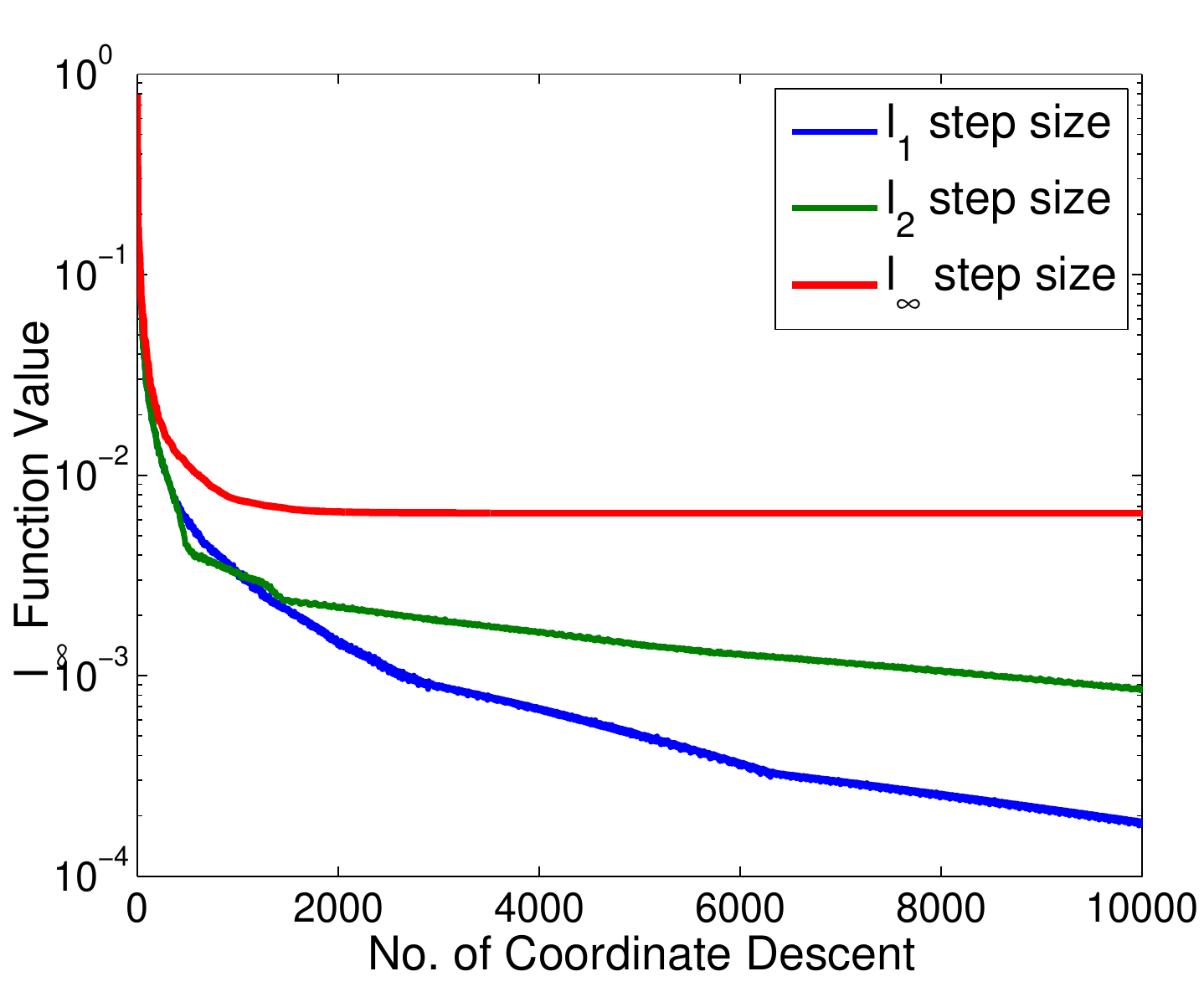}}
\end{center}
   \caption{The comparisons among the loss functions.}
\label{fig:lossfunctions}
\end{figure*}
%---------------------------------Figure---------------------------------%

\section{Comparing among Different Loss Functions} \label{sec:compare_lq}

We first give an example to show the advantage of optimizing the $\ell_1$ hinge loss function.

In this example, if we optimize the $\ell_\infty$ hinge loss function by coordinate descent,
scaling any row by any factor in the region $(0, 1)$ does not decrease the $\ell_\infty$ hinge loss function value.
That is, the coordinate descent algorithm stops in this configuration.

In comparison, in this example, if we optimize the $\ell_1$ hinge loss function by coordinate descent,
the $1$st or $2$nd row will be scaled by a factor $\gamma\in (0,1)$,
and the $\ell_1$ hinge loss function value will decrease in certain conditions (see Theorem~\ref{thm:approx_cd}).
In this way, the coordinate descent proceeds without being stuck in the configuration in the example.

\begin{example} \label{example:l_infty}
Let $\mu_1, \cdots , \mu_{n_1}$ be the row leverage scores of a rank $k$ matrix $\M\in \RB^{n_1\times n_2}$,
and the ideal leverage scores are $\mu_1^\star = \cdots = \mu_{n-1}^\star = \frac{k}{n_1}$.
When $\mu_1 = \mu_2 > \frac{k}{n_1} \geq \mu_3 , \cdots , \mu_{n-1}$,
weighting any single row by any factor $\gamma \in (0, 1)$ cannot decrease the $\ell_\infty$ hinge loss function value.
\end{example}

\begin{proof}
If we scale the $1$st row by any factor $\gamma \in (0, 1)$,
the leverage score of the $2$nd row will be nondecreasing (by Lemma~\ref{lem:weighting}).
Thus the the $\ell_\infty$ hinge loss function value does not decrease.

The same happens if we scale the $2$nd row.

If we scale the $i$-th row for $i > 2$,
the leverage scores of the $1$st and $2$nd rows will be nondecreasing (by Lemma~\ref{lem:weighting}).
Thus the the $\ell_\infty$ hinge loss function value does not decrease.
\end{proof}

Here we use a toy dataset to compare among the $\ell_1$, $\ell_2$, and $\ell_\infty$ hinge loss functions.
We generate a synthetic data matrix $\M$ according to Section~\ref{sec:datasets} by setting $n_1 = n_2 = 100$ and $k = 5$.
We assume the exact leverage scores are available to us
and use coordinate descent algorithm to optimize the loss functions.
We use three kinds of step sizes: the $\ell_1$, $\ell_2$, and $\ell_\infty$ step sizes;
``the $\ell_q$ step size'' means the step size computed by line search that decrease the $\ell_q$ hinge loss function value.
We plot in Figure~\ref{fig:lossfunctions} the loss function values versus the number of coordinate descents.
The $\ell_1$ step size always leads to the fastest convergence;
the $\ell_1$ step size does not only decrease the $\ell_1$ function value fast, but also decrease the $\ell_2$ and $\ell_\infty$ very fast.
The $\ell_2$ step size also works well, but it is much smaller and results in slow convergence.
The experiment shows that directly minimizing the $\ell_\infty$ hinge loss function is problematic because it can get stuck in some bad local minimum configurations without making progress,
which corroborates the analysis in Example~\ref{example:l_infty}.

\section{Proof of the Theorems}

\section{Key Lemmas}

\begin{lemma} \label{lem:decrement}
Let $i$ be the index selected in the $t$-th step of the coordinate descent algorithm.
Suppose the following equality holds:
\[
\mu_i (\R^{(t-1)} \M)
\;>\;\mu_i (\R^{(t)} \M)
\; \geq \; \mu_i^\star
\]%
Let $\Delta \mu_j$ be defined in (\ref{eq:l1_loss_decrement1}) and
the set $\JM$ be defined in (\ref{eq:l1_loss_decrement_set}).
Then the decrement in the $\ell_1$ hinge loss function is
\begin{small}
\begin{align*}
&L_{\M,1} \big(\R^{(t-1)}\big) - L_{\M,1} \big(\R^{(t)}\big) \nonumber \\
& = \;
\sum_{ j \in \JM }
\min \bigg\{  \Delta \mu_j , \; \mu_j^\star - \mu_j \big(\R^{(t-1)}\M\big) \bigg\}
\:\geq \: 0,
\end{align*}
\end{small}
\end{lemma}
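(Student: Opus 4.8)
The plan is to directly compute the change in the $\ell_1$ hinge loss function by decomposing it coordinate by coordinate. Recall that $L_{\M,1}(\R) = \sum_{l=1}^{n_1} \max\{\mu_l(\R\M) - \mu_l^\star, 0\}$, so the difference $L_{\M,1}(\R^{(t-1)}) - L_{\M,1}(\R^{(t)})$ splits as a sum over all rows $l$ of the per-coordinate changes in the hinge term. First I would isolate the contribution from the selected index $i$. Since the hypothesis guarantees $\mu_i(\R^{(t-1)}\M) > \mu_i(\R^{(t)}\M) \geq \mu_i^\star$, both leverage scores exceed $\mu_i^\star$, so the hinge is active at the same slope $1$ on both, and the $i$-th contribution to the decrement is exactly $\mu_i(\R^{(t-1)}\M) - \mu_i(\R^{(t)}\M)$, a positive quantity.

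Next I would handle the remaining rows $j \neq i$. By Lemma~\ref{lem:weighting}, each such leverage score does not decrease: $\mu_j(\R^{(t)}\M) = \mu_j(\R^{(t-1)}\M) + \Delta\mu_j$ with $\Delta\mu_j \geq 0$, where $\Delta\mu_j$ is given in (\ref{eq:l1_loss_decrement1}). The per-coordinate change in the hinge loss, $\max\{\mu_j(\R^{(t-1)}\M) - \mu_j^\star, 0\} - \max\{\mu_j(\R^{(t)}\M) - \mu_j^\star, 0\}$, is nonpositive because the argument only grows; I would evaluate it by cases according to where $\mu_j(\R^{(t-1)}\M)$ and $\mu_j(\R^{(t)}\M)$ sit relative to the threshold $\mu_j^\star$. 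The three cases are: (a) both below $\mu_j^\star$, giving change $0$; (b) $\mu_j(\R^{(t-1)}\M)$ below but $\mu_j(\R^{(t)}\M)$ above, giving change $-(\mu_j(\R^{(t)}\M) - \mu_j^\star)$; and (c) both above, giving change $-\Delta\mu_j$. The key observation is that in all three cases the per-coordinate increase in loss equals $\min\{\Delta\mu_j, \, \mu_j(\R^{(t)}\M) - \mu_j^\star\}$ restricted to those $j$ with $\mu_j(\R^{(t-1)}\M) < \mu_j^\star$, which after using $\Delta\mu_j \geq 0$ can be rewritten as $\min\{\Delta\mu_j, \, \mu_j^\star - \mu_j(\R^{(t-1)}\M) + \Delta\mu_j\}$; a short algebraic check recovers exactly $\min\{\Delta\mu_j, \mu_j^\star - \mu_j(\R^{(t-1)}\M)\}$ as claimed, with the set $\JM$ from (\ref{eq:l1_loss_decrement_set}) picking out precisely the indices where this term is nonzero.

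The final step is to reconcile the positive $i$-th contribution with the negative $j$-contributions. Because row weighting preserves the sum of leverage scores (as noted after Lemma~\ref{lem:weighting}), we have $\mu_i(\R^{(t-1)}\M) - \mu_i(\R^{(t)}\M) = \sum_{j \neq i} \Delta\mu_j$. Substituting this identity lets the $i$-th term cancel against the $\sum_j \Delta\mu_j$ pieces, and what survives is exactly the sum over $\JM$ of $\min\{\Delta\mu_j, \mu_j^\star - \mu_j(\R^{(t-1)}\M)\}$. Each summand is manifestly nonnegative, giving the $\geq 0$ conclusion.

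The main obstacle I anticipate is the bookkeeping in the case analysis for $j \neq i$: getting the telescoping to work cleanly requires carefully tracking, for each $j$, whether the hinge was active before and after weighting, and then verifying that the resulting expression collapses to the single closed form $\min\{\Delta\mu_j, \mu_j^\star - \mu_j(\R^{(t-1)}\M)\}$ uniformly across cases. The conservation-of-leverage identity is the crucial lever that makes the positive and negative contributions cancel into a clean nonnegative sum, so I would state and invoke it explicitly rather than leaving it implicit.
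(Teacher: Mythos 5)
Your proposal is correct and takes essentially the same route as the paper's own proof: both decompose the loss change coordinate by coordinate, observe that the $i$-th hinge remains active so that index contributes the full decrease, split the indices $j \neq i$ according to whether their hinge was active before the weighting, and invoke conservation of the leverage-score sum, $\mu_i(\R^{(t-1)}\M) - \mu_i(\R^{(t)}\M) = \sum_{j \neq i} \Delta\mu_j$, to cancel the active-hinge contributions and leave $\sum_{j \in \JM} \min\{\Delta\mu_j,\, \mu_j^\star - \mu_j(\R^{(t-1)}\M)\}$. The only blemish is your transitional sentence claiming the per-coordinate loss increase equals $\min\{\Delta\mu_j,\, \mu_j(\R^{(t)}\M) - \mu_j^\star\}$ --- this quantity is negative in your case (a), where the true increase is $0$ --- but the identity your final step actually relies on, namely $\Delta\mu_j - \max\{\mu_j(\R^{(t-1)}\M) + \Delta\mu_j - \mu_j^\star,\, 0\} = \min\{\Delta\mu_j,\, \mu_j^\star - \mu_j(\R^{(t-1)}\M)\}$ whenever $\mu_j(\R^{(t-1)}\M) < \mu_j^\star$, is correct, so the argument goes through as you outlined.
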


\begin{proof}
Let $i$ be the selected index in the $t$-th step.
For convenience, we write the definition of the $\ell_1$ hinge loss function here:
\begin{align*}
L_{\M, 1} \big(\R^{(t)}\big) \; := \;  \sum_{l=1}^{n_1} \max \Big\{ \mu_l \big(\R^{(t)} \M \big) - \mu_l^\star , \; 0 \Big\}.
\end{align*}
Obviously, the decrease in the $i$-th leverage score leads to the decrease in the loss function value,
and the increase in the $j$-th ($j\neq i$) leverage scores may increase the loss function value.

The decrease in the $i$-th leverage score contributes to a decrease in the $\ell_1$ hinge loss function:
\[
\delta_1 := \mu_i \big(\R^{(t-1)} \M \big) -\mu_i \big(\R^{(t)} \M \big) > 0.
\]
From Lemma~\ref{lem:weighting} we know that $\mu_j \big(\R^{(t)} \M \big) \geq \mu_j \big(\R^{(t-1)} \M \big)$ for all $j\neq i$.
The increase in the leverage scores $\{\mu_j\}_{j\neq i}$ contributes to a increase in the $\ell_1$ hinge loss function:
\begin{small}
\begin{eqnarray*}
\delta_2
& := & \sum_{j\in \JM_1 } \Delta \mu_j
     +  \sum_{j\in \JM_2 }  \max \Big\{ \mu_j (\R^{(t-1)} \M) + \Delta \mu_j - \mu_j^\star , 0  \Big\} \geq 0
\end{eqnarray*}
\end{small}%
where $\JM_1 = \{ j\neq i \,|\, \mu_j (\R^{(t-1)} \M ) \geq \mu_j^\star \}$
and $\JM_2 = \{ j\neq i \,|\, \mu_j (\R^{(t-1)} \M ) < \mu_j^\star  \}$.
Thus the decrease the the $\ell_1$ hinge loss function is
\begin{small}
\begin{align*}
&L_{\M,1} \big(\R^{(t-1)}\big) - L_{\M,1} \big(\R^{(t)}\big) \nonumber \\
& = \; \delta_1 - \delta_2 \nonumber \\
& = \sum_{j\in \JM_1 } \Delta \mu_j + \sum_{j\in \JM_2 } \Delta \mu_j - \delta_2\nonumber \\
& = \; \sum_{j\in \JM_2 } \bigg[ \Delta \mu_j -
        \max \Big\{ \mu_j (\R^{(t-1)} \M) + \Delta \mu_j - \mu_j^\star , 0  \Big\} \bigg] \nonumber \\
& = \; \sum_{ j \in \JM_2 } \min \Big\{  \Delta \mu_j , \; \mu_j^\star - \mu_j \big(\R^{(t-1)}\M\big) \Big\} \nonumber\\
& = \; \sum_{ j \in \JM } \min \Big\{  \Delta \mu_j , \; \mu_j^\star - \mu_j \big(\R^{(t-1)}\M\big) \Big\}.
\end{align*}
\end{small}%
Here the second equality follows from the definition of $\Delta \mu_j$ that $\delta_1 = \sum_{j\neq i} \Delta \mu_j$.
The last equality follows from that $\Delta \mu_j = 0$ if $\mu_{i j} \big(\R^{(t-1)} \M\big) = 0$.
\end{proof}

\begin{lemma} \label{lem:monotone}
Let $i$ be the index selected in the $t$-th step of the coordinate descent algorithm,
and the $i$-th row of $\R^{(t-1)} \M$ is scaled by $\sqrt{1-\gamma} \in (0, 1)$ to obtain  $\R^{(t)} \M$.
Suppose $\mu_i (\R^{(t-1)} \M) > \mu_i^\star$.
Then we have
\begin{enumerate}
\item
    The leverage score $\mu_i (\R^{(t)} \M)$ decrease as $\gamma$ increases.
\item
    When $\gamma \leq \frac{1- \mu_i^\star / \mu_i (\R^{(t-1)} \M)}{1 - \mu_i^\star}$,
    or equivalently, $\mu_i (\R^{(t)} \M) \geq \mu_i^\star$,
    the decrement in the $\ell_1$ hinge loss function (defined in Lemma~\ref{lem:decrement}) increases as $\gamma$ increases.
\item
    When $\gamma > \frac{1- \mu_i^\star / \mu_i (\R^{(t-1)} \M)}{1 - \mu_i^\star}$,
    or equivalently, $\mu_i (\R^{(t)} \M) < \mu_i^\star$,
    the decrement in the $\ell_1$ hinge loss function does not increase as $\gamma$ increases.
\end{enumerate}
\end{lemma}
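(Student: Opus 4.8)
The plan is to treat $\gamma$ as the single free variable and reduce both the monotonicity of the leverage score and the monotonicity of the decrement to the closed forms already available in Lemma~\ref{lem:weighting} and Lemma~\ref{lem:decrement}. Throughout I would abbreviate $\mu_i' = \mu_i(\R^{(t-1)}\M)$, $\mu_{ij}' = \mu_{ij}(\R^{(t-1)}\M)$, and $\mu_i'' = \mu_i(\R^{(t)}\M)$, keeping in mind that $\mu_i' \in (0,1)$ is fixed whereas $\mu_i''$ varies with $\gamma$. For the first claim I would substitute the formula of Lemma~\ref{lem:weighting},
\[
\mu_i'' \; = \; \frac{(1-\gamma)\mu_i'}{1-\gamma\mu_i'},
\]
and differentiate in $\gamma$; the numerator collapses to $\mu_i'(\mu_i'-1) < 0$ over the positive denominator $(1-\gamma\mu_i')^2$, so $\mu_i''$ is strictly decreasing in $\gamma$. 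Solving $\mu_i'' = \mu_i^\star$ returns exactly the stated threshold $\gamma = \frac{1-\mu_i^\star/\mu_i'}{1-\mu_i^\star}$, and because $\mu_i''$ is decreasing, being below (resp. above) this threshold is equivalent to $\mu_i'' \geq \mu_i^\star$ (resp. $\mu_i'' < \mu_i^\star$), which supplies the two equivalences.

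For the second claim I would exploit the fact that, by (\ref{eq:cross_leverage}), $\sum_l \mu_{il}'^2 = \mu_i'$, so the increment of (\ref{eq:l1_loss_decrement1}) reads
\[
\Delta\mu_j \; = \; \frac{\mu_{ij}'^2}{\mu_i'}\bigl(\mu_i' - \mu_i''\bigr).
\]
The ratio $\mu_{ij}'^2/\mu_i'$ is a $\gamma$-independent constant, while $\mu_i'-\mu_i''$ is increasing in $\gamma$ by the first claim; hence each $\Delta\mu_j$ is non-decreasing in $\gamma$. Since the regime $\mu_i'' \geq \mu_i^\star$ is precisely the hypothesis of Lemma~\ref{lem:decrement}, its formula applies and the decrement equals $\sum_{j\in\JM}\min\{\Delta\mu_j,\,\mu_j^\star-\mu_j(\R^{(t-1)}\M)\}$. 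Each summand is the minimum of a quantity non-decreasing in $\gamma$ and a constant, so it is non-decreasing; summing shows the decrement is non-decreasing, which is the asserted monotonicity.

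The hard part will be the third claim, since once $\mu_i'' < \mu_i^\star$ the hypothesis of Lemma~\ref{lem:decrement} is violated and its formula can no longer be quoted. I would instead recompute the decrement directly from the definition as
\[
D(\gamma) \; = \; \sum_l \Bigl[\max\{\mu_l'-\mu_l^\star,0\} - \max\{\mu_l''-\mu_l^\star,0\}\Bigr].
\]
The key structural observation is that the $i$-th term saturates: since $\mu_i'' < \mu_i^\star < \mu_i'$, it contributes the constant $\mu_i'-\mu_i^\star$, independent of $\gamma$. Meanwhile every row $j\neq i$ only gains leverage, $\mu_j'' = \mu_j'+\Delta\mu_j \geq \mu_j'$ with $\Delta\mu_j$ non-decreasing in $\gamma$, so each $\max\{\mu_j''-\mu_j^\star,0\}$ is non-decreasing and the total loss contributed by the rows $j\neq i$ is non-decreasing. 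Thus $D(\gamma)$ is a fixed constant minus a non-decreasing quantity, and therefore does not increase as $\gamma$ increases, completing the argument. The only subtlety to watch is the bookkeeping of which rows cross $\mu_j^\star$, but monotonicity of each individual $\max$ term sidesteps any need to track the crossings explicitly.
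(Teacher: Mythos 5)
Your proposal is correct and follows essentially the same route as the paper's proof: claim 1 from the closed form in Lemma~\ref{lem:weighting}, claim 2 by combining the monotonicity of $\Delta\mu_j$ in $\gamma$ (via (\ref{eq:l1_loss_decrement1})) with the decrement formula of Lemma~\ref{lem:decrement}, and claim 3 by observing that the $i$-th hinge term is frozen once $\mu_i(\R^{(t)}\M)<\mu_i^\star$ while the terms for $j\neq i$ can only push the loss up. Your write-up is in fact slightly more careful than the paper's, spelling out the derivative computation for claim 1 and correctly noting that the monotonicity in claims 2 and 3 is non-strict (the min terms can saturate), which is all the paper's own argument actually establishes.
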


\begin{proof}
The first property follows directly from Lemma~\ref{lem:weighting}.

Second, from the definition of $\Delta \mu_j$ in (\ref{eq:l1_loss_decrement1}),
we know that $\Delta \mu_j$ increases as $\mu_i (\R^{(t)} \M)$ decreases.
When $\mu_i \big( \R^{(t)} \M \big) \geq \mu_i^\star$ holds,
Lemma~\ref{lem:decrement} shows that the decrement in the $\ell_1$ hinge loss function value increases as $\Delta \mu_j$ increases.
Hence the decrement in the $\ell_1$ hinge loss function value increases as $\gamma$ increases.

Third, when $\mu_i (\R^{(t)} \M) < \mu_i^\star$, as $\gamma$ increase,
the term $\max$$ \{ \mu_i (\R^{(t)} \M) - \mu_i^\star , 0 \}$ remains constant,
and for all $j$ the terms $\max$$ \{ \mu_j (\R^{(t)} \M) - \mu_i^\star , 0 \}$ does not decrease.
As $\gamma$ increase, the $\ell_1$ hinge loss function $L_{\M, 1} (\R^{(t)}$ does not decrease
and thus the decrement in the $\ell_1$ hinge loss function does not increase.
\end{proof}

\subsection{Proof of Theorem~\ref{thm:steepest_descent}}
The theorem follows directly from Lemma~\ref{lem:monotone}.

\subsection{Proof of Theorems~\ref{thm:perturbation}}

%
%\cite[Lemma 3.2]{keshavan2010matrix} shows that
%\[
%\big\| \M - {\widetilde{\M}}_k \big\|_2
%\; \leq \;
%C' \|\M \|_{\max} \sqrt{n_1 n_2} \sqrt{\frac{n}{|\Omega|}}
%\]
%holds with probability at least $1-n^{-3}$ .
%Let $M_{avg} = \sqrt{\frac{1}{n_1 n_2} \sum_{i=1}^{n_1} \sum_{j=1}^{n_2} M_{i j}^2}$,
%we have that
%\begin{eqnarray*}
%\big\| \M - {\widetilde{\M}}_k \big\|_2
%& \leq &
%C' \|\M \|_{\max} \sqrt{n_1 n_2} \sqrt{\frac{n}{|\Omega|}} \\
%& = &
%C' \|\M \|_{\max} M_{avg}^{-1} \|\M\|_F \sqrt{\frac{n}{|\Omega|}} \\
%& \leq &
%C' \|\M \|_{\max} M_{avg}^{-1} \sqrt{k} \|\M\|_2 \sqrt{\frac{n}{|\Omega|}} \\
%& \leq &
%C'' \|\M \|_{\max} M_{avg}^{-1} \sigma_{k} (\M) k^{(1-\alpha)/2}.
%\end{eqnarray*}
%
%In real world applications, e.g. collaborative filtering and image inpainting,
%the matrix $\M$ has some special properties.
%The max absolute value $\|\M\|_{\max}$ is a small constant,
%e.g. $\|\M\|_{\max}=5$ in the movie review problem,
%and $\|\M\|_{\max}=1$ in the image inpainting problem.
%The mean square root $M_{avg}$ should also a constant which is not too small
%because the most entries of $\M$ are not small.
%In the image inpainting problem, the most pixels of an image should not be too dark.
%In the Netflix problem, the lowest rating is one.
%Therefore $\|\M\|_F^2 = \Theta (n_1 n_2)$, and thus $M_{avg}$ is a constant.
%Thus we can reasonably take $\|\M\|_{\max}$ and $M_{avg}^{-1}$ as constant, and thus
%\begin{eqnarray} \label{eq:proof:perturbation:delta}
%\big\| \M - {\widetilde{\M}}_k \big\|_2
%& \leq &
%C''' \sigma_{k} (\M) k^{(1-\alpha)/2}.
%\end{eqnarray}
%

It was shown in Lemma C.1 of
\cite{jain2013low} shows that with probability at least $1-n^{-3}$ the following inequality holds:
\[
\big\| \PM_{\M}^{\perp} \U_{\widetilde{\M},k} \big\|_2
\; \leq \;
C' \frac{\sigma_1 (\M)}{\sigma_k (\M)} \sqrt{\frac{n k^2}{|\Omega|}},
\]
where the projection operator $\PM_{\M}^{\perp} = \I - \M \M^\dag$.
Thus we have the angular between the ranges of $\M$ and $\widetilde{\M}$ is
\[
\sin\angle \big( \M, \: \widetilde{\M}_k \big)
\; \leq \;
\big\| \PM_{\M}^{\perp} \U_{\widetilde{\M},k} \big\|_2
\; \leq \;
\frac{1}{2 \rho}
\]
for a large enough $C$ defined in the theorem.
%
%Let $\De = \widetilde{\M}_k - \M$.
%We first follow the techniques in \cite{wedin1972perturbation} to bound the angular between $\M$ and $\widetilde{\M}_k$:
%\begin{eqnarray*}
%\sin\angle \big( \M, \: \widetilde{\M}_k \big)
%& = &
%\big\| \PM_{\widetilde{\M}_k}^{\perp} \PM_{\M} \big\|_2 \\
%& = &
%\big\| \PM_{\widetilde{\M}_k}^{\perp} \big( \widetilde{\M}_k - \De \big) \M^\dag \big\|_2 \\
%& = &
%\big\| \PM_{\widetilde{\M}_k}^{\perp}  \De \M^\dag \big\|_2 \\
%& \leq &
%\big\| \PM_{\widetilde{\M}_k}^{\perp}  \De \big\|_2 \: \big\| \M^\dag \big\|_2.
%\end{eqnarray*}
%Here the second equality follows from that $\PM_{\M} = \M \M^\dag$ and that $\De = \widetilde{\M}_k - \M$.
%The third equality follows from $\PM_{\widetilde{\M}_k}^{\perp} \widetilde{\M}_k = \0$.
%It then follows from (\ref{eq:proof:perturbation:delta}) that
%\begin{eqnarray*}
%\sin\angle \big( \M, \: \widetilde{\M}_k \big)
%& \leq &
%\big\| \widetilde{\M}_k - \M \big\|_2 \: \big\| \M^\dag \big\|_2 \\
%& \leq &
%0.5 k^{(1-\alpha)/2}
%\end{eqnarray*}
%for a large enough constant $C$ defined in the theorem.

We have that for all $i, j \in [n] $, the leverage score and the cross leverage scores satisfy
\begin{small}
\begin{eqnarray*}
\left.
  \begin{array}{c}
    \big| \ell_i (\M) - \ell_i (\widetilde{\M}_k) \big| \\
    \big| \ell_{i j} (\M) - \ell_{i j} (\widetilde{\M}_k) \big| \\
  \end{array}
\right\}
& \leq &
\big\|   \U_{\M,k}  \U_{\M,k}^T - \U_{\widetilde{\M},k}  \U_{\widetilde{\M},k}^T \big\|_{\max} \\
& \leq &
\big\|   \U_{\M,k}  \U_{\M,k}^T - \U_{\widetilde{\M},k}  \U_{\widetilde{\M},k}^T \big\|_{2} \\
& = &
\sin \angle \big( \M, \: \widetilde{\M}_k \big) \\
& \leq &
\frac{1}{2 \rho},
\end{eqnarray*}
\end{small}
from which the theorem follows.

\subsection{Proof of Theorems~\ref{thm:mu_medium}}

In this section we use the simplified notation defined in Table~\ref{tab:notation}.
%Sections~\ref{sec:proof:medium_leverage} and \ref{sec:proof:large_leverage} respectively analyze the situations where
%$\hmu_i \in (\frac{1}{\rho} , 1 - \frac{1}{\rho})$ and $\hmu_i \in (1 - \frac{1}{\rho}, 1)$,
%and the results are given in Lemma \ref{lem:medium_leverage} and \ref{lem:large_leverage}.
%Combining Lemma~\ref{lem:medium_leverage} and \ref{lem:large_leverage} directly proves Theorem~\ref{thm:mu_medium}.

\begin{table}[!ht]\setlength{\tabcolsep}{0.3pt}
\caption{Notation} \label{tab:notation}
\vspace{2mm}
\centering
\begin{footnotesize}
\begin{tabular}{c | c}
\hline
notation & description \\
  \hline
  $\mu_i$ & the $i$-th leverage score of $\M$ \\
  $\hmu_i$ & the approximation of $\mu_i$ with additive error at most $\pm \frac{1}{2 \rho}$ \\
  $\Delta \mu_i $ & equal to $\hmu_i - \mu_i$, which is the error in the estimation \\
  $\mu_i'$ & the $i$-th leverage score of $\W \M$ \\
  $\hmu_i'$ & equal to $\frac{(1-\gamma) \hmu_i}{1 - \gamma \hmu_i}$ \\
  \hline
\end{tabular}
\end{footnotesize}
\end{table}

Here and after, we omit the subscript $i$.
Using Lemma~\ref{lem:weighting}, we have that
the leverage scores after row weighing become
\begin{eqnarray} \label{eq:mu_prime1}
\mu'
& = &
\frac{(1-\gamma) \mu}{1 - \gamma \mu}
\; = \;
\frac{(1-\gamma) (\hmu - \Delta \mu )}{1 - \gamma \hmu + \gamma \Delta \mu} \nonumber\\
& = &
\frac{(1-\gamma) \hmu}{1 - \gamma \hmu}
 \frac{\mu}{\hmu}
\frac{1 - \gamma \hmu}{1 - \gamma \hmu + \gamma \Delta \mu}.
\end{eqnarray}
We define the variable
\begin{eqnarray} \label{eq:def_hmup}
\hmu'
& = &
\frac{(1-\gamma) \hmu}{1 - \gamma \hmu},
\end{eqnarray}
which can be assigned any value in $(0, 1)$.
If $\Delta \mu \ll \hmu, 1 - \hmu$,
then $\mu'$ is close to $\hmu'$.
Equation (\ref{eq:def_hmup}) can be equivalently written as
\begin{eqnarray}
1 - \gamma \hmu
& = & \frac{1 - \hmu}{1 - \hmu'}, \label{eq:gamma1} \\
\gamma
& = & \frac{1 - {\hmu'}/{\hmu}}{1 - \hmu'} \label{eq:gamma2}
\end{eqnarray}
It follows from (\ref{eq:mu_prime1}) (\ref{eq:def_hmup}) (\ref{eq:gamma1}) (\ref{eq:gamma2}) that
\begin{eqnarray*}
\mu'
& = &
\hmu' \frac{\mu}{\hmu}
\frac{1 - \hmu}{1 - \hmu + \Delta \mu (1 - \hmu' / \hmu)} \\
& = &
\hmu' \frac{\mu}{\hmu}
\frac{1 - \hmu}{1 - \mu - \hmu' \Delta \mu / \hmu}.
\end{eqnarray*}
Taking the inverse, we have that
\begin{eqnarray} \label{eq:hmu_bound1}
\frac{1}{\mu'}
& = &
\frac{1}{\hmu'} \frac{\hmu}{\mu} \bigg[  \frac{1-\mu}{1 - \hmu} - \frac{\Delta \mu}{1-\hmu} \frac{\hmu'}{\hmu} \bigg] \nonumber\\
& = &
\bigg[ \frac{1}{\hmu'} - \frac{\Delta \mu}{\hmu (1-\mu)} \bigg] \frac{\hmu}{\mu} \frac{1-\mu}{1-\hmu} .
\end{eqnarray}
By the assumption $\hmu \in \big[\frac{1}{\rho}, 1- \frac{1}{\rho} \big]$,
we have that
\begin{small}
\begin{align*}
&\bigg| \frac{\Delta \mu}{\hmu (1-\mu)} \bigg|^{-1}
\; = \; \frac{1}{|\Delta \mu |} \; \Big|-\hmu^2 + (1+\Delta \mu) \hmu \Big| \\
& \geq \; \frac{1}{|\Delta \mu |} \; \frac{1}{\rho} \bigg( 1- \frac{1}{\rho} \bigg)
    + \min \bigg\{ \sgn (\Delta \mu) \frac{1}{\rho} , \sgn (\Delta \mu) \bigg(1 -  \frac{1}{\rho} \bigg)  \bigg\}\\
& \geq  \;  \bigg( 1- \frac{1}{\rho} \bigg) \bigg(\frac{1}{\rho|\Delta \mu | } - 1 \bigg).
\end{align*}
\end{small}
It follows from $|\Delta \mu| \leq \frac{1}{2\rho}$ that
\begin{eqnarray*}
\bigg| \frac{\Delta \mu}{\hmu (1-\mu)} \bigg|
& \leq &
1 + \frac{1}{\rho - 1} \approx 1.
\end{eqnarray*}
Combined with (\ref{eq:hmu_bound1}) we have
\begin{small}
\begin{align} \label{eq:after_weighing2}
&\bigg[ \frac{1}{\hmu'} - 1 - \frac{1}{\rho - 1} \bigg] \frac{\hmu}{\mu} \frac{1-\mu}{1-\hmu}
 \nonumber\\
&\leq \;\frac{1}{\mu'}
\; \leq
\bigg[ \frac{1}{\hmu'} + 1 + \frac{1}{\rho - 1} \bigg] \frac{\hmu}{\mu} \frac{1-\mu}{1-\hmu}.
\end{align}
\end{small}
We also have that
\begin{eqnarray}
\frac{\mu}{\hmu} \frac{1-\hmu}{1-\mu}
& = & 1 - \frac{\Delta \mu}{ \hmu  (1+\Delta \mu-\hmu) } , \nonumber
\end{eqnarray}
which is less than 1 if $\Delta \mu > 0$ and greater than 1 if $\Delta \mu < 0$.
By the assumption $\hmu \in [\frac{1}{\rho}, 1 - \frac{1}{\rho} \big]$,
when $\Delta \mu \geq 0$, $\frac{\mu}{\hmu} \frac{1-\hmu}{1-\mu}$
has a minimum of $\frac{1- 1/\rho}{2 - 1/\rho} \approx \frac{1}{2}$ which is attained by $\hmu = 1/\rho$ and $\Delta \mu = 1 / 2\rho$;
when $\Delta \mu \leq 0$, $\frac{\mu}{\hmu} \frac{1-\hmu}{1-\mu}$
has a maximum of $\frac{2- 1/\rho}{1 - 1/\rho} \approx 2$ which is attained by $\hmu =1- 1/\rho$ and $\Delta \mu = -1 / 2\rho$.
Thus we have that
\begin{eqnarray}\label{eq:after_weighing3}
\frac{\hmu}{\mu} \frac{1-\mu}{1-\hmu}
& \in & \bigg[\frac{1- 1/\rho}{2 - 1/\rho}, \frac{2- 1/\rho}{1 - 1/\rho}  \bigg].
\end{eqnarray}
Combining (\ref{eq:after_weighing2}) and (\ref{eq:after_weighing3})
we have that
\begin{align*}
& \bigg[ \frac{1}{\hmu'} - 1 - \frac{1}{\rho - 1} \bigg] \frac{1- 1/\rho}{2 - 1/\rho} \\
& \leq \;
\frac{1}{\mu'}
\; \leq
\bigg[ \frac{1}{\hmu'} + 1 + \frac{1}{\rho - 1} \bigg] \frac{2- 1/\rho}{1 - 1/\rho} .
\end{align*}
Setting $\frac{1}{\hmu'} = \frac{\rho-1}{2\rho - 1} \frac{n_1}{k} - \frac{\rho}{\rho - 1} \approx \frac{n_1}{2 k}$, we have that
\[
\mu' \; \in \;
\bigg[  \frac{k}{n_1} , \: \frac{4k}{n_1} \frac{(\rho-0.5)^2}{(\rho-1)^2 - \frac{4k}{n_1} {\rho}({\rho - 0.5})}\bigg].
\]
The scaling parameter $\gamma$ is according to (\ref{eq:gamma2})
taking $\frac{1}{\hmu'} = \frac{n_1}{2 k}$ as input.

\subsection{Proof of Theorems~\ref{thm:mu_large}}

We propose to set $\mu_i = \hmu_i - \frac{1}{2\rho}$ ($\leq \mu_i (\M)$ by the additive error assumption) and $\mu_i' = \frac{1}{\rho}$,
and use (\ref{eq:gamma}) to compute $\gamma$:
\begin{equation}
\gamma \; = \; \frac{1 - \mu_i' / \mu_i }{1 - \mu_i'}
\; = \; \frac{\rho - \frac{1}{\hmu_i - 1/2\rho}}{\rho - 1}.
\end{equation}
For fixed $\gamma$,
the leverage score $\mu_i \big(\W \M\big)$ increases with $\mu_i (\M)$;
since $\mu_i (\M) \geq \mu_i = \hmu_i - \frac{1}{2\rho}$,
we have $\mu_i \big(\W \M\big) \geq \mu_i' = \frac{1}{\rho}$.

\subsection{Proof of Theorem~\ref{thm:approx_cd}}

By Theorems~\ref{thm:mu_medium} and \ref{thm:mu_large},
the choice of $\gamma$ ensures that
\[
\mu_i^\star \leq \mu_i \big(\R^{(t)} \M \big) < \mu_i \big(\R^{(t-1)} \M \big).
\]
Thus the decrement in the objective function follows directly from Lemma~\ref{lem:decrement}.

When $\JM$ is non-empty,
for all $j \in \JM$, $\Delta \mu_{j} > 0$ because ${\mu_{i j} (\R^{(t-1)} \M)} >0$
and $\mu_i \big(\R^{(t-1)}\M\big) - \mu_i \big(\R^{(t)}\M\big) > 0$.
That $\JM$ is non-empty also ensures that $\mu_j \big(\R^{(t-1)}\M\big) < \mu_i^\star$.
Hence the decrement in the objective function (\ref{eq:l1_loss_decrement2}) is strictly greater than zero.

\subsection{Proof Corollary~\ref{cor:decrement}}
The theorem follows directly from Lemma~\ref{lem:monotone}.

\subsection{Leverage Scores}

\begin{theorem} \label{thm:equivalent_leverage}
Let $\A$ be an $n_1 \times n_2$ matrix of rank $r$ ($<n1, n2$),
$\B \in \RB^{n_1\times r}$ be arbitrary bases of $\A$,
and $\R$ be any $n_1\times n_1$ nonsingular matrix.
Then the leverage scores of $\R \A$ and $\R \B$ are the same.
\end{theorem}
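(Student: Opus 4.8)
The plan is to reduce the claim to the single fact that row leverage scores depend on a matrix only through its \emph{column space}, and then to observe that this column space is preserved both when we pass from $\A$ to a basis $\B$ and when we left-multiply by a nonsingular $\R$.

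First I would recall the projector characterization of leverage scores. For any matrix $\X \in \RB^{n_1 \times m}$ of rank $r$, let $\U_{\X,r}$ collect an orthonormal basis of the column space $\spann(\X)$. Then $\U_{\X,r}\U_{\X,r}^T$ equals the orthogonal projector onto $\spann(\X)$; this projector is the unique orthogonal projector with range $\spann(\X)$ and is therefore independent of which orthonormal basis we choose. Since $\rk(\X) = r$, the best rank-$r$ approximation $\X_r$ is $\X$ itself, so the definition $\mu_i(\X_r) = (\U_{\X,r}\U_{\X,r}^T)_{ii}$ shows that $\mu_i(\X)$ is exactly the $i$-th diagonal entry of the orthogonal projector onto $\spann(\X)$. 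In particular, two matrices with the same column space have identical leverage scores.

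Next I would verify the column spaces agree. Because $\B$ is a basis for $\A$, we have $\spann(\A) = \spann(\B)$. Because $\R$ is nonsingular, the map $\x \mapsto \R\x$ is a linear isomorphism of $\RB^{n_1}$, so it sends subspaces to subspaces of the same dimension and satisfies $\spann(\R\X) = \R\,\spann(\X)$ for any $\X$. Applying this to $\A$ and to $\B$ gives
\[
\spann(\R\A) \; = \; \R\,\spann(\A) \; = \; \R\,\spann(\B) \; = \; \spann(\R\B),
\]
and moreover $\rk(\R\A) = \rk(\R\B) = r$ since left-multiplication by an invertible matrix preserves rank. Combining the two steps, $\R\A$ and $\R\B$ share a common column space, hence a common orthogonal projector $\U_{\R\A,r}\U_{\R\A,r}^T = \U_{\R\B,r}\U_{\R\B,r}^T$, whose diagonal entries are their leverage scores; therefore $\mu_i(\R\A) = \mu_i(\R\B)$ for every $i \in [n_1]$, as claimed.

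There is no substantial obstacle here; the proof is essentially the observation that leverage scores are a function of a subspace. The only points requiring care are bookkeeping rather than difficulty: matching the SVD-based definition used in the paper to the projector characterization (which is why I invoke $\rk = r$ to identify $\X_r$ with $\X$), and remembering that $\R\B$ need not have orthonormal columns, so one must pass to an orthonormal basis of $\spann(\R\B)$ instead of using the columns of $\R\B$ directly.
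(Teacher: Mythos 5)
Your proof is correct, but it is organized differently from the paper's. The paper argues through explicit factorizations: it writes $\A = \B\C$, passes through the condensed SVD of $\B$ and of $\D = \Si_\B\V_\B^T\C$ to exhibit $\U_\A = \U_\B\U_\D$ with $\U_\D$ an $r\times r$ orthogonal matrix, and concludes that the rows of $\U_\A$ and $\U_\B$ have equal norms. You instead invoke the uniqueness of the orthogonal projector onto a subspace: leverage scores are the diagonal entries of $\U\U^T$, which is the projector onto the column space, so any two matrices with the same column space have identical leverage scores; then $\spann(\R\A)=\R\,\spann(\A)=\R\,\spann(\B)=\spann(\R\B)$ finishes the argument. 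These are two packagings of the same underlying fact (two orthonormal bases of one subspace differ by an orthogonal transformation, equivalently they define the same projector), but your version has two concrete advantages: it sidesteps the paper's slightly delicate claim that $\U_\B\U_\D$ \emph{equals} $\U_\A$ (the SVD is not unique when singular values repeat, though the row norms agree regardless), and it carries $\R$ explicitly through the argument, whereas the paper's proof as written only compares $\A$ with $\B$ and leaves the reduction from $(\R\A,\R\B)$ to $(\A,\B)$ implicit. The paper's construction, on the other hand, stays entirely at the level of the SVD-based definitions used throughout the text.
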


\begin{proof}
Let $\U_\A \Si_\A \V_\A^T$ and $\U_\B \Si_\B \V_\B^T$ be 
the condensed SVD of $\A$ and $\B$, respectively. 
Since $\B$ is the bases of $\A$,
there exists an $r\times n_2$ matrix $\C$ such that $\A = \B \C$.
We let $\D = \Si_\B \V_\B^T \C \in \RB^{c\times n_2}$ and obtain
\begin{eqnarray*}
\A & = & \B \C  
\; = \; \U_\B \Si_\B \V_\B^T \C \\
& = & \U_\B \D 
\; = \; (\U_\B \U_\D) \Si_\D \V_\D^T .
\end{eqnarray*}
It is not hard to see that $\U_\B \U_\D = \U_\A \in \RB^{n_1\times r}$.
Let $\u_\A^{(i)}$ and $\u_\B^{(i)}$ be the $i$-th row of $\A$ and $\B$, respectively.
Since $\U_\D$ is an $r\times r$ orthogonal matrix,
we have that 
\[
\big\|\u_\A^{(i)}\big\|_2^2 = \big\|\u_\B^{(i)} \U_\D \big\|_2^2 = \big\|\u_\B^{(i)} \big\|_2^2
\] 
for all $i\in [n_1]$.
The theorem follows by the definition of leverage scores.
\end{proof}

\balancecolumns
% That's all folks!
\end{document}